\documentclass[twoside,11pt]{article}

\usepackage{tikz}
\usetikzlibrary{positioning, arrows.meta}
\usepackage{tikz}
\usepackage{blindtext}
\usepackage{jmlr2e}
\usepackage{xcolor}
\usepackage{amsmath}          
\newcommand{\bb}[1]{\boldsymbol{#1}}
\newcommand{\mb}[1]{\mathbb{#1}}

\usepackage{wrapfig}
\usepackage{comment}
\usepackage{booktabs}
\usepackage{algorithm}
\usepackage[noend]{algpseudocode}
\usepackage{multirow}

\usepackage{jmlr2e}
\usepackage{amsmath,amssymb}

\usepackage{lastpage}
\jmlrheading{23}{2025}{1-\pageref{LastPage}}{1/21; Revised 5/22}{9/22}{21-0000}{Li X. Gai K. and Zhang S.}

\ShortHeadings{Deciphering Shortcut Learning}{Li X., Gai K., Zhang S.}
\firstpageno{1}

\begin{document}

\title{Deciphering Shortcut Learning from an Evolutionary Game Theory Perspective}

\author{\name Xiayang Li \email lixiayang@amss.ac.cn \\
       \addr State Key Laboratory of Mathematical Sciences
       \\ Academy of Mathematics and Systems Science\\
       School of Mathematical Sciences\\
       University of Chinese Academy of Science\\
       Beijing, 100049, China\\
       \AND
       \name Kuo Gai \email gaikuo@simis.cn \\
       \addr Shanghai Institute for Mathematics and Interdisciplinary Sciences \\
       Shanghai, 200433, China\\
       \AND
       \name Shihua Zhang \email zsh@amss.ac.cn \\
       \addr State Key Laboratory of Mathematical Sciences
       \\ Academy of Mathematics and Systems Science\\
       School of Mathematical Sciences\\
       University of Chinese Academy of Science\\
       Beijing, 100049, China}

\maketitle

\begin{abstract}
Shortcut learning causes deep learning models to rely on non-essential features within the data. However, its formation in deep neural network training still lacks theoretical understanding. In this paper, we provide a formal definition of core and shortcut features and employ evolutionary game theory to analyze the origins of shortcut bias by  modeling data samples as players and their corresponding neural tangent features as strategies, assuming the existence of core and shortcut subnetworks. We find that gradient descent (GD) and stochastic gradient descent (SGD) lead to two distinct stochastically stable states, each corresponding to a different strategy. The former primarily optimizes the shortcut subnetwork, while the latter primarily optimizes the core subnetwork. We investigate the influence of these strategies on shortcut bias through a continuous stochastic differential equation, and reveal the impact of data noise and optimization noise on the formation of shortcut bias. In brief, our work employs evolutionary game theory to characterize the dynamics of shortcut bias formation and provides a theoretical view on its mitigation.
\end{abstract}

\begin{keywords}
Deep Learning, Shortcut Learning, Evolutionary Game Theory, Learning Dynamics
\end{keywords}
\noindent

\section{Introduction}
Deep learning systems have achieved extraordinary progress in vision, language, and science. Yet their success is often fragile: models frequently adopt shortcuts—spurious correlations in the data that yield good performance on benchmarks but fail under more challenging conditions~\citep{geirhos2020shortcut, arjovsky2019invariant, mccoy2019right, singla2022salient}. Shortcuts can take many forms. For example, in computer vision, convolutional neural networks trained on ImageNet often rely on local textures rather than global shape~\citep{baker2018deep, geirhos2018imagenet, hermann2020origins}, whereas humans recognize objects primarily by shape~\citep{landau1988importance}. This reliance can lead to striking errors, e.g., a hairless cat may be misclassified as an elephant due to skin texture similarity. Scaling up has not solved the problem. Today’s large language models likewise exploit dataset-specific shortcuts, with negative implications for generalization, robustness, and fairness~\citep{yuan2024llms}. Thus, understanding shortcut learning is crucial not only for interpretability and security~\citep{ilyas2019adversarial, song2024shortcut} but also for data attribution~\citep{dogra2024shortcut}, which traces predictions back to influential training instances.

Although much attention has paid to why shortcuts exist, relatively little is known about how they emerge and evolve during training. Early work illustrated that models often rely on shortcuts at initialization~\citep{hermann2024on}, but did not examine their subsequent dynamics. More recently, \citep{jain2024bias} systematically studied the evolution of bias, but only in linear models, overlooking the nonlinear feature-learning process. Existing studies lack precise definitions that separate core features from shortcuts. In practice, shortcuts may include simple cues, such as local textures or superficial statistics, but more subtle patterns, such as a patch intentionally inserted by a backdoor attack~\cite{Manna2024}. Without a clear taxonomy, the mathematical analysis remains ambiguous.

Empirical observations on binary class Colored-MNIST illustrate how shortcut learning evolves in practice. The digit’s color, serving as a shortcut, is learned rapidly at the beginning of training, while the digit category, a core feature, becomes dominant only later (\textbf{Fig.~\ref{Fig_1}a}). When Gaussian noise is injected into the inputs, shortcut bias is amplified, even though the noise itself lacks discriminative power (\textbf{Fig.~\ref{Fig_1}b,c}). This indicates that stochasticity at the data level can preserve shortcut dominance and alter the trajectory of feature learning. These findings connect with broader evidence that large-scale CNNs can memorize random labels and fit random pixels with zero error~\citep{zhang2017understanding}. In this case, taking shortcuts alone is not enough. The network may turn to irrelevant signals in the data rather than the true core features.

Beyond data noise, the stochasticity inherent in optimization also plays a decisive role. Gradient-based optimization can affect generalization, and the step size and batch size have a significant impact on performance. A large body of work has examined the specific effects of small batch size~\citep{mulayoff2020unique, haochen2021shape, damian2021label, wu2022does} and learning rate~\citep{li2019towards, nacson2022implicit, even2023s}. Yet, these studies assume independent and identically distributed (i.i.d.) data and focus primarily on parameter dynamics, leaving open how optimization noise influences the dynamics of feature learning and performance on out-of-distribution (o.o.d.) data. In practice, we find that increasing optimization noise, for example, by decreasing batch size, reduces shortcut bias (\textbf{Fig.~\ref{Fig_1}c}). Taken together, these results suggest that both sources of randomness—data noise and optimization noise—jointly shape the balance between shortcut and core features.

\begin{figure*}[!ht]          
  \centering                 
  \includegraphics[width=0.99\linewidth]{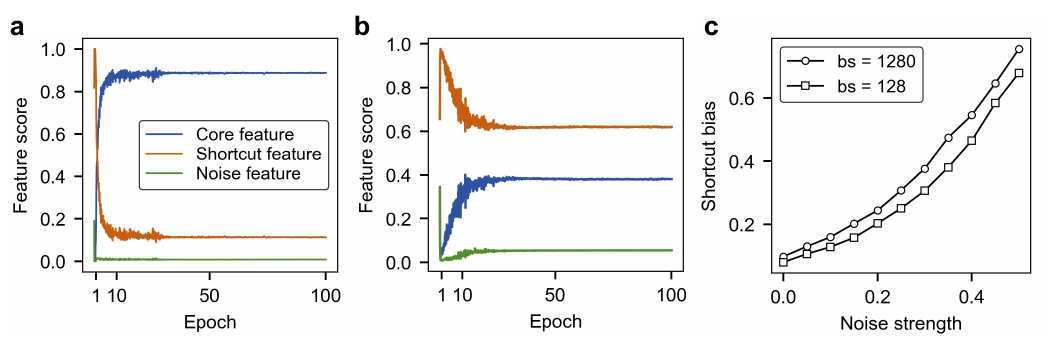} 
  \caption{\textbf{a} and \textbf{b}, Evolution of the three feature types on clean samples \textbf{(a)} and after noise injection (Gaussian noise with a standard deviation of 0.5) \textbf{(b)}. \textbf{c}, Shortcut-bias changes with noise strength under different batch sizes.
}      
  \label{Fig_1}   
\end{figure*}

Neural networks are capable of modeling diverse input features~\citep{frankle2018lottery}, often forming distinct subnetworks during training~\citep{qiu2024complexity, lampinen2024learned}. Which types of features dominate depend on the gradients contributed by individual samples. If the neural tangent feature of a sample, $\nabla_{\bb{\theta}} f(\bb{X};\bb{\theta})$, is interpreted as its strategy for influencing the optimization process, then training can be viewed as a dynamic interaction among competing strategies. This view is consistent with empirical evidence that some difficult samples are repeatedly forgotten and relearned~\citep{toneva2018an}, suggesting that their influence on training self-evolves over iterations. To capture this evolutionary dynamic, we introduce an evolutionary game-theoretic framework that models shortcut and core features as distinct strategies whose prevalence shifts during training.

In this framework, we provide precise definitions of core and shortcut features: shortcuts are features that are easy to learn yet ultimately misleading, while core features are strictly correct. We show that core features are unique (\textbf{Thm.~\ref{the-core}}), whereas shortcuts are ubiquitous (\textbf{Thm.~\ref{the-shortcut}}). Features with higher explained variance are preferentially acquired early in training, granting shortcuts an initial advantage (\textbf{Thm.~\ref{the-subspace}}). Assuming that neural networks contain subnetworks specialized for different features~(\textbf{Assumption~\ref{assu-subnetwork}}), we model each sample as a player, with its neural tangent feature as the strategy. The alignment between a sample’s neural tangent feature and the average batch gradient determines its payoff, reinforcing the corresponding subnetwork. Using this setup, we demonstrate that the stochasticity of optimization determines which strategies become stochastically stable: full-batch gradient descent favors shortcuts, whereas mini-batch stochastic gradient descent favors core features (\textbf{Thm.~\ref{Thm-GD}}, \textbf{Thm.~\ref{the-sgd}}). Extending the analysis with stochastic differential equations, we further connect data and optimization noise to the long-run evolution of feature dominance (\textbf{Thm.~\ref{SDE-Thm}}).

By integrating empirical findings with a formal game-theoretic framework, this work advances a deeper understanding of shortcut learning. It reveals that the balance between shortcuts and core features is not fixed at initialization but shaped dynamically by the interaction of data noise, optimization stochasticity, and gradient competition. These insights unify perspectives from feature learning, stochastic optimization, and robustness, and highlight new theoretical and practical avenues for mitigating shortcut bias in deep learning.

\section{Setting and Methodology}

\subsection{The Formal Definition of Shortcut Feature}
In recent years, shortcut learning has emerged as a central topic in research on robustness and interpretability. However, the research community still lacks a unified, formal definition of shortcut features. Generally, there are three complementary lines of work.

\begin{enumerate}
\item From the perspective of o.o.d. generalization, a shortcut is any feature that yields high accuracy on i.i.d. data but fails under distribution shift~\citep{geirhos2020shortcut}.

\item From the perspective of causal structure, a shortcut is any feature whenever its correlation with the label is non-causal and therefore unreliable once the data-generating mechanism changes~\citep{bellamy2022structural}. 

\item From the perspective of robustness, a shortcut is any feature that is adversarially brittle and non-robust.~\citep{ilyas2019adversarial}.  
\end{enumerate}

We observe that all of the above interpretations of shortcut features inevitably share the following two characteristics. First, it should be easy to learn, as reflected in the ability of neural networks to capture such features from the training dataset and achieve seemingly good performance on standard test sets. Second, it should reflect the incorrectness. This means that if a network learns only such features, it fails to demonstrate a generalizable ability to solve the underlying task. Therefore, our formal definition of a shortcut feature should take both of these requirements into account, which reflects the model’s erroneous reliance on the feature.

Before introducing the formal definition of the shortcut feature, we first clarify the notion of a feature for a classification problem. Note that our definition aligns closely with those proposed in previous work~\citep{lampinen2024learned}.

\begin{definition}[Feature Mapping]
A feature is defined as an abstract, task-relevant property obtained by applying a feature mapping function $V:
\mathbb{R}^d \rightarrow \mathbb{R}^M$ to the input $\bb{X}\in\mathbb{R}^d$, and $M$ is the feature dimension. A feature is considered task-relevant if it assumes exactly as many distinct values (or value vectors) as the number of target classes.
\label{defn-1}
\end{definition}

For example, in the Colored-MNIST dataset, samples can be categorized into two classes based on either color or digit features. To ensure these features exhibit task-relevant properties, both the color and digit features are constrained to take exactly two distinct values. Specifically, the color feature is set to 0 for red and 1 for green. Similarly, the digit feature is assigned a value of 0 if the digit is less than 5, and 1 if it is 5 or greater. Of course, the dataset also contains some noise features. For consistency, we constrain these features to take on exactly two distinct values, which are then randomly assigned to different samples. Restricting each feature to the number of class labels facilitates the formalization of shortcut bias.

Based on the definitions themselves, we can directly characterize core features and noise features. Our fundamental premise is that once the model has fully learned the core features, it can successfully solve the related downstream tasks. In contrast, noise features contribute no benefit to the downstream task. We begin by introducing the Kronecker delta notation $\delta$ for notational convenience:
\begin{equation*}
\delta_{x,y}=\begin{cases}
    1,&x=y\\
    0, & x\neq y.
\end{cases}
\end{equation*}

\begin{definition}[Core and noise feature]
Consider a binary classification problem $\mathcal{P}$ with input space $\mathbb{R}^d$. Let $Y: \mathbb{R}^d \to \{-1,1\}$ denote the ground truth label function, and let $\mathcal{X}$ denote the sample space.  A Feature $V^{\operatorname{core}}$ is defined as core feature if and only if for any input $\bb{X}_i,\bb{X}_j\in \mathcal{X}$, it holds that $\delta_{V^{\operatorname{core}}(\bb{X}_i), V^{\operatorname{core}}(\bb{X}_j)}=\delta_{Y^{\operatorname{core}}(\bb{X}_i), Y^{\operatorname{core}}(\bb{X}_j)}$. A feature $V^{\operatorname{noise}}$ is defined as a noise feature if and only if for any input $(\bb{X}_i,\bb{X}_j)\in\mathcal{X}\times \mathcal{X}$,
$
\operatorname{Cov}\left(\delta_{V^{\operatorname{noise}}(\bb{X}_i),V^{\operatorname{noise}}(\bb{X}_j)},2\delta_{Y(\bb{X}_i),Y(\bb{X}_j)}-1\right)=0.
$
\label{defi_core_feature}
\end{definition}
However, the definition of a shortcut feature requires a comparison between features. Specifically, if feature $V^\alpha$ is a shortcut feature with respect to feature $V^\beta$, then the model exhibits a stronger erroneous reliance on feature $V^\alpha$, which is reflected through the model’s performance on conflicting samples.

\begin{definition}[Shortcut Feature]
Consider a binary classification problem $\mathcal{P}$ with input space $\mathbb{R}^d$ and model $f: \mathbb{R}^d \to \{-1,1\}$. Let $Y: \mathbb{R}^d \to \{-1,1\}$ denote the ground truth label function, and let $\mathcal{X}$ denote the sample space. 
    
Let $V^\alpha$ and $V^\beta$ be two features. Define two conflicting subsets $\Omega_\alpha, \Omega_\beta \subseteq \mathcal{X} \times \mathcal{X}$ as
\begin{equation*}
    \begin{aligned}
    \Omega_\alpha &= \left\{(X_i, X_j) \in \mathcal{X} \times \mathcal{X} \mid V^\alpha(X_i) \neq V^\alpha(X_j),\ V^\beta(X_i) = V^\beta(X_j) \right\}, \\
    \Omega_\beta &= \left\{(X_i, X_j) \in \mathcal{X} \times \mathcal{X} \mid V^\alpha(X_i) = V^\alpha(X_j),\ V^\beta(X_i) \neq V^\beta(X_j)\right\}.
    \end{aligned}
\end{equation*}
And we define $S_k$ as
    \begin{equation*}
    S_k(f) := \mathbb{E}\left[\delta_{f(\bb{X}_i),f(\bb{X}_j)}-\delta_{Y(\bb{X}_i),Y(\bb{X}_j)}\mid (X_i,X_j) \sim \Omega_k\right],
    \end{equation*}
where $k\in\{\alpha,\beta\}$ and $\mathbf{1}\{\cdot\}$ denotes the indicator function. We write $V^\alpha \ll V^\beta$ to indicate that $V^\alpha$ is a shortcut feature relative to $V^\beta$, provided that $S_\alpha(f)<S_\beta(f)$ is satisfied. The shortcut bias is given by $\frac{S_\beta(f) - S_\alpha(f)}{2}$.
\label{defi_shortcut_feature}
\end{definition}

\begin{remark}
    The value of $S_i(f)$ reflects the erroneous reliance on feature $V^i$. Since it is defined over the conflict set $\Omega_i$, where the feature values $V^i$ of the sample pairs differ, a stronger erroneous reliance of the model will lead to a smaller value of $\delta_{f(\bb{X}_i),f(\bb{X}_j)}$, and consequently a smaller $S_i(f)$. It is not difficult to observe that $S_i(f) \in [-1,1]$, and therefore the range of the shortcut bias is $[0,1]$.
\end{remark}

Using the above definitions, we can see the essential property of core features: under the binary relation comparison, they exhibit undominatedness and thus constitute undominated elements.
\begin{theorem}
    A core feature cannot serve as a shortcut feature related to any other feature for any $f$.
    \label{the-core}
\end{theorem}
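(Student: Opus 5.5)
The plan is to argue by contradiction. Suppose $V^\alpha$ is a core feature and, for some model $f$ and some feature $V^\beta$, $V^\alpha \ll V^\beta$, i.e.\ $S_\alpha(f)<S_\beta(f)$. I will show directly that $S_\alpha(f)\ge 0\ge S_\beta(f)$ for every $f$, which rules this out. The only ingredient is Definition~\ref{defi_core_feature}. I read $Y^{\operatorname{core}}$ there as the ground-truth label $Y$, so the core property says that for all $\bb{X}_i,\bb{X}_j\in\mathcal{X}$ we have $V^\alpha(\bb{X}_i)=V^\alpha(\bb{X}_j)$ if and only if $Y(\bb{X}_i)=Y(\bb{X}_j)$.

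First, on the conflicting set $\Omega_\alpha$, every pair satisfies $V^\alpha(\bb{X}_i)\neq V^\alpha(\bb{X}_j)$. By the core property, $\delta_{Y(\bb{X}_i),Y(\bb{X}_j)}=0$ for all such pairs. Hence
\[
S_\alpha(f)=\mathbb{E}\left[\delta_{f(\bb{X}_i),f(\bb{X}_j)}\mid (\bb{X}_i,\bb{X}_j)\sim\Omega_\alpha\right]\ge 0,
\]
since a Kronecker delta is nonnegative.

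Second, on $\Omega_\beta$, every pair satisfies $V^\alpha(\bb{X}_i)=V^\alpha(\bb{X}_j)$, so the core property gives $\delta_{Y(\bb{X}_i),Y(\bb{X}_j)}=1$. Therefore
\[
S_\beta(f)=\mathbb{E}\left[\delta_{f(\bb{X}_i),f(\bb{X}_j)}\mid \Omega_\beta\right]-1\le 0,
\]
because $\delta\le 1$. Combining the two bounds gives $S_\alpha(f)\ge 0\ge S_\beta(f)$, which contradicts $S_\alpha(f)<S_\beta(f)$. This holds for arbitrary $f$ and $V^\beta$, since neither bound used any property of $f$ or of $V^\beta$ beyond the definition of the sets $\Omega_\alpha,\Omega_\beta$.

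I do not expect any real obstacle; the only delicate point is well-definedness. The conditional expectations require $\Omega_\alpha$ and $\Omega_\beta$ to be nonempty (with positive probability under the pair distribution). I will state that the relation $\ll$ is only meaningful when both are nonempty, so the comparison is vacuous otherwise. I will also remark that the argument shows more: any nonnegative shortcut bias involving a core feature must come from the core feature playing the role of $V^\beta$. In other words, core features are undominated elements of the relation, as claimed.
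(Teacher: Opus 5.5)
Your proposal is correct and is essentially the paper's own proof. With the core feature in the role of $V^\alpha$, the core property forces $\delta_{Y(X_i),Y(X_j)}=0$ on $\Omega_\alpha$ and $\delta_{Y(X_i),Y(X_j)}=1$ on $\Omega_\beta$, so $S_\alpha(f)\ge 0\ge S_\beta(f)$ for every $f$ and the strict inequality $S_\alpha(f)<S_\beta(f)$ can never hold; your note that the conflict sets must have positive probability for the conditional expectations to be defined is a reasonable clarification that the paper leaves implicit.
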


In the remainder of this paper, we assume that a core feature exists for the given task, denoted by $V^{\text{c}}$. The following theorem establishes that whenever the generalization accuracy is strictly less than one, there must exist a shortcut feature associated with the core feature, denoted by $V^{\text{sc}}$.

\begin{theorem}
    If the core feature $V^{\operatorname{c}}$ exists and $0<\mathbb{E}\left[\delta_{f(\bb{X}_i), Y(\bb{X}_i)}\right]<1$, then there exists a shortcut feature related to $V^{\operatorname{c}}$.
    \label{the-shortcut}
\end{theorem}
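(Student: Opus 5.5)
The natural candidate is the model itself. I would take $V^{\operatorname{sc}}:=f$, or more precisely $V^{\operatorname{sc}}(\bb{X})=(f(\bb{X})+1)/2\in\{0,1\}$. This is a feature mapping with exactly two values in the sense of Definition~\ref{defn-1}, provided $f$ is non-constant on $\mathcal{X}$; I check that condition below. The aim is then to show $S_{\operatorname{sc}}(f)<S_{\operatorname{c}}(f)$ for the pair $(V^\alpha,V^\beta)=(V^{\operatorname{sc}},V^{\operatorname{c}})$. By Definition~\ref{defi_core_feature}, $\delta_{V^{\operatorname{c}}(\bb{X}_i),V^{\operatorname{c}}(\bb{X}_j)}=\delta_{Y(\bb{X}_i),Y(\bb{X}_j)}$, so the conflict sets become
\[
\Omega_{\operatorname{sc}}=\{f(\bb{X}_i)\neq f(\bb{X}_j),\ Y(\bb{X}_i)=Y(\bb{X}_j)\},\qquad
\Omega_{\operatorname{c}}=\{f(\bb{X}_i)= f(\bb{X}_j),\ Y(\bb{X}_i)\neq Y(\bb{X}_j)\}.
\]
On $\Omega_{\operatorname{sc}}$ the integrand $\delta_{f_i,f_j}-\delta_{Y_i,Y_j}$ equals $0-1=-1$ identically, so $S_{\operatorname{sc}}(f)=-1$. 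On $\Omega_{\operatorname{c}}$ it equals $1-0=1$, so $S_{\operatorname{c}}(f)=1$. Hence $S_{\operatorname{sc}}<S_{\operatorname{c}}$, and in fact the shortcut bias is maximal.

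The real work is making sure both conditional expectations are well defined, that is, that $\Omega_{\operatorname{sc}}$ and $\Omega_{\operatorname{c}}$ have positive probability under $(\bb{X}_i,\bb{X}_j)$ drawn i.i.d. This is where the hypothesis $0<\mathbb{E}[\delta_{f(\bb{X}),Y(\bb{X})}]<1$ enters. Write $p_{ab}=\Pr(f=a,\,Y=b)$ for $a,b\in\{\pm1\}$. Accuracy strictly less than one gives some error cell $p_{a,-a}>0$, and accuracy strictly greater than zero gives some correct cell $p_{b,b}>0$. There are two cases.

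In the first case $b=-a$, so both $p_{a,-a}>0$ and $p_{-a,-a}>0$. A pair drawn from these two cells has equal labels and different predictions, so $\Pr(\Omega_{\operatorname{sc}})>0$. In the second case $b=a$, so $p_{a,-a}>0$ and $p_{a,a}>0$. A pair from these cells has equal predictions and different labels, so $\Pr(\Omega_{\operatorname{c}})>0$.

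The main obstacle is that the two cases give different sets positive mass, and one set may still be null. For instance, if $f$ is constant, then $\Omega_{\operatorname{sc}}$ is empty and $V^{\operatorname{sc}}$ is not even two-valued. I would handle this with a fallback construction. If $f$ is constant, say $f\equiv a$, I would instead take $V^{\operatorname{sc}}$ to be any two-valued feature that agrees with $Y$ on part of each class and disagrees on part of each class; a suitable partition of $\mathcal{X}$ is assumed available, since $\mathcal{X}$ is non-atomic or large enough. The identity $\delta_{f_i,f_j}\equiv1$ then gives $S_{\operatorname{c}}=1$. Meanwhile $\Omega_{\operatorname{sc}}$ contains pairs with equal labels, so $S_{\operatorname{sc}}=1-1=0<1$.

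More generally, whenever one of the two sets is null, I would adopt the convention that a conditional expectation over a null set is read as the extremal value consistent with the definition, or I would perturb $V^{\operatorname{sc}}$ on a small set so that both conflict sets get positive mass while the strict inequality survives by continuity. Once positivity of both sets is secured, the identities $S_{\operatorname{sc}}=-1$ and $S_{\operatorname{c}}=1$ finish the proof.
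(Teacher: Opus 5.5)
Your construction is the paper's own: the paper also takes $\hat V$ with $\delta_{\hat V(\bb{X}_i),\hat V(\bb{X}_j)}=\delta_{f(\bb{X}_i),f(\bb{X}_j)}$ (i.e.\ $V^{\operatorname{sc}}=f$) and reads off $S_{\operatorname{sc}}(f)=-1<S_{\operatorname{c}}(f)$. Your value $S_{\operatorname{c}}(f)=1$ is the correct one; the paper writes $0$, which does not affect the inequality.

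Your attention to null conflict sets goes beyond the paper, which simply asserts both sets are nonempty from $0<\mathrm{Acc}<1$; that assertion fails when $f$ is constant. In fact, once both classes have positive mass, $\Omega_{\operatorname{c}}$ is always non-null and $\Omega_{\operatorname{sc}}$ is non-null exactly when $f$ is non-constant. So only your constant-$f$ fallback is needed, and it does genuinely require the richness assumption on $\mathcal{X}$ that you state. Your ``convention / perturbation by continuity'' remark can be dropped.
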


The above analysis suggests that existing models almost inevitably face the risk of shortcut learning on a given task. Although the core feature associated with a task may be extremely scarce, there are often many factors that can give rise to shortcut features. In the remainder of this paper, we focus on analyzing a specific class of shortcuts—those that typically emerge during the early stages of model training.

\subsection{Feature Learning via Evolutionary Game Theory}
Neural networks succeed by learning useful representations, but reliable generalization hinges on whether training identifies essential features. Each sample \(\boldsymbol{X}_i\) contributes a tangent feature \(\nabla_{\boldsymbol{\theta}} f(\boldsymbol{X}_i;\boldsymbol{\theta})\) that steers the update. Thus, it is natural to treat
\(\nabla_{\boldsymbol{\theta}} f(\boldsymbol{X}_i;\boldsymbol{\theta}^{(t)})\) as a time–dependent feature. Let $\Delta\bb{\theta}^{(t)}:=\bb{\theta}-\bb{\theta}^{(t)}$, for a differentiable \(f\), if $\Delta$ is sufficiently small, a first-order expansion around \(\boldsymbol{\theta}^{(t)}\) gives
\begin{equation}
\begin{aligned}
f(\boldsymbol{X};\boldsymbol{\theta})
&= f(\boldsymbol{X};\boldsymbol{\theta}^{(t)})
+ \Big\langle \nabla_{\boldsymbol{\theta}} f(\boldsymbol{X};\boldsymbol{\theta}^{(t)}),\, \Delta\boldsymbol{\theta}^{(t)} \Big\rangle
+ o\!\left(\big\|\Delta\boldsymbol{\theta}^{(t)}\big\|_2\right),
\\
&\approx f(\boldsymbol{X};\boldsymbol{\theta}^{(t)})
+ \Big\langle \nabla_{\boldsymbol{\theta}} f(\boldsymbol{X};\boldsymbol{\theta}^{(t)}),\Delta \bb{\theta}^{(t)}\Big\rangle.
\label{eqn:taylor}
\end{aligned}
\end{equation}

Unfortunately, analyzing such dynamics lies beyond the scope of classical theoretical frameworks such as the neural tangent kernel (NTK), because the NTK framework treats $\nabla_\theta f$ as a frozen random feature. In this paper, we model the evolution of these tangent features using evolutionary game theory (EGT). View each sample as a player in a population; strategies correspond to feature directions the network can exploit. Let \(z_t\) be the number of players using strategy \(A\) (versus \ \(B\)) in a symmetric two-strategy game with payoff matrix
\(\boldsymbol{U}=\begin{pmatrix} a & b\\ c & d \end{pmatrix}\).
The aggregate state \(z_t\in\{0,\ldots,N\}\) evolves by
\begin{equation*}
z_{t+1}=b(z_t)+p_t-q_t,
\end{equation*}
where \(b(z_t)\) is the selection step (better-payoff strategies are reinforced), and
\begin{equation*}
p_t\sim\mathrm{Binomial}\!\big(N-b(z_t),\,\varepsilon\big),\qquad
q_t\sim\mathrm{Binomial}\!\big(b(z_t),\,\varepsilon\big)
\end{equation*}
model independent mutations with rate \(\varepsilon\).
Without noise (\(\varepsilon=0\)), the deterministic map \(z_{t+1}=b(z_t)\) converges to \emph{evolutionary stable states} (ESS). With \(\varepsilon>0\), \(\{z_t\}\) is an ergodic Markov chain on \(N{+}1\) states with stationary distribution \(\mu_{\varepsilon}\) concentrating, as \(\varepsilon\to0\), on the \emph{stochastically stable states} (SSS).

To connect with feature learning, decompose tangent features and update directions into core (c) and shortcut (sc) components:
\(\{\nabla_{\boldsymbol{\theta}} f^{(c,t)},\,\nabla_{\boldsymbol{\theta}} f^{(sc,t)}\}\) and
\(\{\Delta\boldsymbol{\theta}^{(\mathrm{c},t)},\,\Delta\boldsymbol{\theta}^{(\mathrm{sc},t)}\}\).
Using \eqref{eqn:taylor}, define the payoff generated by their alignment, i.e., a higher payoff corresponding to a greater reduction in loss of a sample,
\[
P^{a,b,t}:=\operatorname{payoff}\!\left(
\big\langle \nabla_{\boldsymbol{\theta}} f^{a}(\boldsymbol{X}_i;\boldsymbol{\theta}^{(t)}),\,
\Delta\boldsymbol{\theta}^{(b,t)}\big\rangle,\,\cdot\right),\quad a,b\in\{c,sc\},
\]
and the induced payoff matrix
\begin{equation}
\boldsymbol{U}^{(t)}=\begin{pmatrix}
P^{c,c,t} & P^{c,sc,t}\\[2pt]
P^{sc,c,t} & P^{sc,sc,t}
\end{pmatrix}.
\label{pay-off-matrix}
\end{equation}
We assume the aggregate evolution obeys
\begin{equation}
\begin{aligned}
z_{t+1}&=b(z_t)+p_t-q_t,\\
b(z_t)-z_t&=\operatorname{sign}\!\big(\pi(z_{t+1})-\pi(z_t)\big),\\
p_t&\sim \operatorname{Bin}\!\big(N-b(z_t),\varepsilon\big),\\
q_t&\sim \operatorname{Bin}\!\big(b(z_t),\varepsilon\big),
\end{aligned}
\label{darwanian}
\end{equation}
where \(\pi(\cdot)\) is the population payoff induced by \eqref{pay-off-matrix}.
This EGT formulation compresses the narrative: gradient updates enact replication; payoffs are loss reductions from feature–update alignment; long-run behavior is characterized by SSS as \(\varepsilon\to0\).
Next, we analyze the properties of the resulting sample strategies.

\subsection{Model Assumption and Analysis}
We consider fully connected ReLU neural networks of depth $L\in\mathbb{N}$ and widths $(n_l)_{0\leqslant l \leqslant L}$, where $n_0=d$ is the input dimension and $n_L=1$ is the output dimension. Forward expressions are defined as follows:
\begin{equation}
\begin{aligned}
    \boldsymbol{\boldsymbol{f}}^l (\boldsymbol{x})&:=\sqrt{\frac{2}{m_{\ell-1}}}\phi(\boldsymbol{h}^l(\boldsymbol{x})),\quad \boldsymbol{f}^0(\boldsymbol{x}):=\boldsymbol{x}\in\mathbb{R}^{d}\\
    \boldsymbol{h}^l(\boldsymbol{x})&:=\boldsymbol{W}^l\boldsymbol{f}^{l-1}(\boldsymbol{x})+\boldsymbol{b}^{l},\quad 1\leqslant l\leqslant L-1\\
f(\boldsymbol{x})&:=\boldsymbol{W}^{L}\boldsymbol{f}^{L-1}(\boldsymbol{x})+\boldsymbol{b}^L\in\mathbb{R}
\end{aligned}
\label{eqn1}
\end{equation}
where $W_{ij}\sim \mathcal{N}(0,1).$

Consider a binary classification problem, where the input data \(\bb{X}_i \in \mb{R}^d\) and the corresponding labels \(y_i \in \{-1, +1\}\). Let the neural network model be denoted by \(f\), with scalar outputs \(f(\bb{X}_i)\in\mathbb{R}\). Assume the model has \(p\) parameters and the training set contains \(N\) samples. Denote the loss function by \(\ell\), which we take to be the mean squared error (MSE): $\ell(\bb{\theta}) \;=\; \frac{1}{2N}\sum_{i=1}^N \bigl(y_i - f(\bb{X}_i;\bb{\theta})\bigr)^2.$

In our framework, each sample is conceptualized as an entity with characteristics. For simplicity, in our theoretical analysis, we assume that each sample carries exactly two features: a core feature and a shortcut feature. Each sample can employ different strategies, utilizing distinct features to guide the training of the network's parameters. Tangent feature mappings naturally model this guiding effect, as the tangent features are closely related to the gradient vectors: $\bb{X} \rightarrow\nabla_{\bb{\theta}} f(\bb{X};\bb{\theta}^{(t)})$.

At the step $t$, the neural tangent feature matrix $\bb{\operatorname{NT}}$ is defined as
\begin{equation*}
    \bb{\operatorname{NT}}^{(t)}:=[\nabla_{\bb{\theta}} f(\bb{X}_1;\bb{\theta}^{(t)});\cdots;\nabla_{\bb{\theta}} f(\bb{X}_N;\bb{\theta}^{(t)})]\in \mathbb{R}^{N\times p}.
\end{equation*}
If $\bb{\operatorname{NT}}$ has rank $r\leq \min\{N,p\}$ with singular value decomposition $\bb{\operatorname{NT}}^{(t)} = \bb{U}^{(t)} \bb{\Sigma}^{(t)} \bb{V}^{(t)},$ then 
\begin{equation*}
\begin{aligned}
        \begin{pmatrix}
        f(\bb{X}_1;\bb{\theta})\\
        f(\bb{X}_2;\bb{\theta})\\
        \vdots\\
        f(\bb{X}_N;\bb{\theta})
        \end{pmatrix}&= \begin{pmatrix}
        f(\bb{X}_1;\bb{\theta}^{(t)})\\
        f(\bb{X}_2;\bb{\theta}^{(t)})\\
        \vdots\\
        f(\bb{X}_N;\bb{\theta}^{(t)})
        \end{pmatrix}+ \bb{U}^{(t)}\bb{\Sigma}^{(t)}(\bb{V}^{(t)})^{T}(\bb{\theta}-\bb{\theta}^{(t)}) + o(\Vert \bb{\theta}-\bb{\theta}^{(t)}\Vert_2).
\end{aligned}
\end{equation*}
Thus, at time $t$, the direction in which $\theta$ progresses is mainly associated with the larger right singular vectors of $\bb{\operatorname{NT}}$. Therefore, our primary focus is on the behavior of the leading singular vector. A dataset may contain more than one shortcut feature and more than one core feature; nevertheless, in the analysis that follows, we restrict our attention to the simplified case involving a single core feature and a single shortcut feature. Thus, we consider two sub-networks to model the shortcut feature and the core feature, respectively. We assume that two subnetworks have already emerged in the early phase of training; samples that adopt different strategies will eventually train different subnetworks. These subnetworks are decoupled from each other and can identify the core features and the shortcut features, respectively. For a fully connected neural network, if we disregard the bias terms, each element in the weight vector can be mapped one-to-one to an edge. We can make structural assumptions about subnetworks by partitioning the neurons and edges into sets (\textbf{Fig.~\ref{fig:subnetwork}}).

\begin{assumption}
We make the following assumptions about the subnetworks. We first divide all neurons in the network into core neurons and shortcut neurons, which are mutually disjoint. Based on this, the edges can be classified into the following subsets: \\
1. \textbf{(Decoupling)} 
\begin{itemize}
    \item $\mathcal{E}_1$: Edges connecting core neurons to core neurons, core neurons to the input and output. Within $\mathcal{E}_1$, $\mathcal{E}_1^f$ denotes the edges that model the core feature, and $\mathcal{E}_1^n$ denotes the edges that model noise.
    \item $\mathcal{E}_2$: Edges connecting shortcut neurons to shortcut neurons, shortcut neurons to the input and output. Within $\mathcal{E}_2$, $\mathcal{E}_2^f$ denotes the edges that model the shortcut feature, and $\mathcal{E}_2^n$ denotes the edges that model noise.
    \item $\mathcal{E}_3$: Edges connecting core neurons to shortcut neurons.

\label{assu-subnetwork}
\end{itemize} 2. \textbf{(Feature recognizability)} $V^{k}(X_i)=V^{k}(X_j)$ if and only if $f_{\bb{\theta}|_{\mathcal{E}_k^f}}(X_i)=f_{\bb{\theta}|_{\mathcal{E}_k^f}}(X_j)$, $k\in\{1,2\}$, where $V^1$ denotes a core feature and $V^2$ denotes a shortcut feature. We use $\bb{\theta}|_{\mathcal{E}_k}$ to denote the restriction of the vector $\bb{\theta}$ to the index set $\mathcal{E}_k$, resulting in a new vector of the same dimension where all elements outside of $\mathcal{E}_k$ are set zeros.
\end{assumption}

\begin{figure}[h]
    \centering
    \includegraphics[width=0.6\linewidth]{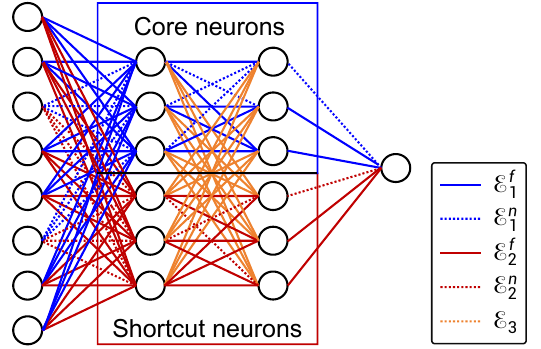}
    \caption{Schematic diagram of the sub-network hypothesis. The hidden layer neurons are divided into two categories: core neurons and shortcut neurons. Based on the classification of neurons, the connecting edges (weights) are divided into three categories. In the two major categories, there are two sub-categories $\mathcal{E}_1^f$ and $\mathcal{E}_1^n$ ($\mathcal{E}_2^f$ and $\mathcal{E}_2^n$) that model core (shortcut) features and noise for $\mathcal{E}_1$ ($\mathcal{E}_2$), respectively.}
    \label{fig:subnetwork}
\end{figure}
\paragraph{Rationale for the assumption} Neural networks exhibit strong feature-extraction abilities. Accordingly, we posit that distinct subnetworks model different features. Note that not all edges in $\mathcal{E}_i,i\in\{1,2\}$ are used for modeling features. For example, the color feature can serve as a shortcut in Color-MNIST; however, a purely color-based subnetwork is unable to discriminate among classes in the training set, because it cannot assign two training samples that share the same color to different categories. Therefore, in the set $\mathcal{E}_i$, besides the edges that model features, there must be some edges that model noise to achieve discriminability. Therefore, we denote the edges that model features as $\mathcal{E}_i^f$, and the edges that model noise as $\mathcal{E}_i^n$.

\section{Theoretical Results}
In this section, we introduce a simplified model to systematically analyze how samples, viewed as game-theoretic players, steer the evolution of the network’s structure.

\subsection{Initialization — Subnetwork Formation}

\begin{theorem}[\cite{bietti2019inductive}]
    The network is defined by \textbf{Eq.}~(\ref{eqn1}). For an input $\bb{x}\in\mathbb{R}^d$ and learnable parameter $\theta\in\mathbb{R}^m$, denote the network by $f(\bb{X},\theta)$, then the corresponding neural tangent kernel is given by $$\bb{k}^{(L)}(\bb{X}_i,\bb{X}_j)=\mb{E}_{\bb{\theta}\sim\mathcal{P}}\left\langle \nabla_{\bb{\theta}} f(\bb{X}_i,\bb{\theta}), \nabla_{\bb{\theta}} f(\bb{X}_j,\bb{\theta})\right\rangle$$
$\bb{k}^{(L)}$ is homogenous of degree 1 and zonal and defined by $\bb{k}^{(L)}(u)=\frac{1}{L+1}\tilde{\bb{k}}^{(L)}(u),$
    with the recursive formula
    \begin{equation*}
    \begin{aligned}
    \tilde{\bb{k}}^{(\ell)}(u)&=\tilde{\bb{k}}^{(\ell-1)}(u)\kappa_0(\Sigma^{(\ell-1)}(u))+\Sigma^{(\ell)}(u)\\
    \Sigma^{(\ell)}(u)&=\kappa_1(\Sigma^{(\ell-1)}(u)),~\ell\in [L].
    \end{aligned}
    \end{equation*}
The function $\kappa_1,\kappa_0$ are arc-cosine kernels, defined as
    \begin{equation*}
    \begin{aligned}
        &\kappa_0(u)=\frac{1}{\pi}(\pi-\arccos (u)),\\
        &\kappa_1(u)=\frac{1}{\pi}(u\cdot(\pi-\arccos(u))+\sqrt{1-u^2})\\
        &\tilde{\bb{k}}^{(0)}(u)=\Sigma^{(0)}(u)=u.
    \end{aligned}
    \end{equation*}
    \label{the-subspace}
\end{theorem}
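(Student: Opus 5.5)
The proof proceeds by induction on the depth $L$, computing the expected inner product of parameter gradients layer by layer. This is the standard NTK recursion of Jacot et al. and Arora et al., specialized to bias-free ReLU networks as in \cite{bietti2019inductive}.

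\textbf{Reduction to the sphere.} Since the network in \textbf{Eq.}~(\ref{eqn1}) without biases is positively homogeneous of degree 1 in $\bb{X}$, each $\nabla_{\bb{\theta}} f(\bb{X},\bb{\theta})$ is homogeneous of degree 1 as well. Hence
\[
\bb{k}^{(L)}(\bb{X}_i,\bb{X}_j)=\Vert\bb{X}_i\Vert\,\Vert\bb{X}_j\Vert\,\bb{k}^{(L)}\!\left(\tfrac{\bb{X}_i}{\Vert\bb{X}_i\Vert},\tfrac{\bb{X}_j}{\Vert\bb{X}_j\Vert}\right).
\]
Rotation invariance of the Gaussian weight distribution then shows that the restriction to the sphere depends only on $u=\langle \bb{X}_i,\bb{X}_j\rangle/(\Vert\bb{X}_i\Vert\Vert\bb{X}_j\Vert)$, i.e.\ the kernel is zonal. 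So it suffices to compute the kernel for unit inputs as a function of $u$.

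\textbf{Layer-by-layer decomposition.} Split $\langle\nabla_{\bb{\theta}} f(\bb{X}_i),\nabla_{\bb{\theta}} f(\bb{X}_j)\rangle=\sum_{\ell=1}^{L}\langle\nabla_{\bb{W}^\ell}f(\bb{X}_i),\nabla_{\bb{W}^\ell}f(\bb{X}_j)\rangle$. Each summand factorizes as
\[
\langle \bb{f}^{\ell-1}(\bb{X}_i),\bb{f}^{\ell-1}(\bb{X}_j)\rangle\cdot\langle \bb{b}^\ell(\bb{X}_i),\bb{b}^\ell(\bb{X}_j)\rangle,
\]
where $\bb{b}^\ell=\partial f/\partial \bb{h}^\ell$ is the backpropagated vector.

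\textbf{Forward and backward factors.} For the forward factor, the $\sqrt{2/m}$ normalization together with the Gaussian identity $\mathbb{E}[\phi(g_1)\phi(g_2)]=\tfrac12\kappa_1(\rho)$ for standard Gaussians with correlation $\rho$ gives $\langle \bb{f}^{\ell}(\bb{X}_i),\bb{f}^{\ell}(\bb{X}_j)\rangle\to\Sigma^{(\ell)}(u)=\kappa_1(\Sigma^{(\ell-1)}(u))$, with $\Sigma^{(0)}(u)=u$. Diagonal entries stay equal to $1$ because $\kappa_1(1)=1$. For the backward factor, the analogous identity $\mathbb{E}[\phi'(g_1)\phi'(g_2)]=\tfrac12\kappa_0(\rho)$ gives
\[
\langle \bb{b}^\ell(\bb{X}_i),\bb{b}^\ell(\bb{X}_j)\rangle\to\prod_{k=\ell}^{L-1}\kappa_0(\Sigma^{(k)}(u)).
\]
This uses the usual gradient-independence assumption, or equivalently Yang's tensor-program justification, to decouple $\bb{W}^{\ell+1}$ from the forward activations.

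\textbf{Summing and normalizing.} Summing over $\ell$ gives $\sum_{\ell}\Sigma^{(\ell-1)}\prod_{k\ge\ell}\kappa_0(\Sigma^{(k)})$. Unrolling $\tilde{\bb{k}}^{(\ell)}=\tilde{\bb{k}}^{(\ell-1)}\kappa_0(\Sigma^{(\ell-1)})+\Sigma^{(\ell)}$ by induction shows this sum equals $\tilde{\bb{k}}^{(L)}(u)$, up to the indexing shift of the last linear layer, which has no activation. Each of the $L+1$ summands (including the one with $u$ at $\ell=0$) has value $1$ on the diagonal. The factor $\frac{1}{L+1}$ therefore comes from the chosen parameter scaling, which normalizes $\bb{k}^{(L)}(1)=1$; I would fix the convention to match this.

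\textbf{Main obstacle.} The expectation over a finite-width $\bb{\theta}\sim\mathcal{P}$ equals the recursion exactly only in the infinite-width limit, or at least only for the first layers. I would therefore state the result for widths tending to infinity, as in \cite{bietti2019inductive}, and cite their derivation for the backward-independence step rather than reproving it.
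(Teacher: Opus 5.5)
Your route is the standard infinite-width NTK computation: split the gradient by layer, factor each term into a forward Gram entry and a backward Gram entry, and evaluate these with $\kappa_1$ and $\kappa_0$. This is what underlies the cited result. The paper itself does not derive it; it imports it from \cite{bietti2019inductive}, and the appendix block headed as its proof in fact proves \textbf{Thm.~\ref{thm:cluster_alignment}}. Your side remarks are sound. The biases of \textbf{Eq.}~(\ref{eqn1}) must be dropped, since trainable or random biases break degree-$1$ homogeneity. The identity should be read in the infinite-width limit. And $\frac{1}{L+1}$ is a normalization, not something the expectation produces.

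The step that fails is the backward factor. The ReLU derivative entering $\bb{b}^k=\partial f/\partial\bb{h}^k$ is evaluated at $\bb{h}^k=\bb{W}^k\bb{f}^{k-1}$. Its coordinates for $\bb{X}_i$ and $\bb{X}_j$ are, in the limit, jointly Gaussian with correlation $\Sigma^{(k-1)}(u)$, not $\Sigma^{(k)}(u)$. The correct limit is therefore
\[
\langle \bb{b}^\ell(\bb{X}_i),\bb{b}^\ell(\bb{X}_j)\rangle\to\prod_{k=\ell}^{L-1}\kappa_0\bigl(\Sigma^{(k-1)}(u)\bigr).
\]
Your version pairs $\Sigma^{(j)}$ with $\kappa_0(\Sigma^{(j+1)})$. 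The sum you obtain then obeys $S^{(n)}=S^{(n-1)}\kappa_0(\Sigma^{(n)})+\Sigma^{(n)}$ instead of the stated recursion. For one hidden layer it gives $u\,\kappa_0(\kappa_1(u))+\kappa_1(u)$ rather than $u\,\kappa_0(u)+\kappa_1(u)=\tilde{\bb{k}}^{(1)}(u)$. No relabeling of the depth repairs this, so the claim that unrolling gives $\tilde{\bb{k}}^{(L)}$ ``by induction'' is false as written.

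With the corrected index, the sum is $\sum_{j=0}^{L-1}\Sigma^{(j)}\prod_{i=j}^{L-2}\kappa_0(\Sigma^{(i)})=\tilde{\bb{k}}^{(L-1)}(u)$ for the network of \textbf{Eq.}~(\ref{eqn1}). That network has $L-1$ hidden ReLU layers and $L$ weight matrices, so the diagonal value is $L$. The stated $\frac{1}{L+1}\tilde{\bb{k}}^{(L)}$ instead corresponds to $L$ hidden ReLU layers plus a linear readout, i.e.\ $L+1$ weight matrices and diagonal value $L+1$. You should fix that depth convention explicitly rather than leave it as an unspecified ``indexing shift''.
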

If we consider a two-layer network, the kernel function is given by $$\bb{k}^{(2)}(\bb{X}_i,\bb{X}_j)=\Vert \bb{X}_i\Vert_2\Vert \bb{X}_j\Vert_2 h(\left\langle \bb{X}_i,\bb{X}_j\right\rangle),~h(u)=\frac{1}{\pi}(u(\pi-\arccos(u))+\frac{1}{2}\sqrt{1-u^2}).$$ Using a quadratic approximation of $h$, we get  $\hat{h}(u)=\frac{3}{4\pi}u^2+\frac{1}{2}u+\frac{1}{2\pi}$. We will show that the principal eigenvector of $\bb{K}_{\bb{\theta}}$ ($K_{i,j}:=\bb{k}(\bb{X}_i,\bb{X}_j)$) is tightly linked to the principal eigenvector of $\bb{X}\bb{X}^{\top}$.

\begin{theorem}
\label{thm:cluster_alignment}
Let $\{\bb{X}_i\}_{i=1}^N \subset \mathbb{R}^d$ satisfy 
$\Vert \bb{X}_i \Vert_2 = 1$, Denote the Gram matrix by $\bb{K}_X = \bb{XX}^\top\in\mathbb{R}^{N\times N}$, whose eigen-values obey $\lambda_1\geqslant \lambda_2\geqslant \cdots \geqslant \lambda_r>\lambda_{r+1}\geqslant \lambda_{r+2}\geqslant \cdots \geqslant \lambda_{N}, ~\delta:=\lambda_r-\lambda_{r+1}>0.$
Define the kernel \begin{equation*}\hat h(u) \;=\; \frac{3}{4\pi}\,u^{2} \;+\;\frac12\,u \;+\;\frac{1}{2\pi},
(\boldsymbol{K}_{\!\theta})_{ij}
=
\hat h\!\bigl(\left\langle\boldsymbol{X}_i,\boldsymbol{X}_j\right\rangle\bigr).\end{equation*}

\smallskip
\noindent
Denote by $U \in\mathbb{R}^{N\times r},
\quad
V \in\mathbb{R}^{N\times r},$
the orthonormal eigen‐bases whose columns span, respectively,
the leading $r$-dimensional spectral subspace
  $\mathcal{U}=\operatorname{span}\{\boldsymbol u_{1},\dots,\boldsymbol u_{r}\}$ of
  $\boldsymbol K_{\!X}$, and  
the corresponding subspace
  $\mathcal{V}=\operatorname{span}\{\boldsymbol v_{1},\dots,\boldsymbol v_{r}\}$
  of $\boldsymbol K_{\!\theta}$.

If $\rho:=\max_{i\neq j}\bigl|\left\langle\boldsymbol{x}_i,\boldsymbol{x}_j\right\rangle\bigr|\;\in[0,1]$, then the operator–angle distance between the two subspaces satisfies
\begin{equation*}
\;
\bigl\|\sin\Theta(\mathcal U,\mathcal V)\bigr\|_{2}
\;=\;
\bigl\|U^{\!\top}V_{\perp}\bigr\|_{2}
\;\le\;
\frac{3}{2\pi}\,
\frac{1+(N-1)\rho^{2}}{\delta}\; .
\end{equation*}
Here $V_{\perp}$ is any orthonormal basis of $\mathcal{V}^{\perp}$,
and $\sin\Theta$ is the canonical principal‐angle operator.
\end{theorem}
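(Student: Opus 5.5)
The plan is to view $\boldsymbol K_\theta$ as a perturbation of a rescaled $\boldsymbol K_X$ and then apply a Davis--Kahan $\sin\Theta$ argument. Since $\hat h$ is a quadratic polynomial applied entrywise, we have the exact identity
\begin{equation*}
\boldsymbol K_\theta=\tfrac12\boldsymbol K_X+\tfrac{3}{4\pi}\,\boldsymbol K_X\circ\boldsymbol K_X+\tfrac{1}{2\pi}\mathbf 1\mathbf 1^\top ,
\end{equation*}
where $\circ$ is the Hadamard product. Eigenvectors are invariant under positive rescaling, so $\mathcal V$ is also the leading $r$-dimensional spectral subspace of
\begin{equation*}
2\boldsymbol K_\theta=\boldsymbol K_X+\boldsymbol E,\qquad \boldsymbol E:=\tfrac{3}{2\pi}\boldsymbol K_X\circ\boldsymbol K_X+\tfrac1\pi\mathbf 1\mathbf 1^\top .
\end{equation*}
The unperturbed matrix $\boldsymbol K_X$ has gap $\delta=\lambda_r-\lambda_{r+1}$ between its $r$-th and $(r+1)$-th eigenvalues. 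I will use the Davis--Kahan form $\|U^\top V_\perp\|_2\le\|\boldsymbol E U\|_2/\delta$, or the Yu--Wang--Samworth variant with $\|\boldsymbol E\|_2$, and the task reduces to bounding the relevant norm of $\boldsymbol E$.

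\textbf{Step 1 (Hadamard term).} The matrix $\boldsymbol K_X\circ\boldsymbol K_X$ has diagonal entries $\|\boldsymbol X_i\|_2^4=1$ and off-diagonal entries $\langle\boldsymbol X_i,\boldsymbol X_j\rangle^2\le\rho^2$. Gershgorin's theorem, or equivalently the $\ell_\infty$ row-sum bound for a symmetric matrix, then gives
\begin{equation*}
\|\boldsymbol K_X\circ\boldsymbol K_X\|_2\le 1+(N-1)\rho^2 .
\end{equation*}
Multiplied by $\tfrac{3}{2\pi}$, this is exactly the numerator appearing in the statement. It is also PSD by the Schur product theorem, which I would use if a one-sided Weyl argument is preferable.

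\textbf{Step 2 (constant term; the main obstacle).} The rank-one term $\tfrac1\pi\mathbf 1\mathbf 1^\top$ has operator norm $N/\pi$. This is not dominated by $\tfrac{3}{2\pi}(1+(N-1)\rho^2)$ in general; for instance, when $\rho\to0$ it dominates that quantity. The stated bound therefore cannot follow from a crude norm estimate of $\boldsymbol E$.

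What I would try first is to avoid charging this term through $\|\boldsymbol E\|_2$. Instead I would use the sharper Davis--Kahan inequality
\begin{equation*}
\delta\,\|U^\top V_\perp\|_2\le\|V_\perp^\top\boldsymbol E\,U\|_2 ,
\end{equation*}
in which the rank-one piece contributes only $\tfrac1\pi\|V_\perp^\top\mathbf 1\|_2\,\|U^\top\mathbf 1\|_2$. Since adding a positive multiple of $\mathbf 1\mathbf 1^\top$ pushes $\mathbf 1$ toward the top of the spectrum of $2\boldsymbol K_\theta$, I would aim to show that $V_\perp^\top\mathbf 1$ is small whenever $U^\top\mathbf 1$ is large, so that the product of the two factors is controlled. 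This would be done by a Weyl/interlacing analysis of the rank-one update (secular equation), using that $\boldsymbol K_X\succeq0$ and $\mathbf 1^\top\boldsymbol K_X\mathbf 1=\|\sum_i\boldsymbol X_i\|_2^2$.

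I expect this step to be where the argument may fail. A fully general inequality in the stated form seems to require either showing the cross term vanishes or is absorbed, or reading the statement with the constant term in $\hat h$ treated as uninformative. If the interlacing argument does not close, I would state explicitly the extra condition under which it does, rather than claim the bound unconditionally.

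\textbf{Step 3 (assembly).} Combining the Hadamard bound from Step 1 with whatever control Step 2 yields on the $\mathbf 1\mathbf 1^\top$ contribution, and dividing by the gap $\delta$ of $\boldsymbol K_X$, gives
\begin{equation*}
\|\sin\Theta(\mathcal U,\mathcal V)\|_2\le\frac{3}{2\pi}\,\frac{1+(N-1)\rho^2}{\delta}.
\end{equation*}
Finally, $\|\sin\Theta(\mathcal U,\mathcal V)\|_2=\|U^\top V_\perp\|_2$ is the standard identity for principal angles between two $r$-dimensional subspaces.
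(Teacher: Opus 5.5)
The gap is Step~2, and specifically the Davis--Kahan inequality you plan to use there. Neither $\|U^\top V_\perp\|_2\le\|\boldsymbol E U\|_2/\delta$ nor your ``sharper'' $\delta\,\|U^\top V_\perp\|_2\le\|V_\perp^\top\boldsymbol E U\|_2$ holds with the \emph{unperturbed} gap $\delta$. Take $V_\perp$ to be an eigenbasis, and let $\tilde\Lambda_\perp$ hold the bottom $N-r$ eigenvalues of $2\boldsymbol K_\theta$. Then $V_\perp^\top\boldsymbol E U=\tilde\Lambda_\perp(V_\perp^\top U)-(V_\perp^\top U)\Lambda_1$, so the denominator that actually appears is $\eta:=\lambda_r(\boldsymbol K_X)-\lambda_{r+1}(2\boldsymbol K_\theta)$. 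Since $\boldsymbol E\succeq0$, Weyl gives $\eta\le\delta$, and $\eta$ may be negative. A perturbation of norm at least $N/\pi$ can reorder the spectrum instead of rotating eigenvectors, and no bound on the cross term $\|V_\perp^\top\mathbf 1\|_2\|U^\top\mathbf 1\|_2$ can detect this.

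The paper removes the constant term by assuming centered data, $\sum_i\boldsymbol X_i=0$, which is not in the statement. Then $\boldsymbol K_X\mathbf 1=0$, so $U^\top\mathbf 1=0$ because $\lambda_r>\lambda_{r+1}\ge0$. The rank-one piece drops out of $P_\perp\boldsymbol E P$ with $P=UU^\top$, and your Step~1 finishes through $\|P_\perp\boldsymbol E P\|_2/\delta$. Your remark $\mathbf 1^\top\boldsymbol K_X\mathbf 1=\|\sum_i\boldsymbol X_i\|_2^2$ was one line from this. But that form of Davis--Kahan has the same defect, and the statement is false even for centered data.

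Here is a counterexample. Take orthonormal $e_1,\dots,e_N,w\in\mathbb R^d$ and set $\bar e=\frac1N\sum_je_j$. Let $\boldsymbol\epsilon\in\{\pm1\}^N$ be a balanced sign vector ($N$ even, so $\boldsymbol\epsilon\perp\mathbf 1$), put $\kappa=\frac{1-s^2}{N-1}$, and define $\boldsymbol X_i=\sqrt{1-s^2}\,(e_i-\bar e)/\sqrt{1-1/N}+s\,\epsilon_i w$. Then $\|\boldsymbol X_i\|_2=1$, $\sum_i\boldsymbol X_i=0$, $\rho=s^2+\kappa$, and
\begin{gather*}
\boldsymbol K_X=\tfrac{(1-s^2)N}{N-1}\bigl(I-\tfrac1N\mathbf 1\mathbf 1^\top\bigr)+s^2\boldsymbol\epsilon\boldsymbol\epsilon^\top,\\
\boldsymbol K_X\circ\boldsymbol K_X=\bigl(1-(s^2-\kappa)^2\bigr)I+(s^4+\kappa^2)\mathbf 1\mathbf 1^\top-2\kappa s^2\,\boldsymbol\epsilon\boldsymbol\epsilon^\top .
\end{gather*}
So for $r=1$ we get $\mathcal U=\operatorname{span}(\boldsymbol\epsilon)$, $\lambda_1=\frac{(1-s^2)N}{N-1}+s^2N$, $\lambda_2=\frac{(1-s^2)N}{N-1}$, and $\delta=s^2N$.

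Everything is diagonal in an orthonormal basis containing $\mathbf 1/\sqrt N$ and $\boldsymbol\epsilon/\sqrt N$. The eigenvalue of $2\boldsymbol K_\theta$ on $\mathbf 1$ is at least $N/\pi$. On $\boldsymbol\epsilon$ it is at most $\lambda_1+\frac{3}{2\pi}$, and on the remaining directions at most $\lambda_2+\frac{3}{2\pi}$. With $s^2=0.2$ and $N=100$ these are $31.8$ versus $21.3$ and $1.3$. Hence $\mathcal V=\operatorname{span}(\mathbf 1)\perp\mathcal U$ and $\|\sin\Theta(\mathcal U,\mathcal V)\|_2=1$, while the claimed bound is $\frac{3}{2\pi}\cdot\frac{1+99\rho^2}{20}\approx0.13$. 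Here $U^\top\mathbf 1=0$ and $P_\perp\boldsymbol E P=0$, so this refutes your inequality and the paper's alike.

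Your instinct to flag Step~2 and add a hypothesis was right, but centering alone is not enough. What does hold, with centering and $\eta>0$, is
\[
\|\sin\Theta\|_2\le\tfrac{3}{2\pi}\,\tfrac{1+(N-1)\rho^2}{\eta},
\]
from your Step~1 plus the correct denominator. Without such a no-crossing hypothesis you are left with the valid but weak bound $2\|\boldsymbol E\|_2/\delta$, which carries the $N/\pi$ term.
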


This suggests that when shortcut features in the data dictate the orientation of the leading principal component, or, more broadly, the low-dimensional subspace spanned by the top eigenvalues, neural networks tend to latch onto them very quickly. The next theorem shows that when noise is injected into the data and the core features are not aligned with the principal directions of the data-kernel matrix $\bb{K}_{\bb{X}}$, those features may be distorted, or even overwhelmed, by the noise.

\begin{theorem}
    Assume that $\bb{K}_X =\bb{K}_S +\bb{K}_W\in\mathbb{R}^{N\times N}$, where $K_s = \sum\limits_{k=1}^r\beta_k \bb{v}_k\bb{v}_k^\top$ with fixed rank $r\in\mathbb{N}$, pairwise orthonormal vectors $v_1,\cdots,v_r\in\mathbb{R}^n$ and real amplitudes $\beta_1\geq\beta_2\geq\cdots\geq\beta_r>0$ independent of $N$. The noise part $\bb{K}_W=\frac{\bb{G}_N+\bb{G}_N^\top}{\sqrt{2N}}, (G_N)_{i,j}\sim_{\operatorname{i.i.d.}}\mathcal{N}(0,\sigma^2).$ Denote the ordered eigenvalues of $\bb{K}_X$ by $\lambda_1\geq\cdots\ge\lambda_N$, and let $\hat{\bb{v}}_k$ be a unit eigenvector associated with $\lambda_k(\bb{K}_X)$. If a spike amplitude satisfies $\beta_k\leq \sigma$, then for any fixed $\varepsilon>0$,
    \begin{equation}
    \mathbb{P}[\lambda_k(\bb{K}_{\bb{X}})\leq 2\sigma]\rightarrow 1,
    \end{equation}
which means that the $k$-th signal eigenvalue stays inside the wigner bulk $[-2\sigma,2\sigma]$. And $\sqrt{N}\left|\left\langle \bb{v}_k,\hat{\bb{v}}_k \right\rangle\right|\xrightarrow[N\to\infty]{d}\mathcal{N}(0,\sigma^2)$, $\left\langle \bb{v}_k,\hat{\bb{v}}_k\right\rangle=O_{\mathbb{P}}(\frac{1}{\sqrt{N}})$. Conversely, if $\beta_k>\sigma$, then with probability $1-o(1)$, $\lambda_k(\bb{K}_X)\xrightarrow[N\rightarrow\infty]{}\beta_k+\frac{\sigma^2}{\beta_k}, \left|\left\langle \bb{v}_k,\hat{\bb{v}}_k\right\rangle\right|^2\xrightarrow[N\rightarrow \infty ]{}1-\frac{\sigma^2}{\beta_k^2}.$
\label{the-noise}
\end{theorem}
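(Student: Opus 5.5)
This statement is the BBP (Baik--Ben Arous--Péché) phase transition for finite-rank additive perturbations of a GOE matrix. We will not re-derive random-matrix universality. Instead we reduce everything to a scalar secular equation driven by the Stieltjes transform of the semicircle law, and then import the standard fluctuation results where needed.

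\textbf{Normalization.} Since $(G_N)_{ij}\sim\mathcal N(0,\sigma^2)$ i.i.d., the matrix $\bb{K}_W=(\bb{G}_N+\bb{G}_N^\top)/\sqrt{2N}$ has off-diagonal entries of variance $\sigma^2/N$. It is therefore a GOE matrix whose empirical spectral distribution converges to the semicircle law $\mu_{sc}$ on $[-2\sigma,2\sigma]$.

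\textbf{Step 1: the bulk edge.} We invoke the Füredi--Komlós / Bai--Yin bound $\|\bb{K}_W\|_2\to 2\sigma$ almost surely. Let $m(z)=\int\frac{d\mu_{sc}(x)}{x-z}$ denote the semicircle Stieltjes transform. It solves $\sigma^2 m^2+zm+1=0$, so for real $z>2\sigma$
\[
m(z)=\frac{-z+\sqrt{z^2-4\sigma^2}}{2\sigma^2}.
\]

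\textbf{Step 2: the secular equation.} Write $\bb{V}=[\bb{v}_1,\dots,\bb{v}_r]$ and $\bb{B}=\operatorname{diag}(\beta_k)$. For $z\notin\operatorname{spec}(\bb{K}_W)$ we use
\[
\det(\bb{K}_W+\bb{V}\bb{B}\bb{V}^\top-z)=\det(\bb{K}_W-z)\,\det\!\big(\bb{I}_r+\bb{B}\bb{V}^\top(\bb{K}_W-z)^{-1}\bb{V}\big).
\]
Hence every eigenvalue outside $[-2\sigma-o(1),2\sigma+o(1)]$ is a zero of the $r\times r$ determinant. By orthogonal invariance of the GOE together with the isotropic local law (or simply concentration of quadratic forms $\bb{u}^\top(\bb{K}_W-z)^{-1}\bb{w}$ for fixed unit vectors), we have
\[
\bb{V}^\top(\bb{K}_W-z)^{-1}\bb{V}\to m(z)\bb{I}_r
\]
uniformly on compacts away from $[-2\sigma,2\sigma]$. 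The limiting equation therefore decouples into $1+\beta_k m(z)=0$.

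Solving with the formula for $m$ gives $z=\beta_k+\sigma^2/\beta_k$. This lies outside $[-2\sigma,2\sigma]$ exactly when $\beta_k>\sigma$. When $\beta_k\le\sigma$ there is no solution above $2\sigma$, because $-1/m(z)$ ranges over $(\sigma,\infty)$ there. A Rouché/Hurwitz argument then transfers the roots to finite $N$. This yields $\lambda_k\to\beta_k+\sigma^2/\beta_k$ in the supercritical case, and $\lambda_k\le 2\sigma+o(1)$ with probability tending to $1$ in the subcritical case. The statement's ``$\le 2\sigma$'' should be read with this $o(1)$ slack, which we will note explicitly.

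\textbf{Step 3: eigenvector overlap, supercritical case.} If $\hat{\bb{v}}_k$ is the eigenvector for the outlier $\lambda_k$, then $\hat{\bb{v}}_k\propto(\bb{K}_W-\lambda_k)^{-1}\bb{V}\bb{B}\bb{V}^\top\hat{\bb{v}}_k$. Normalizing gives
\[
|\langle\bb{v}_k,\hat{\bb{v}}_k\rangle|^2\to\frac{m(\lambda)^2}{m'(\lambda)}
\]
in the limit, evaluated at $\lambda=\beta_k+\sigma^2/\beta_k$. Here $m'(z)=\int(x-z)^{-2}\,d\mu_{sc}$, and this quantity controls the squared resolvent norm. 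Substituting $m(\lambda)=-1/\beta_k$ and $m'=m^2/(1-\sigma^2m^2)$ (obtained by differentiating the quadratic) gives $1-\sigma^2/\beta_k^2$.

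\textbf{Step 4: eigenvector overlap, subcritical case.} Here the top eigenvectors are asymptotically delocalized and Haar-like relative to the fixed vector $\bb{v}_k$. The overlap is therefore $O_{\mathbb P}(N^{-1/2})$ with a Gaussian limit for $\sqrt N\langle\bb{v}_k,\hat{\bb{v}}_k\rangle$. For this we cite the eigenvector results at and below the BBP threshold (Benaych-Georges--Nadakuditi; Bloemendal--Virág; Knowles--Yin isotropic delocalization) rather than reprove them. The stated variance $\sigma^2$ should be checked against the normalization, since Haar delocalization naively gives variance $1$. We would also replace $|\cdot|$ by a signed overlap with a fixed sign convention, because an absolute value cannot converge to a centered normal.

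\textbf{Main obstacle.} The main obstacle is Step 4 together with the critical case $\beta_k=\sigma$. Near the edge the resolvent estimates from Step 2 degenerate, so we cannot rely on the secular equation alone. The fluctuation-level claims require edge-scale local laws, and we would import these from the literature rather than derive them.
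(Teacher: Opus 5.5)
Your Steps 1--3 are the paper's argument with the black box opened. The paper quotes Benaych-Georges--Nadakuditi (Thms.~2.1, 2.2) with the Cauchy transform $G_{\mu_{sc}}=-m$, so its conditions $G_{\mu_{sc}}(\rho)=1/\beta_k$ and $|\langle\bb{v}_k,\hat{\bb{v}}_k\rangle|^2\to-1/\bigl(\beta_k^2G'_{\mu_{sc}}(\rho)\bigr)$ are exactly your $1+\beta_k m(z)=0$ and $m(\lambda)^2/m'(\lambda)$, and your algebra checks out. One hypothesis is missing from both: $\beta_k$ must be a simple spike. If $\beta_k=\beta_{k+1}>\sigma$, only the sum $\langle\bb{v}_k,\hat{\bb{v}}_k\rangle^2+\langle\bb{v}_{k+1},\hat{\bb{v}}_k\rangle^2$ converges to $1-\sigma^2/\beta_k^2$. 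Your normalization in Step 3 silently assumes $\bb{V}^\top\hat{\bb{v}}_k$ concentrates on the $k$-th coordinate, which fails in that case.

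The paper gives no argument at all for the subcritical claims. Your Step 4 cannot be closed by citation, because those claims are false as written and the ``Haar-like'' picture is wrong. Run your own secular equation with $r=k=1$. Write $\bb{v}=\bb{v}_1$ and $\beta=\beta_1$, let $(\mu_j,\bb{w}_j)$ be the eigenpairs of $\bb{K}_W$, and let $\hat{\bb{v}}$ be the top eigenvector with eigenvalue $\lambda>\mu_1$.

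For $\beta<\sigma$ one has $\sum_{j\ge2}\langle\bb{v},\bb{w}_j\rangle^2/(\lambda-\mu_j)=1/\sigma+o_{\mathbb P}(1)$. Hence $\lambda-\mu_1\approx\frac{\beta\sigma}{\sigma-\beta}\langle\bb{v},\bb{w}_1\rangle^2=O_{\mathbb P}(N^{-1})$. The $j=1$ term then dominates the normalization $\sum_j\langle\bb{v},\bb{w}_j\rangle^2/(\lambda-\mu_j)^2$ (order $N$ versus $N^{1/3}$). Since $\bb{w}_1$ is exactly Haar, this gives
\[
\langle\bb{v},\hat{\bb{v}}\rangle^2\approx\frac{\sigma^2}{(\sigma-\beta)^2}\,\langle\bb{v},\bb{w}_1\rangle^2,
\qquad
N\langle\bb{v},\hat{\bb{v}}\rangle^2\Rightarrow\frac{\sigma^2}{(\sigma-\beta)^2}\,\chi^2_1 .
\]
A subcritical spike still tilts the edge eigenvector toward $\bb{v}$, so the scale is neither $\sigma^2$ nor $1$.

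At $\beta=\sigma$ the leading terms $1/\beta$ and $1/\sigma$ cancel. Then $\lambda-\mu_1$ is of order $N^{-2/3}$ and $\langle\bb{v},\hat{\bb{v}}\rangle^2\asymp N^{-1/3}$, so $O_{\mathbb P}(N^{-1/2})$ fails. No edge local law rescues the critical case; the hypothesis must be $\beta_k<\sigma$.

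Likewise, $\mathbb P[\lambda_k\le 2\sigma]$ converges to a limit strictly below $1$, because Tracy--Widom fluctuations of size $N^{-2/3}$ straddle $2\sigma$. So your $o(1)$ slack is necessary, not cosmetic. The unused ``for any fixed $\varepsilon>0$'' points to the intended claim $\mathbb P[\lambda_k\le 2\sigma+\varepsilon]\to1$, which your Step 2 does prove. Your objection that $|\cdot|$ cannot converge to a centered normal is correct.
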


\subsection{Training — Subnetwork Reinforcement}
The preceding analysis shows that early‐stage neural networks rapidly capture the shortcut features; however, this does \emph{not} imply that they are incapable of modeling the core features. But the subnetworks responsible for modeling the core features fail to dominate the network. In this subsection, we adopt an evolutionary-dynamics perspective to model the strategy of the samples, namely, how the samples guide the update of network parameters.

Different samples within each training batch guide the model updates through their tangent features, $\bb{X}\mapsto\nabla_{\bb{\theta}} f(\bb{X})$. Let $\boldsymbol{\Omega_1}$ be the set of samples that guide the model via core features, and let $\boldsymbol{\Omega_2}$ be the set that guides it via shortcut features. Denote by $z_t:= |\bb{\Omega_1}|$, the number of individuals following the core-feature strategy at step $t$. After the $t$-th batch update, every sample receives a payoff equal to the drop in its own loss.

\begin{definition}
 The payoff of a sample $X$ at time $t$ is defined as the decrease in its loss if it is included in the mini-batch at time $t$, and $0$ otherwise. Formally,
\[
P(\bb{X})=\mathbf{1}\{\bb{X}\in \mathcal{B}_t\}\left(\ell^{t-1}(\bb{X})-\ell^{(t)}(\bb{X})\right).
\]
\end{definition}
The payoff can be regarded as the feedback from optimization to the sample’s strategy. A strategy with a higher payoff indicates that it better matches the task label at the current step. $P(\bb{X})$ will influence the training dynamics of the network. Next, we focus on two key questions:
\begin{enumerate}
\item[(1)] How do samples leverage their tangent features to guide the network’s updates, especially in guiding the learning of the two feature-specific subnetworks?
\item[(2)] How does the evolution of $z_t$ feed back into overall network behavior; in particular, does it foster or suppress the emergence of shortcut bias?
\end{enumerate}

\subsubsection{Tangent Projection}
First, we take a Taylor expansion of the network at time step $t$ ($t\ge 1$):
$$f(\bb{X};\bb{\theta}^{(t+1)}) = f(\bb{X};\bb{\theta}^{(t)}) + \left\langle \nabla_{\bb{\theta}} f(\bb{X};\bb{\theta}^{(t)}), \Delta\bb{\theta}^{(t)} \right\rangle + \frac{1}{2} \Delta{\bb{\theta}^{(t)}}^T H(\bb{\theta}^*) \Delta{\bb{\theta}^{(t)}}$$
where $\Delta\bb{\theta}^{(t)} := \bb{\theta}^{(t+1)}-\bb{\theta}^{(t)},\bb{\theta}^*\in [\bb{\theta}^{(t)}, \bb{\theta}^{(t+1)}]$. If the function is locally $L$-smooth with respect to $\bb{\theta}$, i.e., $\Vert\nabla f(X;\bb{\theta}^{(t+1)})-\nabla f(X;\bb{\theta}^{(t)})\Vert_2\leqslant L\Vert \bb{\theta}^{(t+1)}-\bb{\theta}^{(t)}\Vert_2$, and $\Delta \bb{\theta}$ is sufficiently small, then we can use a first-order Taylor expansion:$
    f(\bb{X};\bb{\theta}^{(t+1)}) = f(\bb{X};\bb{\theta}^{(t)}) + \left\langle \nabla_{\bb{\theta}} f(\bb{X};\bb{\theta}^{(t)}), \Delta\bb{\theta}^{(t)} \right\rangle + o(\Vert \Delta \bb{\theta}^{(t)}\Vert_2)$.
According to the sub-network hypothesis (\textbf{Assumption~\ref{assu-subnetwork}}), we restrict the change in parameters to the set of edges of the two sub-networks, resulting in:
\begin{equation*}
\begin{aligned}
\Delta\bb{\theta}^{(t)}&=\Delta\bb{\theta}^{(t)}|_{\mathcal{E}_1} +\Delta \bb{\theta}^{(t)}|_{\mathcal{E}_2} + \Delta \bb{\theta}^{(t)}|_{\mathcal{E}_3}\\
& := \Vert \Delta\bb{\theta}^{(t)}|_{\mathcal{E}_1}\Vert _2\bb{V}_1^{(t)} +\Vert \Delta\bb{\theta}^{(t)}|_{\mathcal{E}_2}\Vert _2\bb{V}_2^{(t)}+ \Vert \Delta\bb{\theta}^{(t)}|_{\mathcal{E}_3}\Vert _2\bb{V}_3^{(t)}.
\end{aligned}
\end{equation*}
We treat the Euclidean norm of the projection of $\Delta\bb{\theta}^{(t)}$ onto $\mathcal{E}_i$ as the network’s training intensity on subnetwork $\mathcal{E}_i$ at step $t$, $i\in\{1,2\}$. $\bb{V}_i$ is the unit vector in the direction of $\Delta \bb{\theta}$. We denote the training intensities for the two sub-networks at time $t$ as $\Delta w_1^{(t)}$ and $\Delta w_2^{(t)}$, and the overall training intensities up to $t$ as 
$w_1^{(t)}:=\sum\limits_{s=1}^t \Delta w_1^{(s)}$ and $w_2^{(t)}:=\sum\limits_{s=1}^t \Delta w_2^{(s)}$.

At the same time, since we assume $\mathcal{E}_1,\mathcal{E}_2$ and $\mathcal{E}_3$ are pairwise disjoint, this implies that $\bb{V}_1^{(t)},\bb{V}_2^{(t)},\bb{V}_3^{(t)}$ are mutually orthogonal unit vectors. Then we decompose the neural-tangent feature as $\mathrm{Proj}_{\Delta\bb{\theta}^{(t)}}\nabla_{\bb{\theta}}f(\bb{X};\bb{\theta}^{(t)})
  \;=\sum\limits_{i=1}^3\;
  \Bigl\lVert \mathrm{Proj}_{\bb{V}_i^{(t)}}\nabla_{\bb{\theta}}f(\bb{X};\bb{\theta}^{(t)})
  \Bigr\rVert_2\,
  \bb{V}_i^{(t)}$.
Each sample's neural tangent feature corresponds to a strategic direction. Next, we mainly model two leading directions, which are governed by the core features and shortcut features, respectively.

\begin{figure}[h]
    \centering
    \includegraphics[width=0.95\linewidth]{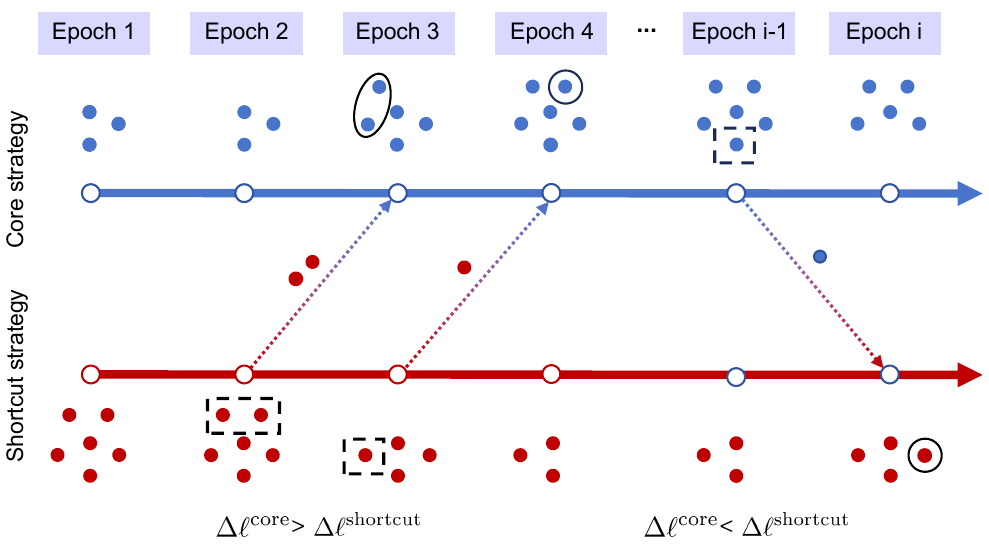}
    \caption{A schematic diagram of strategy transfer during the training process. Under the sub-network hypothesis, consider that the samples have two strategies: a core strategy and a shortcut strategy. The corresponding evolutionary paths are marked in blue and red, respectively. When the payoff of one strategy is higher, the population size adopting that strategy will increase at the next epoch.}
    \label{fig:3}
\end{figure}

\begin{assumption}[Strategy set]
\begin{itemize}
  \item[(1)] At any epoch \(t\), each sample can choose between two pure strategies, which is represented by tangent features, $\nabla_{\bb{\theta}} f^{(c)}$ and $\nabla_{\bb{\theta}} f^{(sc)}$, where $\nabla_{\bb{\theta}} f^{(c)}$ primarily optimizes the core sub-network. Whereas strategy $\nabla_{\bb{\theta}} f^{(sc)}$ primarily optimizes the shortcut sub-network. 
  \item[(2)] The two strategy vectors are orthogonal and have equal norms when restricted to the edge set $\mathcal{E}_1\bigcup \mathcal{E}_2$. 
  \item[(3)] At each time step $t\geq 1$, the norm of the projected strategy vector $\text{Proj}_{\Delta\bb{\theta}^{(t)}|_{\mathcal{E}_1\cup \mathcal{E}_2}}\nabla_{\bb{\theta}} f$ and its angle with the $\Delta\bb{\theta}^{(t)}\big |_{\mathcal{E}_1\cup\mathcal{E}_2}$ both remain constant.
\end{itemize}
\label{assump_2}
\end{assumption}

After the first epoch, we assume the following distribution for the
neural-tangent directions induced by different features. It is the result of orthogonally decomposing $\nabla_{\bb{\theta}} f$ with respect to two bases $\bb{V}_1$ and $\bb{V}_2$:
\begin{equation}
\begin{aligned}
g^{(c,+)} &= -\tfrac{\gamma}{\sqrt{1+\gamma^{2}}}\,\bb{V}_2^{(1)}
             + \tfrac{1}{\sqrt{1+\gamma^{2}}}\,\bb{V}_1^{(1)},
   &&\text{w.p. }\tfrac{\alpha^{(1)}}{2},\; y=1,\\[0.3em]
g^{(sc,+)} &= \tfrac{1}{\sqrt{1+\gamma^{2}}}\,\bb{V}_2^{(1)}
             + \tfrac{\gamma}{\sqrt{1+\gamma^{2}}}\,\bb{V}_1^{(1)},
   &&\text{w.p. }\tfrac{1-\alpha^{(1)}}{2},\; y=1,\\[0.3em]
g^{(c,-)} &= \tfrac{\gamma}{\sqrt{1+\gamma^{2}}}\,\bb{V}_2^{(1)}
             - \tfrac{1}{\sqrt{1+\gamma^{2}}}\,\bb{V}_1^{(1)},
   &&\text{w.p. }\tfrac{\alpha^{(1)}}{2},\; y=-1,\\[0.3em]
g^{(sc,-)} &= -\tfrac{1}{\sqrt{1+\gamma^{2}}}\,\bb{V}_2^{(1)}
              - \tfrac{\gamma}{\sqrt{1+\gamma^{2}}}\,\bb{V}_1^{(1)},
   &&\text{w.p. }\tfrac{1-\alpha^{(1)}}{2},\; y=-1.
\end{aligned}
\end{equation}
For notational convenience, define \begin{equation}g^{(c,+)} := 
  \frac{\text{Proj}_{\Delta \bb{\theta}^{(1)}|_{\mathcal{E}_1\cup \mathcal{E}_2}} 
        \nabla_{\bb{\theta}} f^{(c,+)}}
       {\bigl\Vert \text{Proj}_{\Delta \bb{\theta}^{(1)}|_{\mathcal{E}_1\cup \mathcal{E}_2}}
        \nabla_{\bb{\theta}} f^{(c,+)} \bigr\Vert_2},
\quad \end{equation}
and analogously for $g^{(sc,+)}$,$g^{(c,-)}$ and $g^{(sc,-)}$.

For $\nabla_{\bb{\theta}} f^{(\cdot,\cdot)}$, the second subscript $+~$or$~-$ indicates the class to which the sample belongs. The parameter $\gamma$ represents the ratio of the optimization strengths applied to two different sub-networks, corresponding to the strategy adopted by the sample. We denote $M:=\Vert\text{Proj}_{\Delta \bb{\theta}^{(1)}|_{\mathcal{E}_1\cup \mathcal{E}_2}} \nabla_{\bb{\theta}} f\Vert_2$ in the following sections.

\subsubsection{Evolutionary Dynamics}
To capture how the training dynamics select among samples, we assume that the mismatch between a sample’s strategy at step~$t$ and the subnetwork currently driving the update influences the sample’s next decision. Here, we first analyze how to decompose the model's output into the accumulated contributions. We consider two cases.

\paragraph{No evolution of the strategy occurs along the iterative path.} That is, for the current sample, a single strategy, e.g., a core strategy, has been consistently used to guide the network's optimization from the first epoch. In this scenario, we can assume that along the optimization path, i.e., $\frac{\partial^2 f(\bb{X},\bb{\theta})}{\partial \bb{\theta}^2}$, remains bounded. Therefore,
\begin{equation*}
\begin{aligned}
    f^{(t)}(\bb{X}_i) &= f^{(1)}(\bb{X}_i)+\sum\limits_{s=1}^{t-1}\left\langle \nabla_{\bb{\theta}} f(\bb{X}_i;\bb{\theta}^{(s)})^{(c,+)},\Delta \boldsymbol{\theta}^{(s)}\right\rangle+\sum\limits_{s=1}^{t-1} o(\Vert \Delta\bb{\theta}^{(s)}\Vert_2).
\end{aligned}
\end{equation*}

\paragraph{Evolution of the strategy occurs along the iterative path.} That is, at time $k<t$, the strategy evolved the first time; for example, it switched from strategy $\frac{\partial f}{\partial \bb{\theta}}^{(\text{c},+)}$ to strategy $\frac{\partial f}{\partial \bb{\theta}}^{(\text{sc},+)}$. That is, at time $s=k$, the second-order term cannot be neglected, and we have:
\begin{equation}
    \begin{aligned}
    &f^{(k+1)}(\bb{X}_i) = f^{(1)}(\bb{X}_i)+\sum\limits_{s=1}^{k-1}\left\langle\nabla_{\bb{\theta}} f(\bb{X}_i;\bb{\theta}^{(s)})^{(c,+)},\Delta \boldsymbol{\theta}^{(s)}\right\rangle\\&+\left\langle\nabla_{\bb{\theta}} f(\bb{X}_i;\bb{\theta}^{(k)})^{(\text{sc},+)},\Delta \boldsymbol{\theta}^{(k)}\right\rangle
    +\sum\limits_{s=1}^{k-1}o(\Vert \Delta \bb{\theta}^{(s)}\Vert_2)+\Delta^{(k)}.
\end{aligned}
\label{nomuta}
\end{equation}
where $\Delta^{(k)}$ captures the non-negligible second-order information induced by the strategy shift at step $k$.
Then starting from the $k+1$-th time step, the training of the network for this sample is guided by strategy $\nabla_{\bb{\theta}} f(\bb{X}_i;\bb{\theta}^{(s)})^{(\text{sc},+)}$. In this case, assuming strategy similarity reflects the similarity of the output $f$. Therefore we can rewrite \textbf{Eq.~\ref{nomuta}} using the shortcut strategy. That is, we merge this sample into the iterative path guided by the shortcut strategy: $f^{(k+1)}(\bb{X}_i) = f^{(1)}(\bb{X}_i)+\sum\limits_{s=1}^{k}\left\langle \nabla_{\bb{\theta}} f(\bb{X}_i;\bb{\theta}^{(s)})^{(\text{sc},+)},\Delta \boldsymbol{\theta}^{(s)}\right\rangle+\sum\limits_{s=1}^{k}o(\Vert \Delta \bb{\theta}^{(s)}\Vert_2).$
As described previously, $\Delta\bb{\theta}$ can be decomposed over the three edge sets, and thus
\begin{equation*}
    \sum\limits_{s=1}^{t-1}\left\langle\nabla_{\bb{\theta}} f(\bb{X}_i;\bb{\theta}^{(s)}),\Delta \boldsymbol{\theta}^{(s)}\right\rangle=\sum\limits_{s=1}^{t-1}\left\langle\nabla_{\bb{\theta}} f(\bb{X}_i;\bb{\theta}^{(s)}),\sum\limits_{i=1}^3\Delta \boldsymbol{\theta}^{(s)} \big|_{\mathcal{E}_i}\right\rangle. \end{equation*}
Since $\mathcal{E}_3$ is the set of edges connecting the two classes of neurons and its long-term cumulative effect on the learning of subnetworks is relatively small, we assume that \begin{equation}\frac{\sum\limits_{s=1}^{t}\left\langle\nabla_{\bb{\theta}} f(\bb{X}_i;\bb{\theta}^{(s)}),\Delta \boldsymbol{\theta}^{(s)} \big|_{\mathcal{E}_3}\right\rangle}{\sum\limits_{j=1}^{2}\sum\limits_{s=1}^{t}\left\langle\nabla_{\bb{\theta}} f(\bb{X}_i;\bb{\theta}^{(s)}),\Delta \boldsymbol{\theta}^{(s)} \big|_{\mathcal{E}_j}\right\rangle}= o(1).~\text{as}~t\rightarrow \infty
\label{Ttheta}
\end{equation}
Through the above derivations, we can explicitly write down the expression of the payoff matrix $\bb{U}^{(t)}$.
\begin{lemma}
Assume that \textbf{Assump~\ref{assu-subnetwork}} and \textbf{Assump~\ref{assump_2}} hold, together with \textbf{Eq.~\ref{Ttheta}}, 
and that $f^{(1)}(\bb{X}_i)=o(1)$. 
Then the payoff matrix $\bb{U}^{(t)}$ takes the following form:
\begin{equation}
\begin{aligned}
\bb{U}^{(t)} =& \begin{pmatrix}
  1+\gamma w_{2}^{(t)}-w_{1}^{(t)} &
  -\gamma\bigl(1+\gamma w_{2}^{(t)}-w_{1}^{(t)}\bigr) \\
  \gamma\bigl(1-w_{2}^{(t)}-\gamma w_{1}^{(t)}\bigr) &
  1-w_{2}^{(t)}-\gamma w_{1}^{(t)}
\end{pmatrix}:=\begin{pmatrix}a^{(t)} &b^{(t)}\\
c^{(t)} & d^{(t)} \end{pmatrix},
\end{aligned}
\end{equation}

where $w_i$ are the rescaled training intensities of the subnetworks $\mathcal{E}_i$.
\label{lemma-payoff}
\end{lemma}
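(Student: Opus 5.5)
The plan is to compute each entry $P^{a,b,t}$ directly from the MSE loss and the first-order expansion. Take a positive-class sample ($y=1$); the negative class follows by the sign symmetry of $g^{(c,-)}=-g^{(c,+)}$ and $g^{(sc,-)}=-g^{(sc,+)}$. For a sample playing strategy $a$, I will first write its output at step $t$ as the accumulated path sum from the ``no evolution'' case. Switchers are folded into that same path by the merging argument after Eq.~\eqref{nomuta}. Using $f^{(1)}(\bb{X}_i)=o(1)$, Eq.~\eqref{Ttheta} to drop the $\mathcal{E}_3$ contribution, and the decomposition $\Delta\bb{\theta}^{(s)}=\Delta w_1^{(s)}\bb{V}_1^{(s)}+\Delta w_2^{(s)}\bb{V}_2^{(s)}+(\mathcal{E}_3)$, we obtain
\[
f^{(t)}(\bb{X}_i)\approx M\sum_{s<t}\Bigl(\langle g^{a},\bb{V}_1\rangle\Delta w_1^{(s)}+\langle g^{a},\bb{V}_2\rangle\Delta w_2^{(s)}\Bigr).
\]
By Assumption~\ref{assump_2}(3), the coefficients $\langle g^a,\bb{V}_j\rangle$ are the time-independent ones read off from the step-1 decomposition. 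After absorbing $M$ and the factor $\sqrt{1+\gamma^2}$ into the rescaled intensities $w_j$, the outputs are $f^{(c)}=w_1-\gamma w_2$ and $f^{(sc)}=\gamma w_1+w_2$. The residuals $y-f$ are then $1+\gamma w_2-w_1$ for the core strategy and $1-w_2-\gamma w_1$ for the shortcut strategy.

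Next I will compute the payoff, i.e.\ the loss decrease to first order in the update. For MSE we have $\ell^{(t-1)}-\ell^{(t)}\approx (y-f^{(t)})\langle\nabla_{\bb\theta}f^{a},\Delta\bb\theta^{(b,t)}\rangle$. Here $\Delta\bb\theta^{(b,t)}$ is the update direction generated by strategy $b$, which is proportional to the $b$-sample's own residual times its tangent feature $g^b$. So the payoff is a product of a residual factor and an alignment factor $\langle g^a,g^b\rangle$-type inner products restricted to $\bb V_1,\bb V_2$. I will fix the normalization so that the diagonal alignments equal $1$ and the off-diagonal ones are $\mp\gamma$, using the orthogonality and equal norms in Assumption~\ref{assump_2}(2) together with the explicit coefficients of $g^{(c,+)},g^{(sc,+)}$. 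The intended matching is: row $c$ carries the core residual, with alignment $1$ against core updates and $-\gamma$ against shortcut updates, which gives $a^{(t)},b^{(t)}$. Row $sc$ carries the shortcut residual, with alignments $\gamma$ and $1$, which gives $c^{(t)},d^{(t)}$. Constant positive prefactors (the learning rate, $M^2$, $1/N$) will be dropped, since the dynamics \eqref{darwanian} depend only on the sign of payoff differences.

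The main obstacle is the alignment factors in the off-diagonal entries. Literally, $\langle g^{(c,+)},g^{(sc,+)}\rangle=0$, so the values $-\gamma,\gamma$ cannot come from the plain inner product of the two strategy vectors. I would resolve this by interpreting the update $\Delta\bb\theta^{(b,t)}$ as the subnetwork direction that strategy $b$ primarily drives. That is, $b=c$ means $\bb V_1$ and $b=sc$ means $\bb V_2$. The alignments are then the $\bb V_1$/$\bb V_2$ coordinates of $g^a$ rescaled by $\sqrt{1+\gamma^2}$: $(1,-\gamma)$ for the core strategy and $(\gamma,1)$ for the shortcut strategy. These reproduce the rows $(a,b)=(r_c,-\gamma r_c)$ and $(c,d)=(\gamma r_{sc},r_{sc})$, where $r_c$ and $r_{sc}$ denote the two residuals above. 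I will also have to argue, rather than merely assert, that the $o(\|\Delta\bb\theta\|)$ remainders and the switching terms $\Delta^{(k)}$ are absorbed by the merging convention. Otherwise they would contaminate the residuals at leading order.
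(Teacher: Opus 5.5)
Your proposal is correct and follows essentially the same route as the paper. The paper computes the first-order MSE loss drop as the alignment of $\nabla_{\bb{\theta}} f^{a}$ with the unit subnetwork direction driving the step, times the residual $1-f^{(t-1)}(\bb{X}_i)$. It then expands that residual along the accumulated path to get $1+\gamma w_2-w_1$ and $1-w_2-\gamma w_1$. It resolves the off-diagonal entries exactly as you do, taking $\Delta\bb{\theta}^{(t-1)}=\eta^{(t-1)}\bb{V}_b$ rather than using $\langle g^a,g^b\rangle$.

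One small point: under that final interpretation, the dropped constant is $\eta^{(t-1)}M/(N\sqrt{1+\gamma^2})$, which has one power of $M$, not $M^2$. The $w_i$ are then rescaled by $M/\sqrt{1+\gamma^2}$.
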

The structure of this payoff matrix admits two Nash equilibria: one where all samples adopt the core strategy, and another where all adopt the shortcut strategy. However, these strategies may differ in nature. We observe that if the network initially learns the shortcut subnetwork, i.e., $w_2^{(t)}$
is relatively large, it may lead to $a^{(t)}>d^{(t)}$ but $a^{(t)}+b^{(t)}<c^{(t)}+d^{(t)}$. In this case, the core strategy is payoff-dominant, whereas the shortcut strategy is risk-dominant. Next, we explore how different optimization dynamics select between these two types of strategies.

\subsubsection{The Dynamics of \texorpdfstring{$z_t$}{zt}}
In this subsection, we focus on the evolution of $z_t$. Specifically, we seek to characterize how the number of individuals steering the training of each subnetwork changes during optimization, which ultimately affects the subnetwork’s final performance. As before we discussed,  at the current time step $t$, let $\bb{\Omega}_1^{(t)}$ be the set of samples adopting the core strategy, with $\vert \bb{\Omega}_1^{(t)}\vert = n_1^{(t)}$, and let $\bb{\Omega}_2^{(t)}$ be the set of samples adopting the shortcut strategy, with $\vert \bb{\Omega}_2^{(t)}\vert = n_2^{(t)}$, such that $n_1^{(t)}+n_2^{(t)}=N$. Then the parameter updating dynamics at $t$ is given by:
\begin{equation*}
\begin{aligned}
   \Delta \bb{\theta}^{(t)} 
   &= -\lambda^{(t)}\Biggl(
        \sum_{i\in\bb{\Omega}_1^{(t)}}\nabla_{\bb{\theta}}\ell(\bb{X}_i;\bb{\theta}^{(t)})
        + \sum_{j\in\bb{\Omega}_2^{(t)}}\nabla_{\bb{\theta}}\ell(\bb{X}_j;\bb{\theta}^{(t)})
      \Biggr) \\[0.5em]
   &= -\lambda^{(t)}\Biggl(
        \sum_{i\in\bb{\Omega}_1^{(t)}}\nabla_f \ell(\bb{X}_i;\bb{\theta}^{(t)})
        \,\nabla_{\bb{\theta}}f(\bb{X}_i;\bb{\theta}^{(t)}) + \sum_{j\in\bb{\Omega}_2^{(t)}}\nabla_f \ell(\bb{X}_j;\bb{\theta}^{(t)})
        \,\nabla_{\bb{\theta}}f(\bb{X}_j;\bb{\theta}^{(t)})
      \Biggr).
\end{aligned}
\end{equation*}
To make equal contribution from each individual, we denote 
\begin{equation*}
    \alpha_t = \frac{\sum\limits_{i\in\bb{\Omega}_1^{(t)}}\bigl\vert\nabla_f \ell(\bb{X}_i;\bb{\theta}^{(t)})\bigr\vert}{\sum\limits_{i\in\bb{\Omega}_1^{(t)}}\bigl\vert\nabla_f \ell(\bb{X}_i;\bb{\theta}^{(t)})\bigr\vert+\sum\limits_{j\in\bb{\Omega}_2^{(t)}}\bigl\vert\nabla_f \ell(\bb{X}_j;\bb{\theta}^{(t)})\bigr\vert}.
\end{equation*}

Let the effective number of samples adopting the core strategy at the current time step be $z_t=\alpha_t N$, and the equivalent number of samples adopting the shortcut strategy be $N-z_t$.

At the same time, we adapt the calculation for the degree of alignment between the sample's neural tangent feature and the direction of parameter evolution, performing it within the subspace where the subnetwork evolves:
\begin{equation*}
\begin{aligned}
    \left\langle \nabla_{\bb{\theta}} f^{(t)},\Delta\bb{\theta}^{(t)}\right\rangle & = \Vert\Delta\bb{\theta}^{(t)}\Vert_2\left\langle\nabla_{\bb{\theta}} f^{(t)},\frac{\sum\limits_{i=1}^3\Delta \bb{\theta}^{(t)}|_{\mathcal{E}_i}}{\Vert\Delta\bb{\theta}\Vert_2}\right\rangle\\  &\approx\Vert\Delta\bb{\theta}^{(t)}\Vert_2\left\langle\nabla_{\bb{\theta}} f^{(t)},\alpha_t\bb{V}_1^{(t)}+(1-\alpha_t)\bb{V}_2^{(t)}\right\rangle.
\end{aligned}
\end{equation*}
This demonstrates that, while the shortcut (core) strategy vector does have a positive (negative) impact on the training of the core (shortcut) sub-network, this effect is not reflected in the loss decrease at the current step due to the presence of interference weights $\bb{\theta} |_{\mathcal{E}_3}$. Superficially, the shortcut (core) feature appears to have only advanced the model along the update direction of the respective shortcut (core) sub-network. This assumption is based on empirical findings related to probing for latent features: that hidden layers encode rich, structured knowledge which is not directly used for the training task~\citep{alain2017understanding,skean2025layerlayeruncoveringhidden}.

Therefore, we analyze the evolution of $z_t$ using the following finite-difference formulation:
\begin{equation}
\begin{aligned}
\pi_A(z_t)&=\frac{z_t}{N} (1+\gamma w_2^{(t)}-w_1^{(t)})-\frac{N-z_t}{N}\cdot\gamma (1+\gamma w_2^{(t)}-w_1^{(t)})\\
\pi_B(z_t)&=\frac{z_t}{N}\cdot\gamma (1- w_2^{(t)}-\gamma w_1^{(t)})+\frac{N-z_t}{N}(1-w_2^{(t)}-\gamma w_1^{(t)}).
\end{aligned}
\label{eqn-dy}
\end{equation}
$\pi_A(z_t)$ ($\pi_B(z_t)$) denotes the expected payoff of those samples that follow the core strategy when, within the training set, $z_t$ samples are updated in the optimization direction favorable to the core (shortcut) subnetwork and the remaining $N - z_t$ samples are updated in the direction favorable to the shortcut subnetwork. This evolutionary dynamic corresponds to the training dynamic with gradient descent, since we use all $N$ data points to compute the payoff. We assume that the evolutionary dynamics of $z_t$ satisfy the following Darwinian properties (\textbf{Eq.}~\ref{darwanian}) with a mutation coefficient $\varepsilon$.

Here, $b(z_t)$ captures the evolutionary behavior of $z_t$, i.e.,a strategy that obtained a higher payoff in the previous round is reinforced in the subsequent round. The random variables $p_t$ and $q_t$ follow binomial distributions with parameter $\varepsilon$, allowing certain individuals to switch to another strategy in the next round even when the current one is superior. It shows that within a small time interval $\Delta t$, each player independently makes an incorrect move with the same small probability $\varepsilon$. The mutation coefficient is intimately linked to evolutionary stability and, in the neural-network setting, corresponds to the optimizer’s ability to escape saddle points. \textbf{Fig.~\ref{fig:3}} illustrates the overall evolutionary process during optimization.

\paragraph{Explanation of the evolution} The evolution of sample strategies during optimization can be interpreted as follows: strategies yielding higher payoffs in the current round are reinforced in subsequent iterations. A higher payoff at a given step indicates that the strategy is more task-relevant at that stage, while its reinforcement in the next step reflects the adaptive nature of the gradient. This observation is consistent with the experimental findings of~\citep{baratin2021implicit}, which demonstrate that the gradient directions of deep neural networks dynamically adapt toward task-relevant features during training. We view this phenomenon as a form of \emph{implicit regularization}: by dynamically adjusting gradient directions, the model effectively performs both feature selection and model compression.

\paragraph{Explanation of the mutation} We posit that, within evolutionary dynamics, mutation stems from the uncertainty a neural network faces when modeling the noise present in the data. As noted above, shortcut features alone cannot classify the training set; hence the network must rely on noise in the data to facilitate the learning of these shortcuts. Assuming that the noise is sample independent, the amount of mutation can be modeled by a binomial distribution. For instance, when the network intensifies training on the shortcut subnetwork edges ($\mathcal{E}_2^f$), it must also learn the noise signal from $\mathcal{E}_2^n$ to create an \emph{illusory} decline in loss, which may trap the optimizer in a local minimum. Nevertheless, the network can escape: if it fails to fit the noise of a particular sample, that sample may mutate into an individual that follows an alternative strategy. Thus, because each sample’s strategy‐mutation event is independent, and assuming the mutation probability is $\varepsilon$, with every sample at each moment having a binary outcome—mutation (1) or no mutation (0). Thus, the probability that each population undergoes a mutation follows a Bernoulli distribution. Therefore, the total number of individuals in the population that mutate follows a binomial distribution.

On the other hands, the presence of mutation noise ensures that the probability of transitioning from any other state is strictly positive. This causes the transition matrix, $P$, to be a positive matrix. Therefore a homogeneous Markov chain defined by this transition matrix is necessarily regular. This in turn guarantees that the chain is ergodic and has a unique stationary distribution, which is also the limiting distribution.

\begin{example}
     Consider a Markov chain with 3 states $\{0,1,2\}$, where for a variable $z$ and a threshold $z^*>1$, the following conditions hold
\begin{itemize}

\item[(1)] When $z>z^*$, $\pi_A(z)>\pi_B(z)$\\
\item[(2)] When $z<z^*$,
$\pi_A(z)\leqslant \pi_B(z)$.
\item[(3)] $\vert b(z)-z\vert = 1$.
\end{itemize}
Then the corresponding transition probability matrix is
\begin{equation*}
H= \begin{pmatrix}
    1-2\varepsilon-\varepsilon^2 & 2\varepsilon & \varepsilon^2 \\
    1-2\varepsilon-\varepsilon^2 & 2\varepsilon & \varepsilon^2 \\
    \varepsilon^2 & 2\varepsilon & 1-2\varepsilon-\varepsilon^2.
\end{pmatrix}
\end{equation*}

Thus, we can obtain the expression for its unique stationary distribution $\bb{\mu}$ by solving the following equations:
\begin{equation*}
\begin{aligned}
    \bb{\mu}\bb{H}&=0\\
    \bb{\mu}_1+\bb{\mu_2}+\bb{\mu}_3 &= 1.
\end{aligned}
\end{equation*}

We get $\bb{\mu}_1 = \frac{2-3\varepsilon-4\varepsilon^2}{2(1+\varepsilon)}, \bb{\mu_2}=2\varepsilon,\bb{\mu_3}=\frac{\varepsilon}{2(1+\varepsilon)}$. When $\varepsilon\rightarrow 0$, $\bb{\mu}\rightarrow (1,0,0)$.

However, when$\varepsilon\rightarrow 0$, the matrix reduces to:
\begin{equation*}
\begin{pmatrix}
    1&0&0\\
    1&0&0\\
    0&0&1\\
\end{pmatrix},
\end{equation*}
who has two stationary distribution $(1,0,0)$and $(0,0,1)$.
\end{example}
This simple example illustrates that mutation and system stability are closely related. However, when we consider the evolutionary game process of sample strategies in a neural network, the number of samples $N$ is very large. This means the dimension of the transition probability matrix is very high, making it computationally expensive to solve the equations. Furthermore, the corresponding weight vectors $\bb{w}_1$ and $\bb{w}_2$ are constantly changing, which means the transition probability matrix is also time-varying. This makes it difficult to analyze the distributional properties of the non-homogeneous Markov chain.

Therefore, we will next use the singular perturbation technique to demonstrate how we handle the evolutionary process of the two strategies when the payoff satisfies \textbf{Eq.~\ref{eqn-dy}}.

\begin{theorem}
Assume that $N$ is an even number. If $\pi_A,\pi_B$ satisfies \textbf{Eq.~\ref{eqn-dy}}, and $\forall~t$, $w_1^{(t)},w_2^{(t)}\geq0,w_1^{(t)}+w_2^{(t)}\leq 1-\beta,w_1^{(t)}-w_2^{(t)}\leq \frac{2\gamma}{1-\gamma^2}\beta$, and $0<\beta,\gamma<1$, then $\forall~z_0=z\in [N]$, $\lim\limits_{\varepsilon\rightarrow 0}\lim\limits_{t\rightarrow\infty} P^\varepsilon(z_t=0)=1$.
\label{Thm-GD}
\end{theorem}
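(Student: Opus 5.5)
The plan is a Kandori–Mailath–Rob style argument: locate the basins of attraction of the unperturbed dynamics, then compare mutation costs through the singular-perturbation (minimum-resistance tree) characterization of stochastically stable states.

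\emph{Step 1: the threshold.} From \textbf{Eq.~\ref{eqn-dy}}, $\pi_A(z)-\pi_B(z)$ is affine in $z$. Write $A:=1+\gamma w_2^{(t)}-w_1^{(t)}$ and $D:=1-w_2^{(t)}-\gamma w_1^{(t)}$. Then
\[
\pi_A-\pi_B=\tfrac{z}{N}(1+\gamma)(A-\gamma D)-(\gamma A+D)+\tfrac{z}{N}(D-\gamma A)\cdot 0,
\]
which simplifies to $N(\pi_A-\pi_B)=z(1+\gamma)(A+D)-N(\gamma A+D)$ after collecting terms. The threshold is therefore $z^*=N\frac{\gamma A+D}{(1+\gamma)(A+D)}$.

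\emph{Step 2: using the hypotheses.} The hypotheses $w_1+w_2\le 1-\beta$ and $w_1,w_2\ge0$ should give $A>0$ and $D>0$ (for example $D\ge 1-w_2-w_1\ge\beta$). The condition $w_1-w_2\le\frac{2\gamma}{1-\gamma^2}\beta$ is what I expect to translate, after algebra, into $D-A\ge$ (or $>$) something ensuring $z^*>N/2$, i.e.\ $\gamma A+D>\tfrac{1+\gamma}{2}(A+D)$, equivalently $(1-\gamma)(D-A)>0$. Since $D-A=-(1+\gamma)w_2+(1-\gamma)w_1$ in general, the precise role of the $\beta$-bounds must be checked. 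I expect the intended content to be that the basin of $z=0$ under $b$ strictly contains more than half of $\{0,\dots,N\}$, uniformly in $t$. The evenness of $N$ enters so that the ``half'' comparison is strict at integer points: $z^*>N/2$ with $N/2$ an integer.

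\emph{Step 3: unperturbed dynamics.} With $\varepsilon=0$, $b(z)=z+1$ when $z>z^*$ and $b(z)=z-1$ when $z<z^*$ (clipped at $0,N$), so the only absorbing states are $0$ and $N$. Every $z<z^*$ drifts to $0$ and every $z>z^*$ drifts to $N$.

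\emph{Step 4: resistances.} For $\varepsilon>0$ the chain is regular, as noted in the text, so $\lim_{t\to\infty}P^\varepsilon(z_t=\cdot)$ exists and equals the stationary $\mu_\varepsilon$, regardless of $z_0$. Because the payoffs vary with $t$, I would first note that the threshold bounds hold uniformly in $t$. This lets me dominate the time-inhomogeneous chain by the homogeneous chain with worst-case threshold, or else treat the transition matrices as lying in a set sharing a common basin structure; I expect this to be the main obstacle and would handle it by a coupling or monotonicity argument. Escaping $0$ to reach the basin of $N$ requires $\lceil z^*\rceil>N/2$ simultaneous mutations, so the probability is of order $\varepsilon^{\lceil z^*\rceil}$. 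Escaping $N$ into the basin of $0$ requires only $N-\lfloor z^*\rfloor<N/2$ mutations. By the Freidlin–Wentzell/Young tree characterization (the singular-perturbation argument the text announces), $\mu_\varepsilon(0)/\mu_\varepsilon(N)\to\infty$ and $\mu_\varepsilon$ of the transient states goes to $0$. Hence $\lim_{\varepsilon\to0}\lim_{t\to\infty}P^\varepsilon(z_t=0)=1$.
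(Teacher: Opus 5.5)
Your skeleton is the paper's: basins of the unperturbed map, minimum-resistance trees, then a separate argument for time-varying $w^{(t)}$. The gap is that the one step which actually uses the hypotheses is miscomputed and then left undone. Collecting terms correctly gives
\[
N(\pi_A-\pi_B)=z\bigl[(1+\gamma)A+(1-\gamma)D\bigr]-N(\gamma A+D),\qquad z^*=N\,\frac{\gamma A+D}{(1+\gamma)A+(1-\gamma)D},
\]
which is the paper's threshold. The denominator is not $(1+\gamma)(A+D)$. So $z^*>N/2$ is not $D>A$ (shortcut payoff-dominant) but $(1+\gamma)D>(1-\gamma)A$, i.e.\ $a+b<c+d$ (shortcut risk-dominant). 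In terms of the weights,
\[
(1+\gamma)D-(1-\gamma)A=2\gamma\,(1-w_1-w_2)-(1-\gamma^2)(w_2-w_1),
\]
so $z^*>N/2$ follows from $w_1+w_2\le1-\beta$ together with $w_2-w_1<\frac{2\gamma}{1-\gamma^2}\beta$.

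Mind the sign. The paper's proof bounds $w_2^{(t)}-w_1^{(t)}$, and that is what is needed. With the bound on $w_1-w_2$ as displayed in the statement, the conclusion fails: $\gamma=\beta=0.1$ and $w^{(t)}\equiv(0,0.8)$ satisfy every displayed hypothesis but give $A=1.08$, $D=0.2$, $z^*\approx0.23N$, so $N$ is stochastically stable. If both bounds hold with equality, $z^*=N/2$ exactly and $0$ and $N$ tie, so you also need some strictness or a tie-breaking rule at $z=N/2$. Your unease in Step~2 was justified, but the proof has to settle it, and your reduction would have led you to the wrong condition.

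Two further repairs are needed.

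\emph{Mutation counts.} In Step~4, $N-\lfloor z^*\rfloor<N/2$ is false when $z^*\in(N/2,N/2+1)$. What holds is $N-\chi_2\le N/2<N/2+1\le\chi_1$, with $\chi_2=\max\{z:\pi_A<\pi_B\}$ and $\chi_1=\min\{z:\pi_A>\pi_B\}$. This is where evenness of $N$ enters.

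\emph{Time-varying weights.} $\lim_{t\to\infty}P^\varepsilon(z_t=\cdot)=\mu_\varepsilon$ holds only for frozen $w$. With $w^{(t)}$ varying, the chain is inhomogeneous and that limit need not exist. The paper handles this with a separate switching lemma. Your monotone coupling is a cleaner route and does go through, but it has to be written out:
\begin{itemize}
\item When $z_t^*>N/2$ for all $t$, each $b_t$ satisfies $b_t\le\bar b$, where $\bar b(z)=\max(z-1,0)$ for $z\le N/2$ and $\bar b(z)=\min(z+1,N)$ for $z>N/2$.
\item $\bar b$ is nondecreasing, and for $\varepsilon<1/2$ the mutation step (independent flips of $N$ bits starting from $b(z)$ ones) is stochastically increasing in $b(z)$.
\item Hence $z_t$ is stochastically dominated by the homogeneous $\bar b$-chain started at $z_0$.
\item That chain's stationary law $\bar\mu_\varepsilon$ has tree costs $N/2$ for $0$ against $N/2+1$ for $N$.
\end{itemize}
This gives $\liminf_{t\to\infty}P^\varepsilon(z_t=0)\ge\bar\mu_\varepsilon(0)\to1$.
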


This yields a relatively pessimistic result, indicating that gradient descent (GD) is unfavorable for learning core features. Under this setting, the shortcut strategy corresponds to the stochastically stable strategy (SSS).

However, when we introduce certain randomness into the optimization, specifically, the calculation of rewards does not entirely depend on the average of all sample decisions, we incorporate a random variable \( k_j \) that depends on the batch size \( B \). This means that we partition the $N$ samples into $\frac{N}{B}$ mini-batches. In each batch of size $B$, $k_j$ individuals adopt the core strategy ($\nabla_{\bb{\theta}} f^{c}$), while $B-k_j$ adopt the shortcut strategy ($\nabla_{\bb{\theta}} f^{sc}$). We denote it as 
\[
K\sim\text{MHG}(N,z_t,B,\frac{N}{B}).
\]

This indicates that $N$ samples are partitioned into $\frac{N}{B}$ groups, each containing $B$ samples. Among the $N$ samples, the number of those adopting the core strategy is $z_t$, and
\[
k_1+k_2+\cdots+k_{\tfrac{N}{B}}=z_t,
\]
where $k_i$ denotes the number of core-strategy samples in the $i$-th group. Consequently, the vector $\bb{K}=(k_1,k_2,\cdots, k_{\frac{N}{B}})$ follows the Multivariate Hypergeometric Distribution with the probability mass function
\[
\Pr(k_1,\ldots,k_{\tfrac{N}{B}})
= \frac{\prod_{i=1}^{N/B} \binom{B}{k_i}}{\binom{N}{z_t}},
\quad k_1+\cdots+k_{\tfrac{N}{B}}=z_t.
\]

After one epoch, the expected payoffs for core-strategy and shortcut-strategy individuals are:
\begin{equation}
\begin{aligned}
\pi_A(z_t)
&= \frac{
\bigl(1+\gamma w_2^{(t)}-w_1^{(t)}\bigr)
\sum_{j=1}^{N/B} \operatorname{I}(k_j \neq 0)\!
\left[\tfrac{k_j}{B}-\gamma\,\tfrac{B-k_j}{B}\right]
}{
\sum_{j=1}^{N/B}\operatorname{I}(k_j\neq0)
}
\\[0.6em]
\pi_B(z_t)
&= \frac{
\bigl(1-w_2^{(t)}-\gamma w_1^{(t)}\bigr)
\sum_{j=1}^{N/B} \operatorname{I}(k_j \neq B)\!
\left[\gamma\,\tfrac{k_j}{B}+ \tfrac{B-k_j}{B}\right]
}{
\sum_{j=1}^{N/B}\operatorname{I}(k_j\neq B)
}.
\end{aligned}
\label{sgd-dy}
\end{equation}

Please note that in our setting, the sample partitioning is in fact applied to the effective number of samples rather than the actual samples. Therefore, in a strict sense, the correspondence to mini-batch optimization is not entirely exact. Nevertheless, we regard this as a reasonable approximation. In addition, we assume that within a single epoch, the variations of $w_1^{(t)}$ and $w_2^{(t)}$ are negligible, such that samples from different batches within the same epoch can share the same pair $(w_1^{(t)}, w_2^{(t)})$. After that, We will give the next theorem, which indicates that in the context of stochastic optimization, the core strategy emerges as a stochastically stable strategy.
\begin{theorem}
    At every discrete time step the population is partitioned uniformly at random into $\frac{N}{ B}$ batches, and the batch size is $B$ and $B$ is devisible by $N$. Assume that $\forall~t$, $w_1^{(t)}, w_2^{(t)}\geqslant 0$, $w_1^{(t)}+w_2^{(t)}\leqslant 1$, $0<\gamma<1$. $(1+\gamma) w_2^{(t)}>(1-\gamma) w_1^{(t)}$. If the group payoff satisfies \textbf{Eq.}~(\ref{sgd-dy}), and $N\geqslant \tilde{N}$, Then we get $\lim\limits_{\varepsilon\rightarrow 0} \lim\limits_{t\rightarrow\infty}P^\varepsilon(z_t = N)=1$.
\label{the-sgd}
\end{theorem}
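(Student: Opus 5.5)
We use the same stochastic-stability machinery as for \textbf{Thm.~\ref{Thm-GD}}, but now the unperturbed dynamics push towards $z=N$. Because the mutation process (\textbf{Eq.~\ref{darwanian}}) makes the chain regular, for each $\varepsilon>0$ the law of $z_t$ has a limit $\mu_\varepsilon$. Treating $(w_1^{(t)},w_2^{(t)})$ as frozen within an epoch, as the paper assumes, the stochastically stable states are the recurrent classes of the $\varepsilon=0$ dynamics that minimise the Freidlin--Wentzell/Young stochastic potential, i.e.\ the minimal-resistance rooted tree.

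First we identify the absorbing states of the unperturbed map $z\mapsto b(z)$. We show that $z=N$ and $z=0$ are the only fixed points and that the drift $\operatorname{sign}(\pi_A-\pi_B)$ does not create other recurrent classes.

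At $z=N$ every batch has $k_j=B$. Hence $\pi_A=a^{(t)}=1+\gamma w_2-w_1$, while $\pi_B$ is computed over an empty index set, so we interpret it by continuity with a single mutant. A single mutant sits in one batch with $k_j=B-1$. It gets $\pi_B=(1-w_2-\gamma w_1)\bigl(\gamma(B-1)/B+1/B\bigr)$, while the core players get approximately $a^{(t)}$.

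The hypothesis $(1+\gamma)w_2>(1-\gamma)w_1$ is exactly $a^{(t)}>d^{(t)}$, i.e.\ $1+\gamma w_2-w_1>1-w_2-\gamma w_1$. So the core strategy is payoff dominant, and a few shortcut mutants are pushed back. Symmetrically, at $z=0$ a lone core mutant lies in a batch with $k_j=1$. It earns $(1+\gamma w_2-w_1)(1/B-\gamma(B-1)/B)$, which for large $B$ is roughly $-\gamma a^{(t)}<0<d^{(t)}$, so $z=0$ is also locally stable.

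The central step is the computation of resistances, i.e.\ the minimal number of simultaneous mutations needed to leave each basin. The key observation is that mini-batching replaces the global mixture $z/N$ in \textbf{Eq.~\ref{eqn-dy}} by batch-local mixtures. In \textbf{Eq.~\ref{sgd-dy}} the averages run only over batches that actually contain a player of that type.

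Starting from $z=0$, suppose $m$ core mutants arrive. With positive probability under the multivariate hypergeometric law they are concentrated in a single batch, so $k_j=m$. Then $\pi_A=a^{(t)}(m/B-\gamma(B-m)/B)$, while $\pi_B\approx d^{(t)}$, because most batches are pure shortcut. Once $m$ is close to $B$, we get $\pi_A\to a^{(t)}>d^{(t)}\ge\pi_B$, so $m\approx B$ mutants suffice regardless of $N$. The threshold is the smallest $m$ with $a(m-\gamma(B-m))>dB$, which is at most $B$.

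Conversely, from $z=N$ the shortcut mutants can only invade if $\pi_B>\pi_A$. But $\pi_A\approx a^{(t)}$ as long as most batches remain pure core. Clustered shortcut players reach at most $\pi_B\le d^{(t)}<a^{(t)}$ even when $k_j=0$ inside their batches. Hence invasion requires destroying the purity of a positive fraction of batches, costing on the order of $N/B$ mutations, and after averaging actually a constant fraction of $N$.

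We then take $\tilde N$ large enough that this resistance exceeds the one for $0\to N$, i.e.\ that $r(N\to0)>r(0\to N)\approx B$. With only two recurrent classes, the minimal tree is rooted at $N$, so $\mu_\varepsilon\to\delta_N$. Since the limit in $t$ is taken first and $\mu_\varepsilon$ is unique for each $\varepsilon$, this gives $\lim_{\varepsilon\to0}\lim_{t\to\infty}P^\varepsilon(z_t=N)=1$.

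The main obstacle is making the resistance comparison rigorous under the random partition. The transitions are probabilistic even at $\varepsilon=0$, because $\mathbf K$ is random, so "resistance" has to be defined via the $\varepsilon$-order of transition probabilities. Here the hypergeometric factors are $\varepsilon$-independent and do not affect exponents. The hard part is the lower bound $r(N\to0)\gtrsim cN$: we must show that every configuration with fewer than order-$N$ shortcut players satisfies $\pi_A>\pi_B$ for every realisation of $\mathbf K$, not merely on average. We would attempt this with the uniform bounds $\pi_B\le d^{(t)}$ and $\pi_A\ge a^{(t)}(1-(1+\gamma)\,\#\{\text{impure batches}\}/(N/B))$, combined with $w_1+w_2\le1$ to keep $a^{(t)},d^{(t)}\ge0$. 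This bound also determines $\tilde N$.
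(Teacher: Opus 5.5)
Your overall architecture matches the paper's. You compare the resistances of the two absorbing states $0$ and $N$. You get an $N$-independent cost for leaving $0$ by packing core mutants into few batches, which has positive hypergeometric probability. You get a cost for leaving $N$ that grows with $N$, from a bound valid for \emph{every} realization; that bound uses $\pi_B\le d$, which needs $d\ge0$, i.e.\ $w_1+w_2\le 1$. Then you choose $\tilde N$ so the second cost exceeds the first.

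The gap is in the first estimate. Showing that at $z=m^*$ some partition gives $\pi_A>\pi_B$ only shows that one upward step has positive probability. It does not put $m^*$ in the basin $A_Q(N)$. Membership requires a favourable realization at \emph{every} state from $m^*$ to $N-1$, because the unperturbed chain moves by $\pm1$.

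This fails already at $z=B+1$. Every partition then has at least two core-containing batches, so $\pi_A\le a\bigl((1+\gamma)\tfrac{B+1}{2B}-\gamma\bigr)\approx a\tfrac{1-\gamma}{2}$, while $\pi_B\to d$ as $N$ grows. Take $\gamma=\tfrac12$, $w_1=0$, $w_2=0.1$, so $a=1.05$, $d=0.9$ and all hypotheses hold. Then every realization sends $B+1$ back to $B$, so $B\notin A_Q(N)$ and $r(0\to N)\approx B$ is false.

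The missing ingredient is the paper's Lemma~\ref{lemma-Realbasin}. For arbitrary $z$, pack the core players into $n=\lceil z/B\rceil$ batches and lower-bound the single partial batch by $b=-\gamma a$. This gives $\pi_A\ge a-\frac{a-b}{n}$, which is monotone in $n$ and is $\ge d\ge\pi_B$ once $n\ge\tau:=\frac{a-b}{a-d}$. Hence \emph{every} $z\ge\lceil\tau B\rceil$ lies in $A_Q(N)$ and $r(0\to N)\le\lceil\tau B\rceil$. In the example above $\tau=10.5$, so this is much larger than $B$. It is still independent of $N$, however, so your strategy survives with a larger $\tilde N$.

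Two smaller repairs are needed. First, your bound $\pi_A\ge a\bigl(1-(1+\gamma)\#\{\text{impure}\}/(N/B)\bigr)$ divides by the wrong quantity. The denominator of $\pi_A$ is the number of batches containing a core player, which is at most $N/B$. So the inequality goes the wrong way whenever some batch is entirely shortcut.

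A clean version goes as follows. With $L=N-z$ shortcut players, the numerator of $\pi_A$ is at least $\frac{(N-L)a+Lb}{B}$, as in the paper. This numerator is positive when $L<N/\tau$, so bounding the denominator above by $N/B$ gives $\pi_A\ge a-\frac{L(a-b)}{N}>d\ge\pi_B$ for all realizations. Hence $r(N\to0)\gtrsim N/\tau$, and $\tilde N$ of order $\tau\lceil\tau B\rceil$ suffices.

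Second, $(w_1^{(t)},w_2^{(t)})$ change from epoch to epoch, so the chain is time-inhomogeneous, and regularity alone does not give a limit law $\mu_\varepsilon$. You need the switching argument the paper uses for Thm.~\ref{Thm-GD}. That argument requires the resistance comparison to hold uniformly over the admissible $(w_1,w_2)$. This in turn requires a uniform $\tilde N$, hence a uniform lower bound on $a-d$, which the strict inequality $(1+\gamma)w_2>(1-\gamma)w_1$ alone does not provide.
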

Although this theorem yields a positive result concerning randomness, it implies that as long as the payoff of core strategy exceeds that of the shortcut strategy, all individuals will eventually adopt core strategy. However, in practice, two issues still remain:

\paragraph{Diminishing payoff advantage of core strategy over shortcut strategy.} Even if the network has not completely learned the core strategy, the loss of samples embodying A will continue to decrease over iterations because the network is exploiting many unanticipated features (i.e., noise).

\paragraph{Irreversibility once shortcut strategy has been learned a lot.} If the network accumulates a large amount of information favoring the short strategy early on, it becomes very difficult to reverse later, since it has already moved too far down the wrong path.

Therefore, we need to establish—within the game‐theoretic framework—the temporal evolution of the training intensities of the different subnetworks. Here, we denote this by
\begin{equation*}
\bb{w}^{(t)} = (w_1^{(t)}, w_2^{(t)})
\end{equation*}

We choose to characterize it by a continuous stochastic dynamical equation and explicitly model the effect of noise.

\subsubsection{The Dynamic of $\bb{w}_t$}
\label{dy_w}
In this subsection, we will present the evolution of $\bb{w}$, that is, how the extent of updates varies across the different subnetworks.

Let \(\alpha_t:=\frac{Z_t}{N}\) denote the proportion of samples following the \emph{core}
strategy.  According to the Darwinian property (\textbf{Eq.~\ref{darwanian}}), its replicator flow is
\begin{equation*}
\begin{aligned}
d\alpha
&=\;
\begin{pmatrix}  1 & -1 \end{pmatrix}\cdot
\bb{U}\cdot
\begin{pmatrix} \alpha \\ 1-\alpha \end{pmatrix} dt+\sigma d B_t\\
&=\;
\bigl[(1+\gamma)\alpha-\gamma\bigr]
  \bigl(1+\gamma w_{2}^{(t)}-w_{1}^{(t)}\bigr)-\bigl[1-(1-\gamma)\alpha\bigr]
  \bigl(1-w_{2}^{(t)}-\gamma w_{1}^{(t)}\bigr)+\sigma dB_t.
\end{aligned}
\end{equation*}

where $\sigma dB_t$ models the noise introduced by stochastic optimization. Meanwhile, if we consider the updates of $w_1$ and $w_2$, we get
\begin{equation*}
\begin{aligned}
dw_1 &= f_1(w_1,w_2,\alpha) e^{-\tau t}dt \\
dw_2 &= f_2(w_1,w_2,\alpha) e^{-\tau t}dt \\
d\alpha_t &= b(\alpha_t) dt + \sigma dB_t + dK_{t,0} - dK_{t,1},
\end{aligned}
\end{equation*}

where
\begin{equation*}
\begin{aligned}
    f_1(w_1,w_2,\alpha)&=(1-\alpha)\gamma(1-w_2-\gamma w_1)+\alpha(1+\gamma w_2-w_1)\\
    f_2(w_2,w_1,\alpha)&=(1-\alpha)(1-w_2-\gamma w_1)-\alpha\gamma(1+\gamma w_2-w_1)\\
    b(\alpha)&=[(1+\gamma)\alpha -\gamma](1+\gamma w_2-w_1)-\\
    &~~~~~[(1-(1-\gamma)\alpha](1-w_2-\gamma w_1)].
\end{aligned}
\end{equation*}The constraint $\alpha_t \in [0,1]$ is enforced via the reflection terms $dK_{t,0}$ and $dK_{t,1}$. Note that $\alpha_t$ itself represents the proportion of effective core samples. For the sake of analytical convenience, however, we approximate it by the proportion of the actual core samples.

Note that the parameter $\tau>0$ actually models the impact of noise. The purpose of this modeling is to characterize the interference of noise with the learning of task-relevant features. As training proceeds, the model gradually strengthens its capacity to memorize noisy information, thereby diluting the effective signal strength of the true features. This phenomenon is closely related to overfitting: when the model memorizes noise rather than extracting robust representations, the learned parameters deviate from capturing the underlying task-relevant structure. $\sigma>0$ models the noise induced by random grouping. The evolution of the strategy proportion $\alpha$ depends not only on the comparison of average payoffs between strategies at the current time, but also on the randomness introduced by grouping.

Next, we use the difference in training intensities between the shortcut sub-network and the core sub-network, i.e., $w_2-w_1$, as a proxy for shortcut bias. We then examine the effects of the coefficients $\tau$ and $\sigma$ on the evolution of the equation, thereby revealing how data noise and optimization noise respectively facilitate or suppress the formation of shortcut bias.
\begin{theorem}
    For a fixed $\alpha\in(0,1)$, the term $w_2(\infty)-w_1(\infty)$ is non-decreasing with respect to $\tau$ for $\tau\in [\tau_c,\infty)$ when $w_2(0)\geqslant \frac{1-\gamma}{1+\gamma^2}$ . For a fixed $\tau$, if $$(1+\gamma)(1-w_2(t)-\gamma w_1(t))<(1-\gamma)(1+\gamma w_2(t)-w_1(t)),$$ the term $\mathbb{E}[w_2(\infty)-w_1(\infty)]$ is non-increasing with respect to $\sigma$.
\label{SDE-Thm}
\end{theorem}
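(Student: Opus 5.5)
Both claims reduce to the linear structure of the $(w_1,w_2)$ system, so the plan starts by writing it in terms of the difference and the sum. Set $D:=w_2-w_1$ and $S:=w_1+w_2$. With $\alpha$ frozen, $f_1$ and $f_2$ are affine in $(w_1,w_2)$:
\begin{equation*}
\begin{aligned}
f_2-f_1 &= (1-\alpha)(1-\gamma)(1-w_2-\gamma w_1)-\alpha(1+\gamma)(1+\gamma w_2-w_1),\\
f_1+f_2 &= (1-\alpha)(1+\gamma)(1-w_2-\gamma w_1)+\alpha(1-\gamma)(1+\gamma w_2-w_1).
\end{aligned}
\end{equation*}
This gives a planar linear system $\dot{\bb{w}}=e^{-\tau t}(A\bb{w}+\bb{c})$ with a constant matrix $A$. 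The time change $s=(1-e^{-\tau t})/\tau$ turns it into an autonomous linear ODE on $s\in[0,1/\tau)$. Hence
\[
\bb{w}(\infty)=\Phi(1/\tau)\,\bb{w}(0)+\int_0^{1/\tau}e^{A(1/\tau-u)}\bb{c}\,du,
\qquad \Phi(s)=e^{As}.
\]
Therefore $D(\infty)=G(1/\tau)$, where $G(s)$ is the $D$-component of the autonomous flow at time $s$.

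\textbf{Monotonicity in $\tau$.} Since $1/\tau$ is decreasing in $\tau$, the first claim amounts to showing that $G$ is non-increasing on $[0,1/\tau_c]$, i.e.
\[
\frac{dG}{ds}=(f_2-f_1)(\bb{w}(s))\le 0 \quad\text{along the trajectory started at } \bb{w}(0).
\]
At $s=0$ I would use $w_2(0)\ge \frac{1-\gamma}{1+\gamma^2}$ to show $(f_2-f_1)(\bb{w}(0))\le 0$. Taking the most favourable sign choices, the bracketed quantities combine so that $f_2-f_1$ is a decreasing affine function of $w_2$, and its zero is at most $\frac{1-\gamma}{1+\gamma^2}$.

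I would then show that the half-plane $\{f_2-f_1\le 0\}$ is forward invariant. The time derivative of $g:=f_2-f_1$ along the flow is $\nabla g\cdot(A\bb{w}+\bb{c})$, and by linearity this is an affine function of $g$ and $S$. The aim is to show this derivative is $\le 0$ whenever $g=0$. I expect this to be the main obstacle, because it may fail over part of the trajectory. That is exactly why the statement restricts to $\tau\ge\tau_c$: I would \emph{define} $\tau_c$ so that $1/\tau_c$ is the first time $s^*$ at which $g$ could change sign. On $[0,s^*]$ the inequality $g\le0$ holds by continuity, and this yields the claim on $[\tau_c,\infty)$.

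\textbf{Monotonicity in $\sigma$.} Now $\alpha_t$ is a reflected diffusion, and $D$ obeys
\[
dD=e^{-\tau t}\,h(\alpha_t,\bb{w})\,dt,\qquad h(\alpha,\bb{w})=f_2-f_1,
\]
which is affine in $\alpha$ with coefficient
\[
\partial_\alpha h=-(1-\gamma)(1-w_2-\gamma w_1)-(1+\gamma)(1+\gamma w_2-w_1).
\]
The drift $b(\alpha)$ is affine in $\alpha$ with slope
\[
\kappa=(1+\gamma)(1+\gamma w_2-w_1)+(1-\gamma)(1-w_2-\gamma w_1).
\]
The given hypothesis, $(1+\gamma)(1-w_2-\gamma w_1)<(1-\gamma)(1+\gamma w_2-w_1)$, should force $b$ to have an unstable interior zero, with $b<0$ near $\alpha=0$. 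Larger $\sigma$ then pushes more mass of the reflected process toward the core boundary $\alpha=1$. I would make this precise in two steps:

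1. Use the explicit stationary (or time-marginal) density of the reflected Ornstein–Uhlenbeck-type process, $p_\sigma(\alpha)\propto\exp\!\big(2\int b/\sigma^2\big)$, to show that $\mathbb{E}_\sigma[\alpha_t]$ is non-decreasing in $\sigma$.
2. Since $\partial_\alpha h<0$, conclude via
\[
\mathbb{E}[D(\infty)]=D(0)+\int_0^\infty e^{-\tau t}\,\mathbb{E}[h(\alpha_t,\bb{w}_t)]\,dt
\]
that $\mathbb{E}[D(\infty)]$ is non-increasing in $\sigma$.

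The coupling between $\bb{w}$ and $\alpha$ would be handled by a Grönwall or comparison argument, using that the linear feedback of $D$ on itself is stabilizing.
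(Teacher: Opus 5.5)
There are two genuine gaps, one in each half of the proposal.

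\textbf{The $\tau$-claim: the initial sign of $g$.} Your reduction is correct and matches the paper's. After $s=(1-e^{-\tau t})/\tau$ you have $D(\infty)=G(1/\tau)$, so monotonicity on $[\tau_c,\infty)$ is exactly $G'(s)=g(\bb{w}(s))\le 0$ on $[0,1/\tau_c]$. The paper diagonalises the symmetric $\bb{M}$ (eigenvalues $\lambda_1=-(1-\alpha)(1+\gamma^2)$, $\lambda_2=-\alpha(1+\gamma^2)$, equilibrium $\bb{w}_{\text{eq}}=(\frac{1+\gamma}{1+\gamma^2},\frac{1-\gamma}{1+\gamma^2})$). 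This gives $G'(s)=a_1b_1\lambda_1e^{\lambda_1 s}+a_2b_2\lambda_2e^{\lambda_2 s}$, which has at most one zero, so your forward-invariance step is unnecessary (and false in general). What fails is the initial sign. Both $1+\gamma w_2-w_1$ and $1-w_2-\gamma w_1$ vanish at $\bb{w}_{\text{eq}}$, so
\[
g(\bb{w})=\bigl[\alpha(1+\gamma)-\gamma(1-\gamma)(1-\alpha)\bigr](w_1-w_{1,\text{eq}})-\bigl[(1-\gamma)(1-\alpha)+\gamma(1+\gamma)\alpha\bigr](w_2-w_{2,\text{eq}}).
\]
The zero of $g$ in $w_2$ therefore depends on $(w_1,\alpha)$; it tends to $1-\gamma w_1$ as $\alpha\to 0$, not to something at most $\frac{1-\gamma}{1+\gamma^2}$. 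Under the standing constraint $w_1+w_2\le 1$ we have $w_1(0)<w_{1,\text{eq}}$, so the first term is positive for small $\alpha$. Concretely, take $\gamma=\tfrac12$, $\alpha=0.1$, $\bb{w}(0)=(0,0.4)$, so that $w_2(0)=\frac{1-\gamma}{1+\gamma^2}$. Then $G'(s)=0.27e^{-1.125s}-0.18e^{-0.125s}$, which is positive on $[0,\ln\tfrac32)$. Hence $D(\infty)$ is strictly decreasing in $\tau$ on $(1/\ln\tfrac32,\infty)$, and $1/\ln\tfrac32$ is exactly the paper's $\tau_c$; its existence condition $-a_2b_2\lambda_2/(a_1b_1\lambda_1)>0$ holds here. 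So $g(\bb{w}(0))\le 0$ cannot be derived from the hypotheses. The condition $w_2(0)\ge w_{2,\text{eq}}$ only forces $b_2<0$, which fixes the sign of the $\lambda_2$-term. You must assume $g(\bb{w}(0))=a_1b_1\lambda_1+a_2b_2\lambda_2\le 0$ outright; for example $b_1\ge 0$ gives $G'\le 0$ everywhere. The paper does not supply this either. Your bookkeeping $\frac{d}{d\tau}G(1/\tau)=-\tau^{-2}G'(1/\tau)$ is the correct one, whereas the paper treats $\frac{d}{d\tau}(w_2^*-w_1^*)$ as having the sign of $N(\tau)=G'(1/\tau)$.

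\textbf{The $\sigma$-claim: Step~1 has the wrong direction.} Your plan is again the paper's: the zero-flux density $p_\sigma\propto\exp(2G_b/\sigma^2)$ with $G_b(\alpha)=\int_0^\alpha b$, then $\partial_\alpha h=-\kappa<0$ and total expectation over $\bb{w}_t$. Which end carries the stationary mass is decided not by the unstable interior zero of $b$ (which you identify correctly) but by the sign of
\[
G_b(1)=\tfrac12\bigl[(1-\gamma)(1+\gamma w_2-w_1)-(1+\gamma)(1-w_2-\gamma w_1)\bigr].
\]
The stated hypothesis gives exactly $G_b(1)>0=G_b(0)$. Since $G_b$ is convex, its maximum on $[0,1]$ is at $\alpha=1$. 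So $\bar\alpha(\sigma)\to 1$ as $\sigma\to 0$, while $\bar\alpha(\sigma)\to\tfrac12$ as $\sigma\to\infty$ because the density flattens to uniform. Thus $\bar\alpha$ is not non-decreasing, and via $\partial_\alpha h=-\kappa$ the stationary route cannot make $\mathbb{E}[D(\infty)]$ non-increasing. Your picture of noise pushing mass toward $\alpha=1$ is a transient escape-from-the-$0$-well effect, not a stationary one. The paper reaches its direction only by asserting $G(1)<0$ from this same inequality (its $G$ is your $G_b$), which is a sign slip. A repair needs one of two things:
\begin{itemize}
\item the reversed inequality, together with a proof that $\operatorname{Cov}_\sigma(\alpha,G_b(\alpha))\le 0$ for every $\sigma$, since $\frac{d\bar\alpha}{d\sigma}=-\frac{4}{\sigma^3}\operatorname{Cov}_\sigma(\alpha,G_b(\alpha))$; or
\item a genuinely time-dependent comparison. $\mathbb{E}[D(\infty)]$ weights the law of $\alpha_t$ by $e^{-\tau t}$, i.e.\ mostly at early times, and $b$ moves with $\bb{w}_t$, so there is no stationary law of $\alpha$ to substitute.
\end{itemize}
The Gr\"onwall/comparison step you leave unspecified is exactly where that work would have to happen, and neither you nor the paper does it.
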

\section{Experimental Verification of Theory}

\subsection{The Initial Bias}
In this subsection, we demonstrate that neural networks readily seize upon raw-data features with the greatest variance; these features are often shortcuts—for instance, background or color. Because such cues dominate the overall variance of the dataset, they become a primary source of shortcut bias during the network’s early learning phase.

We carried out the validation on a binary-class Colored-MNIST dataset. In this setting, color serves as the shortcut feature, whereas the digit identity is the core feature: digits 0–4 are labeled 0 and digits 5–9 are labeled 1. The color–label association is highly imbalanced—90\% of the samples with label 0 are red and 90\% with label 1 are green—yielding a spurious-correlation strength of 0.9. By projecting the raw data onto a two-dimensional principal-component (PC) space, we found that these two PCs largely preserve color information, and this characteristic remains after the neural-tangent feature transformation (\textbf{Fig.~\ref{Fig_4}a,c}).

In order to confirm that color‐related features account for a larger share of the data variance, we carried out a variance decomposition of the raw dataset:

\begin{equation}
\begin{aligned}
    &\sum\limits_{i} \Vert \bb{X}_i-\bb{\mu}\Vert^2 = \sum\limits_{c} N_c\Vert \bb{\mu}_c-\bb{\mu}\Vert^2+\sum\limits_{d} N_d \Vert \bb{\mu}_d-\bb{\mu}\Vert^2 \\
    &~~~~~~ +\sum\limits_{c,d}N_{cd}\Vert \bb{\mu}_{cd}-\bb{\mu}_c-\bb{\mu}_d+\bb{\mu}\Vert^2+\sum\limits_{c,d}\sum\limits_{i\in c,d}\Vert \bb{X}_i-\bb{\mu}_{cd}\Vert^2.
\label{decomposition}
\end{aligned}
\end{equation}

Here, \(c \in \{\text{Red},\,\text{Green}\}\) denotes the color category, and
\(d \in \{\le 4,\,\geqslant 5\}\) denotes the digit category.
The first term in~(\ref{decomposition}) quantifies the variance explained by the deviation of each
color‐group centroid from the overall mean, whereas the second term quantifies the variance explained by the deviation of each digit‐group centroid
from the overall mean. \(N_{\alpha}\) denotes the sample size of group \(\alpha\), whereas
\(\mu_{\alpha}\) denotes the mean of group \(\alpha\). 

We obtained \(\frac{\sum\limits_c N_c\Vert \mu_c-\mu\Vert^2}{\sum\limits_i \Vert\bb{x}_i-\mu\Vert^2} = 25.43\,\%\) and \(\frac{\sum\limits_d N_d\Vert \mu_d-\mu\Vert^2}{\sum\limits_i \Vert\bb{x}_i-\mu\Vert^2}  = 16.61\,\%\);
which indicates that the color feature accounts for a larger
proportion of the overall variance.

\subsection{The Distinction Between Full-Batch and Mini-Batch Dynamics}
Next, we will demonstrate the differences between full batch training and mini-batch training during the training process. Specifically, we focus on the changes in the tangent mapping of the samples $\boldsymbol{X} \rightarrow \nabla_{\boldsymbol{\theta}} f(\boldsymbol{X};\bb{\theta})$ throughout the training process.

We use the Color-MNIST dataset (with gaussion noise of level 0.1) and apply both full-batch gradient descent and mini-batch gradient descent (\textbf{Fig.~\ref{Fig_4}b,d}). In both cases, the training accuracy reaches 100\%, and the training loss decreases to 0. However, we observe notable differences in the dynamics of the neural tangent mapping. After training with full-batch gradient descent, we still observe a clear separation based on color in the subspace of the neural tangent projection, but the digit features are somewhat confused. In contrast, when using mini-batch gradient descent (with a batch size of 128), we observe the opposite: the neural tangent projection of the samples is now able to distinguish the categories well, but the color features become confused. This suggests that mini-batch gradient descent enables the network to capture the essential features more effectively, while full-batch gradient descent continues to train on shortcut sub-networks. The test accuracy (on i.i.d. data) of Full-Batch is 97.10\%, while the test accuracy of Small-Batch is 98.06\%. This further indicates that the noise introduced by mini-batch training effectively alleviates shortcut bias.

\subsection{Subnetwork Hypothesis}
To test the hypothesis that a neural network employs separate subnetworks to model different features, we first train a standard classifier on the Colored‑MNIST dataset to fit the overall labels. Next, with the backbone frozen, we retrain only the final layer twice—once for the digit‑classification task and once for the color‑classification task—adding an L1 penalty to the new weights. Visualising the resulting weight distributions reveals that the network relies on distinct groups of neurons to represent the two features. Our original network had 32 neurons in the last hidden layer. After applying L1-regularized fine-tuning on specific feature extraction, we found that only 4 neurons were involved in feature extraction. Among them, neurons \#1, \#10, and \#27 were responsible for modeling core features, while neuron \#14 was responsible for modeling shortcut features. This validates our hypothesis of feature sub-networks: when the number of task-relevant features is small (much fewer than the number of hidden neurons), different neurons will be responsible for modeling different features.
\begin{figure*}[!ht]
    \centering
    \includegraphics[width=1.04\linewidth]{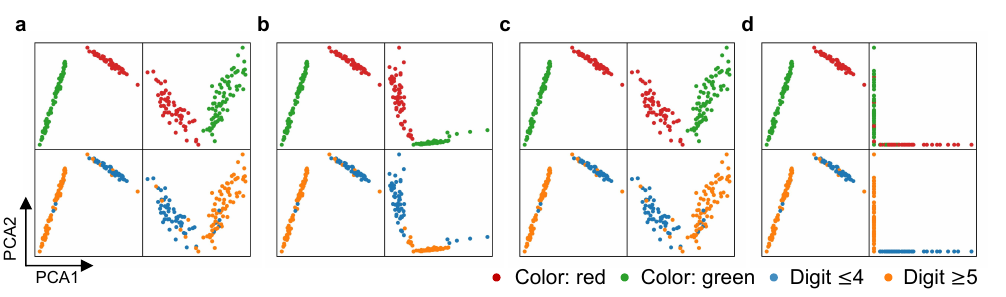}
    \caption{Simulating full-batch versus mini-batch training on the Colored MNIST dataset with a fully connected neural network. \textbf{a} and \textbf{b}, PCA visualization of the original data and the corresponding model gradients for each data point under the full-batch setting, at epoch 0 and the final epoch, respectively. \textbf{c} and \textbf{d}, similar illustration under the mini-batch setting. }
    \label{Fig_4}
\end{figure*}

\subsection{Simulation of Stochastic Differential Dynamics}
We conducted computational simulations based on the stochastic differential equation formulated in \textbf{Sec.~\ref{dy_w}}. The results highlight the roles of the data noise coefficient $(\tau)$ and the optimization noise coefficient $(\sigma)$ in shaping the training intensity of feature sub-networks. We initialized $w=(0.02,0.4)$ to highlight that the model first captures the shortcut feature, resulting in a higher initial training intensity for the shortcut sub-network. Fixing the optimization noise at $\sigma=0.4$ Comparing two levels of noise intensity ($\tau=0.3$ vs.~$\tau=0.8$), we find that with lower noise ($\tau=0.3$), the cumulative training intensity of the core sub-network steadily grows and eventually overtakes the shortcut sub-network (\textbf{Fig.~\ref{Fig_5}a}). In contrast, with higher noise ($\tau=0.8$), the core sub-network is still trained but ultimately dominated by the shortcut sub-network. These observations align with the results in \textbf{Fig.~\ref{Fig_1}a,b}. Next, we simulate the impact of the data noise coefficient $\tau$ and the optimization noise coefficient $\sigma$ on $\mathbb{E}[w_2(\infty)-w_1(\infty)]$. As $\tau$ increases (especially when $\tau$ is greater than a sufficiently large number), $\mathbb{E}[w_2(\infty)-w_1(\infty)]$ gradually becomes larger, indicating that data noise promotes the formation of shortcut bias (\textbf{Fig.~\ref{Fig_5}b}). Meanwhile, as $\sigma$ increases, $\mathbb{E}[w_2(\infty)-w_1(\infty)]$ decreases, implying that stochastic optimization can effectively mitigate shortcut bias. This is consistent with the pattern revealed in \textbf{Fig.~\ref{Fig_1}}, and it further validates our argument in \textbf{Thm.~\ref{SDE-Thm}}. 

The effect of stochastic optimization on mitigating shortcut bias is mainly reflected in its influence on the evolution of sample strategies. Fixing the data noise at $\tau=0.3$, when the stochastic optimization noise is small ($\sigma=0.1$), the sample strategies are rapidly dominated by the shortcut strategy, which is manifested by the proportion of the core strategy $\alpha$ approaching zero. In contrast, when the stochastic optimization noise is larger ($\sigma=0.4$), the sample strategies are gradually dominated by the core strategy, confirming its role as the stochastically stable solution that governs most of the training process (\textbf{Fig.~\ref{Fig_5}c}). We project the training intensities of the two feature sub-networks onto a two-dimensional plane $(w_2,w_1)$. This clearly illustrates that when the stochastic optimization noise is small, the shortcut sub-network continues to be reinforced; whereas when the stochastic optimization noise is large, there exists a certain point at which the system transitions to forgetting the shortcut sub-network, reflecting the dominance of the core strategy (\textbf{Fig.~\ref{Fig_5}d}).
\begin{figure*}[!ht]          
  \centering                 
  \includegraphics[width=0.85\linewidth]{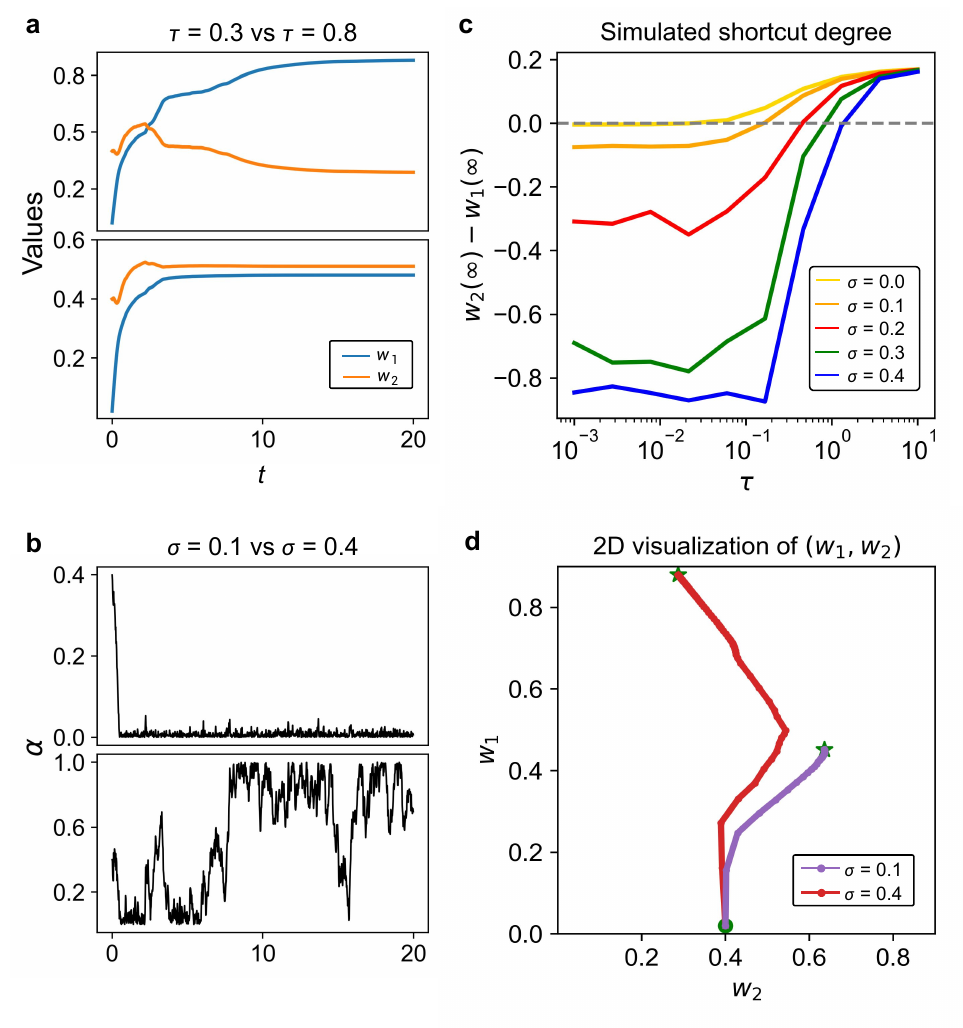} 

  \caption{Numerical simulation of the stochastic differential equation (SDE) model. 
    \textbf{a}, Evolution of the strengths of the two subnetworks ($w_1$ and $w_2$) over iterations under two different data noise levels $(\tau=0.3~\text{and}~\tau=0.8)$. 
    \textbf{b}, The proportion of core strategy evolution $(\alpha)$ as a function of optimization noise. 
    \textbf{c}, The shortcut bias, quantified by the difference in subnetwork strengths ($\mathbb{E}[w_2(\infty) - w_1(\infty)]$), is plotted against varying data noise (x-axis) and optimization noise levels (indicated by different colors).
    \textbf{d}, A 2D visualization of the optimization landscape with the strengths of the two subnetworks plotted on the two axes.}
     
  \label{Fig_5}   
\end{figure*}

\section{Discussion}
In this paper, we employ the evolutionary game theory to establish a dynamical model of shortcut feature formation and provide a mathematically precise characterization of both shortcut and core features. Our analysis shows that, from the perspective of evolutionarily stable strategies, stochastic optimization plays a direct role in mitigating shortcut bias. Furthermore, by employing continuous-time modeling with stochastic differential equations, we identify a potential mechanism through which data noise may hinder the learning of core features. This work also provides theoretical insights into techniques for mitigating shortcut bias.

\subsection{Insights for Mitigating Shortcut Bias}
From the perspective of evolutionary game theory, to make the core strategy stochastically stable, a direct approach is to increase the effective number of samples with the core strategy. In other words, by enlarging $z_t=\alpha N$, here
\[
    \alpha = \frac{\sum\limits_{i\in\Omega_1}\bigl\vert\nabla_f \ell(\bb{X}_i;\bb{\theta}^{(t)})\bigr\vert}{\sum\limits_{i\in\Omega_1}\bigl\vert\nabla_f \ell(\bb{X}_i;\bb{\theta}^{(t)})\bigr\vert+\sum\limits_{j\in\Omega_2}\bigl\vert\nabla_f \ell(\bb{X}_j;\bb{\theta}^{(t)})\bigr\vert}.
\]
The system is encouraged to reach the basin of attraction of the core strategy at a faster rate. A large body of existing techniques for mitigating shortcut bias is grounded in this perspective. For example, one line of work leverages data augmentation to artificially expand the dataset~\citep{srivastava2020robustness,yao2022improving,puli2024nuisances} or to generate counterfactual samples via causal inference~\citep{Kaushik2020Learning,zeng2020counterfactual}; another line of work adopts loss function regularization to increase the weights of hard samples~\citep{bahng2020learning,levy2020large,liu2021just,pezeshki2021gradient}.

Moreover, the reason why the core strategy can be payoff-dominant but not risk-dominant lies in the interference of the off-diagonal elements in the payoff matrix, which in this paper is reflected by the coefficient $\gamma$. In other words, samples adopting the core strategy are not restricted solely to the core sub-network, and there also exists interference between the core and shortcut sub-networks ($\mathcal{E}_3$ in \textbf{Fig.}~\ref{fig:subnetwork}). Therefore, if more disentangled representations can be learned, it may be possible for the core strategy to become both payoff-dominant and risk-dominant. Another main line of research precisely focuses on this direction. Debiasing can be achieved through task decomposition or feature subspace decomposition~\citep{zhang2021deep,lee2021learning,yang2022chroma}.

Furthermore, we can amplify the payoff gap between the core strategy and the shortcut strategy ($a$ and $d$), making the payoff advantage of the core strategy more pronounced in the evolutionary process. This, in turn, drives the stochastic dynamics to evolve in a manner dominated by the core strategy. ETF-Debias is an example of such an approach~\citep{wang2024navigate}: it employs the equiangular tight frame formed by shortcut features as a predictive representation of potential shortcut features, enabling the model to recognize earlier that shortcut features may be misleading, and thereby accelerating the learning of core feature representations. However, for scenarios where shortcut bias is unknown, this approach has its limitations. If the shortcut features cannot be explicitly specified in advance but instead must be recognized spontaneously by the model, a trade-off arises. Forcing the model to become aware that core features may be more important requires it to first learn more shortcut features. While this creates a greater game-theoretic advantage, it may also lead to irreversibility in feature learning. How to better balance this trade-off is one of our future research directions.

\subsection{Future directions}
The characteristics of shortcut learning hinder the safety, trustworthiness, and interpretability of models. This work provides an evolutionary dynamical perspective on shortcut bias. From this viewpoint, we systematically analyze how full-batch training and mini-batch training may drive the system toward different stochastically stable states, and we distinguish the roles of data noise and optimization noise through a continuous differential equation model. The evolutionary game-theoretic framework introduced in this paper is a powerful theoretical tool. However, some open questions remain. For instance, the interaction coefficient $\gamma$ influences both the interplay between strategies and the transitions of their stable states, and it is also related to the disentangled representation of features. But we have not analyzed this in detail. Moreover, our modeling is coarse-grained and does not investigate the effects of specific model components on shortcut bias. Our analytical framework is also limited to supervised learning, and whether it can be smoothly extended to other learning paradigms is another question worth exploring. Finally, how to utilize game theory to intervene in the training dynamics of models and thereby develop new algorithms to overcome shortcut bias is a very promising direction.

\newpage
\newpage 
\appendix
\section{Proofs of Theorems in the Main Text}
\textbf{Proof of Thm.~\ref{the-core}}

\begin{proof}
    Let $V^1$ be core feature, and $V^2$ be any other feature. Accoding to the difinition of the core feature, $\delta_{Y(\bb{X}_i), Y(\bb{X}_j})=\delta_{V^1(\bb{X}_i), V^1(\bb{X}_j)}$, 
    \begin{equation}
    \begin{aligned}
    S_1(f) := & \mathbb{E}\left[\delta_{f(\bb{X}_i),f(\bb{X}_j)}-\delta_{Y(\bb{X}_i),Y(\bb{X}_j)}\mid (X_i,X_j) \in \Omega_1\right]\\
    =& \mathbb{E}\left[\delta_{f(\bb{X}_i),f(\bb{X}_j)}-0\mid (X_i,X_j) \in \Omega_1\right]\\
    =& \mathbb{E}\left[\delta_{f(\bb{X}_i),f(\bb{X}_j)}\mid (X_i,X_j) \in \Omega_1\right]\geqslant 0,
    \end{aligned}
    \end{equation}
    and \begin{equation}
    \begin{aligned}
    S_2(f) := & \mathbb{E}\left[\delta_{f(\bb{X}_i),f(\bb{X}_j)}-\delta_{Y(\bb{X}_i),Y(\bb{X}_j)}\mid (X_i,X_j) \in \Omega_2\right]\\
    =& \mathbb{E}\left[\delta_{f(\bb{X}_i),f(\bb{X}_j)}-1\mid (X_i,X_j) \in \Omega_2\right]\leqslant 0.\\
    \end{aligned}
    \end{equation}
    Therefore $S_1(f)\geqslant S_2(f)$, which indicates that $V^1$ is not a shortcut feature related to $V^2$.
\end{proof}

\noindent\textbf{Proof of Thm.~\ref{the-shortcut}}

\begin{proof}
    We adopt a constructive proof by constructing $\hat{V}$ such that $\delta_{\hat{V}(\bb{X}_i),\hat{V}(\bb{X}_j)}=\delta_{f(\bb{X}_i),f(\bb{X}_j)}$. Define     \begin{equation}\begin{aligned}
    \Omega_1 &= \left\{(\bb{X}_i, \bb{X}_j) \in \mathcal{X} \times \mathcal{X} \mid \hat{V}(\bb{X}_i) \neq \hat{V}(\bb{X}_j),\ V^{\operatorname{core}}(\bb{X}_i) = V^{\operatorname{core}}(\bb{X}_j)\right\}, \\
    \Omega_2 &= \left\{(\bb{X}_i, \bb{X}_j) \in \mathcal{X} \times \mathcal{X} \mid \hat{V}(\bb{X}_i) = \hat{V}(\bb{X}_j),\ V^{\operatorname{core}}(\bb{X}_i) \neq V^{\operatorname{core}}(\bb{X}_j)\right\}.
    \end{aligned}\end{equation}
    and given that $0<\mathbb{E}\left[\delta_{f(\bb{X}_i), Y(\bb{X}_i)}\right]<1$, then $\Omega_1,\Omega_1\neq \emptyset$. Therefore we got $S_1(f)=\mathbb{E}\left[\delta_{f(\bb{X}_i),f(\bb{X}_j)}-1\mid (\bb{X}_i,\bb{X}_j)\sim\Omega_1\right]=-1$ and $S_2(f)=0$, which indicates that $\hat{V}\ll V^{\operatorname{core}}.$
\end{proof}

\noindent\textbf{Proof of Thm.~\ref{the-subspace}}

\begin{proof}
Since $\|\boldsymbol{x}_i\|_2 = 1$ for all $i$, and according to the kernel definition 
\begin{equation}
\bb{K}_{\theta} = \|\bb{x}_i\|_2 \|\bb{x}_j\|_2 \hat{h}(\left\langle \bb{x}_i, \bb{x}_j\right\rangle),
\end{equation}
where 
\begin{equation}
\hat{h}(u) = \frac{3}{4\pi}u^2 + \frac{1}{2}u + \frac{1}{2\pi},
\end{equation}
we obtain the decomposition
\begin{equation}
\begin{aligned}
\bb{K}_{\bb{\theta}} &= \frac{1}{2}\bb{K}_X + \frac{3}{4\pi}\bb{K}_X \circ \bb{K}_X + \frac{1}{2\pi}\bb{11}^\top.\\
&=\frac{1}{2}(\bb{K}_{\bb{X}}+\frac{3}{2\pi}\bb{K}_{\bb{X}}\circ\bb{K}_{\bb{X}}+\frac{1}{\pi}\bb{11}^\top).
\end{aligned}
\end{equation} Define
\(\bb{A}=\tfrac12\boldsymbol K_{\!X}\) and
\(\bb{E}=\tfrac{3}{4\pi}(\boldsymbol K_{\!X}\!\circ\!\boldsymbol K_{\!X})
 +\tfrac{1}{2\pi}\boldsymbol 1\boldsymbol 1^{\!\top}\)
and the orthogonal projectors
\(\bb{P}:=\bb{UU}^{\!\top}\) and \(\bb{P}_{\perp}:=\bb{I}-\bb{P}\), where $\bb{U} := [\bb{u}_1;\cdots;\bb{u}_r]$, and 
$\bb{u}_1,\cdots,\bb{u}_r$ are the first $r$ eigenvectors of $\bb{K}_X$.
\smallskip
The Davis–Kahan theorem states that
\begin{equation}
\|\sin\Theta(\mathcal U,\mathcal V)\|_2
\;\le\;
\frac{\|\bb{P}_{\perp}\bb{E}\,\bb{P}\|_2}{\delta}.
\end{equation}

Because the data are centered, $\sum\limits_{i=1}^N \bb{x}_i=0$, we obtain that $\forall~i$
\begin{equation}
(\bb{K}_{\bb{X}}\bb{1})_i =\sum\limits_{j=1}^N (\bb{K}_{\bb{X}})_{ij}=\sum\limits_{j=1}^N \left\langle \bb{x}_i,\bb{x}_j\right\rangle =\left\langle \bb{x}_i,\sum\limits_{j=1}^N \bb{x}_j\right\rangle = 0.
\end{equation}

Therefore,
\(\boldsymbol 1\perp\mathcal U\).
Hence
\(\bb{P}_{\perp}\boldsymbol 1\boldsymbol 1^{\!\top}\bb{P}=\boldsymbol 0\).

By the Gershgorin theorem, we get
\begin{equation}
\|\bb{P}_{\perp}(\boldsymbol K_{\!X}\!\circ\!\boldsymbol K_{\!X})\bb{P}\|_2
\;\le\;
\|\boldsymbol K_{\!X}\!\circ\!\boldsymbol K_{\!X}\|_2
\;\le\;
1+(N-1)\rho^{2}.
\end{equation}

Therefore
\begin{equation}
\|P_{\perp}E\,P\|_2
\;\le\;
\frac{3}{4\pi}\bigl[1+(N-1)\rho^{2}\bigr],
\end{equation}
and inserting this bound into (DK‐cluster) yields
\begin{equation}
\|\sin\Theta(\mathcal U,\mathcal V)\|_2
\;\le\;
\frac{\tfrac{3}{4\pi}[1+(N-1)\rho^{2}]}{\delta}
\;=\;
\frac{3}{2\pi}\,
\frac{1+(N-1)\rho^{2}}{\lambda_{r}-\lambda_{r+1}},
\end{equation}
as claimed.
\end{proof}

\noindent\textbf{Proof of Thm.~\ref{the-noise}}

\begin{proof}
The proof relies on specializing the general results for finite-rank additive perturbations of large random matrices, as established by Benaych-Georges and Nadakuditi (2011), to the specific case of a Wigner matrix. The limiting spectral distribution of the noise matrix $\bb{K}_W$ is the Wigner semicircle law, $\mu_{sc}$.

The general theory provides expressions for the emergent eigenvalues and their corresponding eigenvectors in terms of the Cauchy transform of the limiting spectral measure of the unperturbed matrix. For a measure $\mu$, the Cauchy transform is defined as $G_\mu(z) = \int \frac{1}{z-t} d\mu(t)$ for $z \notin \text{supp}(\mu)$.

According to \textbf{Thm. 2.1} of~\cite{benaych2012singular}, an eigenvalue $\beta_k$ of the perturbing matrix gives rise to an emergent eigenvalue $\rho$ of the perturbed matrix, located outside the bulk spectrum, if and only if $\beta_k$ is larger than a critical threshold. The location of this emergent eigenvalue $\rho$ is determined by the equation:
\begin{equation} \label{eq:general_eigenvalue}
\rho = G_{\mu_{sc}}^{-1}(1/\beta_k), \quad \text{or equivalently,} \quad G_{\mu_{sc}}(\rho) = \frac{1}{\beta_k}.
\end{equation}

For a Wigner matrix with entry variance $\sigma^2$, the limiting spectral measure $\mu_{sc}$ is the semicircle law supported on $[-2\sigma, 2\sigma]$. Its Cauchy transform for $z \in \mathbb{C} \setminus [-2\sigma, 2\sigma]$ is given by:
\begin{equation} \label{eq:cauchy}
G_{\mu_{sc}}(z) = \frac{z - \sqrt{z^2 - 4\sigma^2}}{2\sigma^2}.
\end{equation}
The threshold condition $\beta_k > \sigma$ ensures that a solution $\rho > 2\sigma$ exists. We substitute (\ref{eq:cauchy}) into (\ref{eq:general_eigenvalue}) to solve for $\rho$:
\begin{equation}
\frac{\rho - \sqrt{\rho^2 - 4\sigma^2}}{2\sigma^2} = \frac{1}{\beta_k}.
\end{equation}
Solving for $\rho$ gives the asymptotic location of the eigenvalue:
\begin{equation}
\rho = \beta_k + \frac{\sigma^2}{\beta_k}.
\end{equation}
This concludes the first part of the proof.

According to \textbf{Thm. 2.2} of~\cite{benaych2012singular}, the squared inner product (projection) between the eigenvector $\hat{\bb{v}}_k$ of the perturbed matrix and the original signal eigenvector $\bb{v}_k$ converges to:
\begin{equation} \label{eq:general_eigenvector}
|\langle \bb{v}_k, \hat{\bb{v}}_k \rangle|^2 \xrightarrow{a.s.} -\frac{1}{\beta_k^2 G'_{\mu_{sc}}(\rho)},
\end{equation}
where $G'_{\mu_{sc}}(\rho)$ is the derivative of the Cauchy transform evaluated at the emergent eigenvalue $\rho$.

First, we compute the derivative of the semicircle Cauchy transform (\ref{eq:cauchy}):
\begin{equation}
G'_{\mu_{sc}}(z) = \frac{d}{dz} \left( \frac{z - \sqrt{z^2 - 4\sigma^2}}{2\sigma^2} \right) = \frac{1}{2\sigma^2} \left( 1 - \frac{z}{\sqrt{z^2 - 4\sigma^2}} \right).
\end{equation}
Next, we evaluate the term $\sqrt{\rho^2 - 4\sigma^2}$ using the result from Part 1:
\begin{equation}
\rho^2 - 4\sigma^2 = \left(\beta_k + \frac{\sigma^2}{\beta_k}\right)^2 - 4\sigma^2 = \beta_k^2 + 2\sigma^2 + \frac{\sigma^4}{\beta_k^2} - 4\sigma^2 = \beta_k^2 - 2\sigma^2 + \frac{\sigma^4}{\beta_k^2} = \left(\beta_k - \frac{\sigma^2}{\beta_k}\right)^2.
\end{equation}
Since $\beta_k > \sigma$, we have $\sqrt{\rho^2 - 4\sigma^2} = \beta_k - \frac{\sigma^2}{\beta_k}$. Substituting this into the derivative expression at $z=\rho$:
\begin{equation}
G'_{\mu_{sc}}(\rho) = \frac{1}{2\sigma^2} \left( 1 - \frac{\beta_k + \sigma^2/\beta_k}{\beta_k - \sigma^2/\beta_k} \right) = \frac{1}{2\sigma^2} \left( \frac{(\beta_k - \sigma^2/\beta_k) - (\beta_k + \sigma^2/\beta_k)}{\beta_k - \sigma^2/\beta_k} \right)
\end{equation}
\begin{equation}
= \frac{1}{2\sigma^2} \left( \frac{-2\sigma^2/\beta_k}{(\beta_k^2 - \sigma^2)/\beta_k} \right) = \frac{-1}{\beta_k^2 - \sigma^2}.
\end{equation}
Finally, we substitute this result back into the general formula (\ref{eq:general_eigenvector}):
\begin{equation}
|\langle \bb{v}_k, \hat{\bb{v}}_k \rangle|^2 \xrightarrow{a.s.} -\frac{1}{\beta_k^2 \left( \frac{-1}{\beta_k^2 - \sigma^2} \right)} = \frac{\beta_k^2 - \sigma^2}{\beta_k^2} = 1 - \frac{\sigma^2}{\beta_k^2}.
\end{equation}
This completes the proof.
\end{proof}
\noindent\textbf{Proof of Lem.~\ref{lemma-payoff}}
\begin{proof}
    Now, we will quantify the mismatch between the sample's strategy $\nabla_{\bb{\theta}} f$ and the network's update direction $\Delta\bb{\theta}$. For instance, consider a class-$1$ sample that adopts the \emph{core} tangent feature
$\nabla_{\bb{\theta}} f^{(\text{c},+)}$ to steer learning,
while the update is in fact dominated by the \emph{shortcut} subnetwork, i.e, $\Delta\bb{\theta}^{(t-1)} = \eta^{(t-1)} \bb{V}_1^{(t-1)}$.
Its instantaneous payoff at step~$t$ is then approximated by
\begin{equation}
\begin{aligned}
\operatorname{payoff}^{(t)}(\bb{X}_i)
&= \frac{1}{2N}\Bigl(y_i - f^{(t-1)}(\bb{X}_i;\bb{\theta}_i^{(t-1)})\Bigr)^2 \\
&\quad - \frac{1}{2N}\Bigl(
    y_i - f^{(t-1)}(\bb{X}_i;\bb{\theta}^{(t-1)})
    - \eta^{(t-1)} \bigl\langle
      \nabla_{\bb{\theta}} f(\bb{X}_i;\bb{\theta}^{(t-1)})^{(\text{c},+)};
       \bb{V}_1^{(t-1)}
     \bigr\rangle
     + o(\eta^{(t-1)})
  \Bigr)^2 \\
&= \frac{\eta^{(t-1)}}{N}
   \Bigl\langle
     \bb{V}_1^{(t)},
    \nabla_{\bb{\theta}} f(\bb{X}_i;\bb{\theta}^{(t-1)})^{(\text{c},+)}
   \Bigr\rangle
   \Bigl(1 - f^{(t-1)}(\bb{X}_i)\Bigr)
   + o(\eta^{(t-1)}) \\
&= -\frac{\eta^{(t-1)}}{N}\,\frac{\gamma M}{\sqrt{1+\gamma^2}}
   \Bigl(1 - f^{(t-1)}(\bb{X}_i)\Bigr)
   + o(\eta^{(t-1)}) \, .
\end{aligned}
\end{equation}
Expanding $f^{(t-1)}(\bb{X}_i)$ we get
\begin{equation}
\begin{aligned}
    \operatorname{payoff}^{(t)}(\bb{X}_i)
  \;&=\; -\frac{ \eta^{(t-1)}}{N}\frac{\gamma M}{\sqrt{1+\gamma^2}}(1-f^{(1)}(\bb{X}_i)-\sum\limits_{s=1}^{t-2}\left\langle \nabla_{\bb{\theta}} f(\bb{X}_i;\bb{\theta}^{(s)})^{(\text{c},+)},\Delta \bb{\theta}^{(s)}\mid_{\mathcal{E}_1\cup \mathcal{E}_2}\right\rangle)\\
  &~~~~~~+o(\sum\limits_{s=1}^{t-1}  \eta^{(s)})\\
  &=-\frac{\eta^{(t-1)}}{N}\frac{\gamma M}{\sqrt{1+\gamma^2}}(1-f^{(1)}(\bb{X}_i)+\frac{\gamma}{\sqrt{1+\gamma^2}}M w_2^{(t)}-\frac{1}{\sqrt{1+\gamma^2}}Mw_1^{(t)})\\
  &~~~~~~+o(\sum\limits_{s=1}^{t-1}\eta^{(s)}).
\end{aligned}
\end{equation}



Suppose $f^{(1)}=o(1)$, and omitting the constant term  $\frac{\eta^{(t-1)}M}{N\sqrt{1+\gamma^2}}$,  carrying out the same calculation for every strategy–subnetwork pair yields the
following payoff matrix:
\begin{equation}
\begin{aligned}
&
~~~~~\begin{pmatrix}
    \operatorname{pay-off}\left(\left\langle\nabla_{\bb{\theta}}f^c(\bb{X}_i;\bb{\theta}^{(t)}),\Delta \boldsymbol{\theta}^{(\operatorname{c},t)}\right\rangle,\cdot\right) & \operatorname{pay-off}\left(\left\langle \nabla_{\bb{\theta}}f^c(\bb{X}_i;\bb{\theta}^{(t)}),\Delta \boldsymbol{\theta}^{(\operatorname{sc},t)}\right\rangle,\cdot\right) \\
    \operatorname{pay-off}\left(\left\langle\nabla_{\bb{\theta}}f^{\operatorname{sc}}(\bb{X}_i;\bb{\theta}^{(t)}),\Delta \boldsymbol{\theta}^{(\operatorname{c},t)}\right\rangle,\cdot\right) & \operatorname{pay-off}\left(\left\langle\nabla_{\bb{\theta}}f^{sc}(\bb{X}_i;\bb{\theta}^{(t)}),\Delta \boldsymbol{\theta}^{(\operatorname{sc},t)}\right\rangle,\cdot\right)
\end{pmatrix}\\
&=\begin{pmatrix}
  1+\gamma w_{2}^{(t)}-w_{1}^{(t)} &
  -\gamma\bigl(1+\gamma w_{2}^{(t)}-w_{1}^{(t)}\bigr) \\
  \gamma\bigl(1-w_{2}^{(t)}-\gamma w_{1}^{(t)}\bigr) &
  1-w_{2}^{(t)}-\gamma w_{1}^{(t)}
\end{pmatrix},
\end{aligned}
\end{equation}
where we rewrite $w_1^{(t)},w_2^{(t)}$ as $w_1^{(t)}:=\frac{1}{\sqrt{1+\gamma^2}}M w_1^{(t)}$, $w_2^{(t)}:=\frac{1}{\sqrt{1+\gamma^2}}M w_2^{(t)}$.
\end{proof}
\noindent\textbf{Proof of Thm.~\ref{Thm-GD}}

\begin{proof} Consider the Darwinian property $\text{sign}(b(z)-z)=\text{sign}(\pi_A(z)-\pi_B(z))$, and the mutation $z_{t+1}=b(z_t)+x_t-y_t,~x_t\sim \operatorname{Bin}(N-b(z_t),\varepsilon)$ and $y_t\sim \operatorname{Bin}(b(z_t),\varepsilon)$. The dynamical system defines a time-inhomogeneous Markov process:
\begin{equation}
p_{ij}^{(t)} = \mathbb{P}(z_{t+1}=j\mid z_t=i).
\end{equation}
with all elements in the matrix $\bb{P}^{(t)}=[p_{ij}^{(t)}]$ are positive. Therefore for a fixed $\bb{P}$, the Markov chain has a unique stationary distribution, and it has the stability:
\begin{equation}
\forall ~\bb{\mu}\in\Delta^N, \bb{\mu}\bb{P}^t\rightarrow \bb{\mu}^*,~\text{as}~t\rightarrow \infty.
\end{equation}
where $\Delta^N =\{(\mu_0,\mu_1,\cdots, \mu_{N-1}, \mu_N)\mid \sum\limits_{i=0}^N \mu_i = 1, \mu_i\geqslant 0\}$.
Denote the set of long run equilibria is $C(\bb{\mu}^*)=\{i\in Z\mid \mu^*_i>0\}.$
\\

For each state $z\in [N]$, we construct a tree with $z$ as the unique root node and for each edge \( j\rightarrow i \) on the tree, we define its weight as \( p_{ij} \).   We call it as $z-$tree. Denote the set of all $z$-tree by $\mathcal{H}_z$. Then we take the summation of the products of the edge weight for all $z-$trees:

\begin{equation}
q_z = \sum\limits_{T\in \mathcal{H}_z} \prod_{(j\rightarrow i)\in T}~p_{ij}.
\end{equation}
Let $\bb{q}$ be the vector, where $\bb{q} \equiv (q_0,\cdots, q_N)$. We can show that $\bb{q}$ is proportional to $\mu^*$ (\textbf{Lem.~\ref{prop}}). Therefore $\mu_z^*=\lim\limits_{\varepsilon\rightarrow 0}\lim\limits_{t\rightarrow \infty}  \mu_z=\lim\limits_{\varepsilon\rightarrow 0}\frac{q_z(\varepsilon)}{\sum\limits_{i} q_i(\varepsilon)}$. If $\forall z, \lim\limits_{\varepsilon\rightarrow 0}q_z(\varepsilon)=0$, then for $z\in C(\mu^*)$, i.e., $\mu_{z^*} > 0$, $q_{z^*}$ converges to zero at the lowest rate. 

Thus, the problem is now reduced to analyzing the order of convergence of $q(z)$ for different $z$. Let  $q_z=\Theta(\varepsilon ^{E_{\mathcal{H}_z}})$ and $p_{ij}=\Theta(\varepsilon^{E_{ij}})$. We denote the exponential term $E_{ij}$ as the mutation energy of the edge $(j\rightarrow i)$, and $E_{\mathcal{H}_z}$ as the mutation energy of $\mathcal{H}_z$, i.e., the infimum of the mutation energies of all $z-$trees. We are given the quantity $q_z$ as a sum of terms with different orders of magnitude
\begin{equation}
q_z = \sum\limits_{T\in \mathcal{H}_z} \Theta \left(\varepsilon^{\sum\limits_{(j\rightarrow i)\in T}E_{ij}}\right).
\end{equation}
As $\varepsilon\rightarrow 0$, the asymptotic behavior of the sum is dominated by the term with the smallest exponent. It therefore follows that :
\begin{equation}
    E_{\mathcal{H}_z} = \min\limits_{T\in \mathcal{H}_z} E_T = \min\limits_{T\in \mathcal{H}_z}\sum\limits_{(j\rightarrow i)\in T} E_{ij}.
\end{equation}

We then claim that the mutation energy of the edge $E_{ij}=\vert b(i)-j\vert$ (\textbf{Lem.~\ref{weight}}).

Now we calculate \(\pi_A\) and \(\pi_B\) according to equation~(\ref{eqn-dy}), and let \(z^*\) satisfy \(\pi_A = \pi_B\), then \begin{equation}z^* = N\frac{1}{1+\frac{(1+\gamma  w_2^{(t)}-w_1^{(t)})-\gamma(1-w_2^{(t)}-\gamma w_1^{(t)})}{1-w_2^{(t)}-\gamma w_1^{(t)}+\gamma (1+\gamma w_2^{(t)}-w_1^{(t)})}}.\end{equation}   If $w_2^{(t)}-w_1^{(t)}\leqslant \frac{2\gamma}{1-\gamma^2}\beta$ and $w_2^{(t)}+w_1^{(t)}\leqslant 1-\beta$, $w_2^{(t)},w_1^{(t)}\geqslant 0,0<\beta,\gamma<1$, then \begin{equation}\frac{(1+\gamma  w_2^{(t)}-w_1^{(t)})-\gamma(1-w_2^{(t)}-\gamma w_1^{(t)})}{1-w_2^{(t)}-\gamma w_1^{(t)}+\gamma (1+\gamma w_2^{(t)}-w_1^{(t)})}<1,~z^*>\frac{N}{2}.\end{equation} 
If $z>z^*$, then $\pi_A(z)>\pi_B(z)$. If $z<z^*$, then $\pi_A(z)<\pi_B(z)$. We denote that
\begin{equation}
\chi_1=\min\{z\in Z\mid \pi_A(z)>\pi_B(z)\},\chi_2=\max\{z\in Z\mid \pi_A(z)<\pi_B(z)\},
\end{equation} and then $\frac{N}{2}\leqslant \chi_2<z^*<\chi_1\leqslant N$. The next step is to calculate $E_{\mathcal{H}_z}$ for all values of $z$.

If $z = 0$, consider an arbitrary $0-tree$ $T\in \mathcal{H}_0$. The mutation energy of this tree, $E_T$, satisfies $E_T\geqslant E_h$, where $h$ is the path from node $0$ to node $N$. Note that for any node $z^\prime \leqslant \chi_2$, we can always find a large enough integer $M(z^\prime)$ for which $b^{M(z^\prime)}(z^\prime) = 0$, where $b^M(\cdot)$ is the $M$-fold iteration of $b(\cdot)$. This means there is always a path $h^\prime$ from $0$ to $z^\prime$ along which all edge weights are $\Theta(1)$, i.e., for any $(j\rightarrow i)\in h^\prime$, $E_{ij}=0$:
\begin{equation}\begin{tikzpicture}[
    >=Stealth,           
    node distance=1.0cm,   
    level distance=1.0cm 
  ]

  \node (n0)    {$b^{M(z^\prime)}M(z^\prime)=0$};
  \node (dots)    [right=of n0]    {$\cdots$};
  \node (b-2)  [right=of dots]    {$b^2(z^\prime)$};
  \node (b-1)  [right=of b-2]  {$b(z^\prime)$};
  \node (zprime)  [right=of b-1]  {$z^\prime$};

  \draw[->] (n0)   -- (dots);
  \draw[->] (dots)   -- (b-2);
  \draw[->] (b-2) -- (b-1);
  \draw[->] (b-1) -- (zprime);
\end{tikzpicture}\end{equation}

Thus the lower bound of the mutation energy for all paths from $0$ to $N$ is achieved by paths of the following type:
\begin{equation}
    \begin{tikzpicture}[
    >=Stealth,           
    node distance=1.0cm,   
    level distance=1.0cm 
  ]

  \node (n0)    {$b^{M(z^\prime)}(z^\prime)=0$};
  \node (dots)    [right=of n0]    {$\cdots$};
  \node (b-2)  [right=of dots]    {$b^2(\chi_2)$};
  \node (b-1)  [right=of b-2]  {$b(\chi_2)$};
  \node (zprime)  [right=of b-1]  {$\chi_2$};
  \node (N)  [right=of zprime]  {$N$};

  \draw[->] (n0)   -- (dots);
  \draw[->] (dots)   -- (b-2);
  \draw[->] (b-2) -- (b-1);
  \draw[->] (b-1) -- (zprime);
  \draw[dashed, ->] (zprime) -- (N);
\end{tikzpicture}
\label{path-1}
\end{equation}

Thus $E_T\geqslant N-\chi_2$, and $E_{\mathcal{H}_0}=\min\limits_{T\in \mathcal{H}_0} E_T\geqslant N-\chi_2$.

In fact, we can indeed find such a $0-$tree whose mutation energy is $N-\chi_2$. It is constructed by augmenting the paths in type of~(\ref{path-1}). Specifically, for every node $z>\chi_1$ and $z\neq N$, an edge is drawn directly from node $\chi_1$ to node $z$. This shows that
\begin{equation}
E_{\mathcal{H}_0} = \min\limits_{T\in \mathcal{H}_0} E_T\leqslant N-\chi_1.
\end{equation}
Therefore $E_{\mathcal{H}_0}=N-\chi_2$.

Following a symmetrical argument, it can be shown that $E_{\mathcal{H}_N}=\chi_1$.

Next we consider $E_{\mathcal{H}_z}$ in the case where $0<z<N$. For $0<z\leqslant \chi_2$, note that for any $z^\prime>z$, any path from $z$ to $z^\prime$ can be extended to a path from 0 to $z^\prime$ without changing the mutation energy of the path:
\begin{equation}
    \begin{tikzpicture}[
    >=Stealth,           
    node distance=1.0cm,   
    level distance=1.0cm 
  ]
  \node (0) {$b^{M(z)}(z)=0$};
  \node (n1) [right=of 0]   {$b^{M(z)-1}(z)$};
  \node (dots)    [right=of n1]    {$\cdots$};
  \node (n2)  [right=of dots]    {$b^1(z)$};
  \node (n3)  [right=of n2]  {$z$};
  \node (dots2)  [right=of n3]  {$\dots$};
  \node (zprime)  [right=of dots2]  {$z^\prime$};

  \draw[dashed, ->] (0)   -- (n1);
  \draw[dashed, ->] (n1)   -- (dots);
  \draw[dashed, ->] (dots) -- (n2);
  \draw[dashed, ->] (n2) -- (n3);
  \draw[->] (n3) -- (dots2);
  \draw[->] (dots2) -- (zprime);
\end{tikzpicture}
\label{path-2}
\end{equation}

However, for any $z^{\prime\prime}<z$, the mutation energy from $z$ to $z^{\prime\prime}$ is necessarily greater than 0. This implies for any $T_z\in \mathcal{H}_z$, we can find a $T_0\in \mathcal{H}_0$ which has a lower mutation than $T_z$. Therefore, $E_{\mathcal{H}(z)}>E_{\mathcal{H}(0)}=N-\chi_1$.

Following a symmetrical argument, it can be shown that for any $\chi_2<z<N$,$E_{\mathcal{H}_z}> E_{H_N}=\chi_1$.

Overall, we get the result below
\begin{equation}
    \begin{cases}
        E_{\mathcal{H}_0}&= N-\chi_2\\
        E_{\mathcal{H}_N}&= \chi_1\\
        E_{\mathcal{H}_z}&>N-\chi_2,~0<z\leqslant \chi_2\\
        E_{\mathcal{H}_z}&>\chi_1,~\chi_2<z\leqslant N.\\   
    \end{cases}
\end{equation}
Due to $N-\chi_2<\chi_1$, $E_{\mathcal{H}_0}$ is minimum among all $E_{\mathcal{H}_z}$, which means that $q_0(\varepsilon)$ converges to zero at the lowest rate, and $0\in C(\bb{\mu}^*)$.

Assume now that \( w_2^{(t)} \) and \( w_1^{(t)} \) are time-varying. This generates a time-inhomogeneous Markov process at each time \( t \), but the transition matrices can only take finitely many possible forms. We denote by \( \mathcal{P} = \{P_1, \dots, P_K\} \) the set of all admissible transition probability matrices. 

According to \textbf{Lem.}~\ref{lemma-switching}, this time-inhomogeneous Markov chain satisfies the weak ergodicity. The existence of a unique stationary distribution then implies strong ergodicity, which in turn ensures the convergence to a unique limiting distribution regardless of the initial state, 
which implies
$\lim_{\varepsilon\to 0}\,\lim_{N\to\infty}
    \bb{z}\,Q_{0}(\varepsilon)Q_{1}(\varepsilon)\cdots Q_{N-1}(\varepsilon)
    \;=\;
    \bb{z}^{*}
  \;
  (\forall \bb{z}\in\Delta_{m}),$ where $\bb{z}^*$ put the only positive probability on $z = N$. So $\forall~z_0=z\in [N],\lim\limits_{t\rightarrow \infty}\lim\limits_{\varepsilon\rightarrow 0}P^\varepsilon(z_t=N)=1$.

\end{proof} 

\noindent\textbf{Proof of Thm.~\ref{the-sgd}}

\begin{proof} We still denote the state as $z_t \in \Omega = \{0,1,2,\dots,N\}$. The evolution of $z_t$ follows Darwinian properties (\textbf{Eq.~\ref{darwanian}} in the main text).

We denote by $\bb{P}^{(t)}$ the transition probability matrix of $z_t$ at time $t$, and let $\bb{Q}^{(t)}$ be the corresponding transition probability matrix without the mutation coefficient. This induces a non-time-homogeneous Markov process. Similarly, we first assume that $\bb{P}^t$ does not depend on time $t$, meaning $w_2, w_1$ remain constant with respect to $t$. We analyze the convergence properties of this time-homogeneous Markov chain and then extend the analysis to the convergence of the non-time-homogeneous Markov chain. For notational convenience, we denote 

\begin{equation*}
\begin{pmatrix}
    1+\gamma w_2-w_1 & -\gamma(1+\gamma w_2-w_1)\\
    \gamma (1- w_2-\gamma w_1) & 1-w_2-\gamma w_1
\end{pmatrix}:=\begin{pmatrix}
    a&b\\c& d
\end{pmatrix}.
\end{equation*}
If $w_2,w_1,\gamma$ satisfy the conditions of the theorem, then we have $a> d$. Note that $N$ is the absorbing state of $\bb{Q}$ in that $\forall z\in \Omega$, $\bb{Q}_{N,z}=0$. Define its basin of attraction undet $\bb{Q}$ satisfies
\begin{equation}
A_{\bb{Q}}(N)\equiv \{z\in\Omega\mid \bb{Q}^n(z,N)>0, ~\text{for some}~n=1,2,\cdots\}.
\end{equation}

Let $M = \frac{N}{B}$, at any time $t$, the random vector $\boldsymbol{K} = (k_{1},\dots,k_{M})$ follows the Multivariate Hypergeometric distribution $\text{MHG}(N, z_{t}, B, \frac{N}{B})$.

We claim that for every $z_t \geqslant  \lceil\tau B\rceil$ where $\tau  =\frac{a-b}{\,a-d\,}\geqslant 0$ (\textbf{Lem.}~\ref{lemma-Realbasin}), one can find a realization of the multivariate hypergeometric distribution $H(B;\,N, Z_{t})$ for which $\pi_{A} \;>\; \pi_{B}.$ And if $z_t\geqslant \tau B$, then there is positive probability that $z_{t+1}>z_t.$ Therefore,
\begin{equation}
\{z_t\geqslant \lceil \tau B\rceil\mid ~\tau = \frac{a-b}{a-d}\}\subset A_Q(N).
\end{equation}


On the other hand, for any given $L$, there exists

\begin{equation}
\tilde{N} = \frac{L(a-b)-B\lfloor\frac{L}{B}\rfloor d}{a-d},
\end{equation}
such that when $N \geqslant \tilde{N}$, it holds that for any realization of the multivariate hypergeometric distribution $\textbf{MHG}(B;N, Z_t)$ (\textbf{Lem.~\ref{lemma-Realbasin}}),

\begin{equation}
\pi_A(z_t) > \pi_B(z_t), \quad \forall z_t \in \{N - L, N - L + 1, \cdots, N - 1\}.
\end{equation}

Therefore, for $L=\lceil\tau B\rceil$, we can find a sufficiently large $N>\tilde{N}$ such that $\{z_t\geqslant L\}\subset A_Q(N)$.

It shows that if $N>\tilde{N}$, the length of $A_Q(N)$ is at least $N-L$, while the length of $A_Q(0)$ is at most $N-L-1$. Then, by applying the same technique used in the proof of \textbf{Thm~\ref{Thm-GD}}, we can get 

\begin{equation}
    \begin{cases}
        E_{\mathcal{H}_N}&\leqslant \lceil\tau B\rceil\\
        E_{\mathcal{H}_0}&\geqslant \lceil\tau B\rceil\ +1\\
        E_{\mathcal{H}_z}&>E_{\mathcal{H}_0},~0<z\leqslant A_{\bb{Q}}(0)\\
        E_{\mathcal{H}_0}&>E_{\mathcal{H}_N},~A_{\bb{Q}}(0)<z< N.\\
        \end{cases}
\end{equation}

Therefore $E_{\mathcal{H}_N}$ is minimum among all $E_{\mathcal{H}_z}$, which means that $q_N(\varepsilon)$ converges to 0 at the lowest rate, and $N\in C(\mu_*)$.
\end{proof}

\noindent\textbf{Proof of Thm.~\ref{SDE-Thm}}

\begin{proof}
For a fixed $\alpha$, the system reduces to an ordinary differential equation. The dynamics of $\bb{w}=(w_1,w_2)^\top$ are
    \begin{equation}
    \begin{aligned}
    \frac{d w_1}{d t} &= [(1-\alpha)\gamma(1- w_2-\gamma w_1)+\alpha(1+\gamma w_2-w_1)]\cdot e^{-\beta(t)},\\
    \frac{d w_2}{d t} &= [(1-\alpha)(1-w_2-\gamma w_1)-\alpha\gamma (1+\gamma w_2- w_1)]\cdot e^{-\beta(t).} \\
    \end{aligned}
    \end{equation}

Define $s(t) = \int_0^t e^{-\tau u} du = \frac{1}{\tau}(1-e^{-\tau t})$, so that $\frac{ds}{dt}=e^{-\tau t}$. By the chain rule,
\begin{equation}
\frac{d\bb{w}}{dt}=\frac{d\bb{w}}{ds}\frac{ds}{dt} \Rightarrow  \frac{dw}{ds} =\bb{M}\bb{w}+b.
\end{equation} Hence, $\frac{d\bb{w}}{ds} = f(\bb{w}),$ which is an autonomous system, where \begin{equation}\bb{M} = \begin{pmatrix}-\gamma^2(1-\alpha)-\alpha & -\gamma(1-\alpha)+\alpha\gamma\\
-\gamma(1-\alpha)+\alpha\gamma & -(1-\alpha)-\alpha\gamma^2\end{pmatrix},~~~~\bb{b} = \begin{pmatrix}
    (1-\alpha)\gamma +\alpha\\
    (1-\alpha)-\alpha\gamma
\end{pmatrix}.\end{equation}
The equilibrium is 
\begin{equation}
    \bb{w}_{\text{eq}}:=-\bb{M}^{-1} \bb{b}=\begin{pmatrix}
        \frac{1+\gamma}{1+\gamma^2}\\
        \frac{1-\gamma}{1+\gamma^2}
    \end{pmatrix}.
\end{equation}

Thus the solution is 
\begin{equation}
    \bb{w}(c)=\bb{w}_{\text{eq}}+\exp(\bb{M} s)(\bb{w}(0)-\bb{w}_{\text{eq})}.
\end{equation}

$\bb{M}$ is symmetric and admits the orthonormal eigendecomposition

\begin{equation}
    \bb{M}=\bb{Q}\operatorname{diag}(\lambda_1,\lambda_2)\bb{Q}^\top, \lambda_1 = -(1-\alpha)(1+\gamma^2), \lambda_2=-\alpha(1+\gamma^2),
\end{equation}
with 
\begin{equation}
    u_1 = \frac{1}{\sqrt{1+\gamma^2}}\begin{pmatrix}
        \gamma\\1
    \end{pmatrix},~~u_2=\frac{1}{\sqrt{1+\gamma^2}}\begin{pmatrix}
        1\\-\gamma
    \end{pmatrix},~~\bb{Q}=[u_1,u_2].
\end{equation}

Let $r:=\bb{w}(0)-\bb{w}_{\text{eq}}$. Define $c^\top=[-1,1]$, and set
\begin{equation}
    \bb{a}^\top :=\bb{c}^\top\bb{Q}=(a_1,a_2)=\frac{1}{\sqrt{1+\gamma^2}}(1-\gamma,-(1+\gamma)), ~~~b:=\bb{Q}^\top \bb{r}=(b_1,b_2)^\top, 
\end{equation}
with $b_1=\frac{1}{\sqrt{1+\gamma^2}}(\gamma(w_1(0)-w_{1,\text{eq}})+(w_2(0)-w_{2,\text{eq}})),~b_2=\frac{1}{\sqrt{1+\gamma^2}}((w_1(0)-w_{1,\text{eq}})-\gamma(w_2(0)-w_{2,\text{eq}}))$.

Then the long-run limit $\bb{w}^*(\tau)=\bb{w}(s_{\infty})$ with $s_{\infty}=\frac{1}{\tau}$ yields

\begin{equation}
    w_2^*(\tau)-w_1^*(\tau)=\bb{c}^\top \bb{w}_{\text{eq}}+a_1b_1 e^{\lambda_1/\tau}+a_2b_2e^{\lambda_2/\tau}.
\end{equation}

Consequently,
\[
\frac{d}{d\tau}\bigl(w_2^*(\tau)-w_1^*(\tau)\bigr)\ \text{has the same sign as}\ 
N(\tau):=a_1 b_1\lambda_1 e^{\lambda_1/\tau}+a_2 b_2\lambda_2 e^{\lambda_2/\tau}.
\]
If $-\frac{a_2 b_2 \lambda_2}{a_1 b_1 \lambda_1}>0$, there exists a unique
\[
\quad
\tau_c \;=\; \frac{\lambda_1-\lambda_2}{\displaystyle \log\!\Bigl(-\frac{a_2 b_2 \lambda_2}{a_1 b_1 \lambda_1}\Bigr)}
\;=\; \frac{(2\alpha-1)(1+\gamma^2)}{\displaystyle \log\!\Bigl(-\frac{a_2 b_2 \lambda_2}{a_1 b_1 \lambda_1}\Bigr)}
\quad
\]
such that $N(\tau)<0$ for $\tau<\tau_c$ and $N(\tau)>0$ for $\tau>\tau_c$. Therefore, $w_2(\infty)-w_1(\infty)$ is non-decreasing with respect to $\tau$ for $\tau\in[\tau_c,\infty)$.

Now we figure how $\mathbb{E}[w_1(\infty)-w_2(\infty)]$ depends on $\sigma$ by first decomposing $\mathbb{E}[w_1(\infty)-w_2(\infty)]$:
\begin{equation}
\mathbb{E}[w_1(\infty)-w_2(\infty)]=\int_0^\infty \mathbb{E}[f_1(w_t, \alpha_t)-f_2(w_t, \alpha_t)]\exp(-\tau t) dt
\label{w_1infty}.
\end{equation}

Let $F(w,\alpha):=f_1(w,\alpha)-f_2(w,\alpha)$, and we get
\begin{equation}
\begin{aligned}
F(w,\alpha) &:= [(1-\gamma)(1-w_2-\gamma w_1)+(1+\gamma)(1+\gamma w_2-w_1)] \alpha\\
&~~~~~~-\gamma(1+\gamma w_2-w_1)-(1-w_2-\gamma w_1)\\
&:=m(w)\alpha+c(w).
\end{aligned}
\end{equation}
where $m(w)=(1+\gamma)(1+\gamma w_2-w_1)+(1-\gamma)(1-w_2-\gamma w_1)$, $c(w) = -\gamma(1+\gamma w_2-w_1)-(1-w_2-\gamma w_1).$

If $w_2,w_1$ satisfies the prescribed constraints, then $m(w)\geqslant 0$. Futhermore, the dynamic of $\alpha_t$ satisfies
\begin{equation}
    d(\alpha_t)=(m(w_t)\alpha_t + c(w_t))dt + \sigma dB_t+ d K_t.
\end{equation}

For an SDE with reflecting boundaries, the corresponding stationary Fokker-Planck equation requires the probability flux, $J(\alpha)$, to be zero at the boundaries. In the stationary state, the flux must be constant throughout the domain, which implies $J(\alpha)\equiv 0$, for all $\alpha\in [0.1]$. The zero-flux condition is 
\begin{equation}
    J(\alpha)=b(\alpha)p(\alpha;\sigma)-\frac{\sigma^2}{2}\frac{d}{d\alpha}p(\alpha;\sigma)=0.
\end{equation}

This is a first-order separable ordinary differential equation for $p(\alpha;\sigma)$, which solves to
\begin{equation}
    p(\alpha,\sigma)=N(\sigma)\exp\left(\frac{2}{\sigma^2}\int_0^\alpha b(y) dy\right).
\end{equation}

Let $G(\alpha)=\int_0^\alpha b(y) dy$, then we obtain the precise form of the stationary PDF of $p(\alpha;\sigma)$:
\begin{equation}
p(\alpha,\sigma)=N(\sigma)\exp\left(\frac{2}{\sigma^2}G(\alpha)\right).
\end{equation}

The stationary expectation is defined as $\bar{\alpha}(\sigma)=\int_0^1\alpha p(\alpha;\sigma)$, and we differentiate with respect to $\sigma$. A rigorous calculation yields:
\begin{equation}
    \frac{d\bar{\alpha}}{d\sigma}=-\frac{4}{\sigma^3}\operatorname{Cov}(\alpha,G(\alpha)),
\end{equation}
where $\operatorname{Cov}(\alpha,G(\alpha))=\mathbb{E}[\alpha G(\alpha)]-\mathbb{E}[\alpha]\mathbb{E}[G]$.

Since $b(\alpha)$ is linear with a positive slope $(m(w)\geqslant 0)$, its integral $G(\alpha)=\frac{m}{2}\alpha^2+c\alpha$is a convex function, which makes the distribution log-convex. The probability mass concentrated at the boundaries $\{0,1\}$.

We have $G(0)=\int_0^0 b(y) dy = 0$, $G(1)=\int_0^1 b(y)dy =\frac{m(w)}{2}+c(w)$. Given that $(1+\gamma)(1-w_2(t)-\gamma w_1(t))<(1-\gamma)(1+\gamma w_2(t)-w_1(t))$, we get $G(1)<0=G(0)$, which implies that the mode of the distribution at $\alpha=0$ is higher than the mode at $\alpha=1$. The distraibution is therefore heavily skewed towards $\alpha=0$.

In the region near $\alpha=0$, the term $(\alpha-\bar{\alpha}$ is negative. As $G(\alpha)$ is near its maximum value on the interval,$G(0)$, which is above the average, the term $G(\alpha)-\bar{G}$ is positive. Therefore $\operatorname{Cov}(\alpha, G(\alpha))<0$.

In the region near $\alpha=1$. The term $(\alpha-\bar{\alpha})$ is positive. As $G(\alpha)$ is near $G(1)$, which is below the average $\bar{G}$ that was pulled up by $G(0)$, the term $(G(\alpha)-\bar{G}$ is negative. Therefore $\operatorname{Cov}(\alpha, G(\alpha))<0$. 

Since the function $(\alpha-\bar{\alpha})(G(\alpha)-\bar{G})<0$  in the regions where the probability density is highest. Its expectation $\operatorname{Cov}(\alpha,G(\alpha)<0$. Therefore for all possible $w_t$, when $\sigma_1>\sigma_2$,
\begin{equation}
    \mathbb{E}_\alpha[F(w,\alpha)]_{\sigma_1}>\mathbb{E}_\alpha[F(w,\alpha)]_{\sigma_2}.
\end{equation}

By the law of total expectation, we decompose \begin{equation}
\begin{aligned}
\mathbb{E}[F(w_t,\alpha_t)]_{\sigma_1}&= \mathbb{E}_{w_t}[\mathbb{E}_{\alpha_t}[F(w_t,\alpha_t)\mid w_t]]_{\sigma_1}\\
& >\mathbb{E}_{w_t}[\mathbb{E}_{\alpha_t}[F(w_t,\alpha_t)\mid w_t]]_{\sigma_2}\\
& = \mathbb{E}[F(w_t,\alpha_t)]_{\sigma_2}.
\end{aligned}
\end{equation}
Thus, we obtain that if $\sigma_2>\sigma_1$, then $\mathbb{E}[F(w_t,\alpha_t)\mid\sigma_2]\geqslant\mathbb{E}[F(w_t,\alpha_t)\mid\sigma_1]$.

Substituting this into \textbf{Eq.~\ref{w_1infty}} completes the proof.
\end{proof}
The quantity $\mathbb{E}[w_{1}(\infty)-w_{2}(\infty)]$ characterizes the eventual gap in training intensity between the shortcut subnetwork and the core subnetwork. The theorem shows that data noise amplifies shortcut bias, whereas the noise introduced by stochastic optimization reduces it.
\section*{Lemmas Used in Proofs}
\label{app:theorem}
\begin{theorem}[Gerschgorin~\cite{HornJohnson2013}]
Let $A = [a_{ij}] \in M_n$, let
\begin{equation} \tag{6.1.1a}
R'_{i}(A) = \sum_{j \neq i} |a_{ij}|, \quad i = 1, \dots, n
\end{equation}
denote the deleted absolute row sums of $A$, and consider the $n$ Gerschgorin discs
\begin{equation}
\{z \in \mathbb{C} : |z - a_{ii}| \le R'_{i}(A)\}, \quad i = 1, \dots, n
\end{equation}
The eigenvalues of $A$ are in the union of Gerschgorin discs
\begin{equation} \tag{6.1.2}
G(A) = \bigcup_{i=1}^{n} \{z \in \mathbb{C} : |z - a_{ii}| \le R'_{i}(A)\}
\end{equation}
\end{theorem}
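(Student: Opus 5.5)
The goal is to bound the spectral radius of a Hermitian matrix $\bb{A}=[a_{ij}]$ through its Gerschgorin discs, in the special form used in the proof of Thm.~\ref{thm:cluster_alignment}:
\[
\|\bb{K}_X\circ\bb{K}_X\|_2\le \max_i\Bigl(|a_{ii}|+\sum_{j\ne i}|a_{ij}|\Bigr).
\]
We first prove the general Gerschgorin inclusion and then specialise it.

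Let $\lambda$ be an eigenvalue of $\bb{A}$ with eigenvector $\bb{x}\neq 0$. Choose an index $i$ with $|x_i|=\max_k|x_k|$; then $|x_i|>0$.

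\textbf{The disc inclusion.} Read off the $i$-th row of $\bb{A}\bb{x}=\lambda\bb{x}$:
\[
(\lambda-a_{ii})x_i=\sum_{j\ne i}a_{ij}x_j .
\]
Take absolute values and use the triangle inequality together with $|x_j|\le|x_i|$:
\[
|\lambda-a_{ii}|\,|x_i|\le \sum_{j\ne i}|a_{ij}|\,|x_j|\le R'_i(\bb{A})\,|x_i|.
\]
Dividing by $|x_i|>0$ gives $|\lambda-a_{ii}|\le R'_i(\bb{A})$. So $\lambda$ lies in the $i$-th disc, hence in the union $G(\bb{A})$. Since $\lambda$ was arbitrary, this proves the inclusion.

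\textbf{The spectral-norm bound.} Every eigenvalue satisfies
\[
|\lambda|\le |a_{ii}|+R'_i(\bb{A})\le \max_i\Bigl(|a_{ii}|+\sum_{j\ne i}|a_{ij}|\Bigr).
\]
For a symmetric real matrix the spectral norm equals the largest $|\lambda|$, so $\|\bb{A}\|_2$ obeys the same bound.

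\textbf{Specialisation to the paper's matrix.} Take $\bb{A}=\bb{K}_X\circ\bb{K}_X$, whose entries are $a_{ij}=\langle \bb{x}_i,\bb{x}_j\rangle^2$.
\begin{itemize}
\item Since $\|\bb{x}_i\|_2=1$, the diagonal entries are $a_{ii}=1$.
\item For $j\ne i$, the definition of $\rho$ gives $|a_{ij}|\le\rho^2$.
\end{itemize}
Each row therefore has absolute sum at most $1+(N-1)\rho^2$, which yields
\[
\|\bb{K}_X\circ\bb{K}_X\|_2\le 1+(N-1)\rho^2 .
\]

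The only point needing care is justifying $\|\bb{A}\|_2=\max|\lambda|$, which relies on the symmetry of $\bb{A}$. Here $\bb{K}_X$ is a symmetric Gram matrix, so its Hadamard square is symmetric as well, and the step goes through.
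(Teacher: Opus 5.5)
Your proof is correct. The paper gives no proof of its own and simply cites Horn and Johnson, whose argument is exactly yours: take the eigenvector coordinate of maximal modulus and apply the triangle inequality to that row of $A x=\lambda x$. That step never uses symmetry, so it covers every $A\in M_n$ as the statement requires. Your Hermitian specialization is extra: using $a_{ii}=1$ and $|a_{ij}|\le\rho^2$ to get $\|K_X\circ K_X\|_2\le 1+(N-1)\rho^2$ correctly reproduces how the paper invokes the theorem in the proof of Thm.~\ref{thm:cluster_alignment}.
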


\begin{theorem}[Davis--Kahan $\sin\Theta$ Theorem~\cite{StewartSun1990}]
\label{thm:DK_vector}
Let \(A\in\mathbb{R}^{n\times n}\) be a real symmetric matrix with eigen-values
\begin{equation}
\lambda_{1} > \lambda_{2} \ge \cdots \ge \lambda_{n},
\end{equation}
and let \(\boldsymbol{u}_{1}\) be a unit eigen-vector associated with \(\lambda_{1}\).
Define the spectral gap
\begin{equation}
\Delta \;=\; \min_{j\ge 2}\,|\lambda_{1}-\lambda_{j}| \;>\; 0 \end{equation}
Let \(\widetilde{A}=A+E\) be another real symmetric matrix obtained by a perturbation \(E\),
and let \(\widetilde{\boldsymbol{u}}_{1}\) be a unit eigen-vector of \(\widetilde{A}\) corresponding to its largest eigen-value.
Denote by
\(
P := \boldsymbol{u}_{1}\boldsymbol{u}_{1}^{\!\top}
\)
the orthogonal projector onto \(\operatorname{span}\{\boldsymbol{u}_{1}\}\).
Then the acute angle \(\theta = \angle(\boldsymbol{u}_{1},\widetilde{\boldsymbol{u}}_{1})\) satisfies
\begin{equation}
\sin\theta
\;\;\le\;\;
\frac{\bigl\|(I-P)\,E\,\boldsymbol{u}_{1}\bigr\|_{2}}{\Delta}\; .
\label{DK-vec}
\end{equation}
\end{theorem}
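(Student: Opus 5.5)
The plan is to get the bound from a residual identity for the unperturbed eigenvector $\boldsymbol{u}_1$, viewed as an approximate eigenvector of $\widetilde{A}$. Write $P=\boldsymbol{u}_1\boldsymbol{u}_1^{\top}$, let $\mu:=\boldsymbol{u}_1^{\top}\widetilde{A}\boldsymbol{u}_1=\lambda_1+\boldsymbol{u}_1^{\top}E\boldsymbol{u}_1$ be the Rayleigh quotient, and define the residual
\begin{equation*}
\boldsymbol{r}:=\widetilde{A}\boldsymbol{u}_1-\mu\boldsymbol{u}_1 .
\end{equation*}
Because $A\boldsymbol{u}_1=\lambda_1\boldsymbol{u}_1$, we have $(I-P)A\boldsymbol{u}_1=0$. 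Also $P\boldsymbol{r}=0$ by the choice of $\mu$. Together these give
\begin{equation*}
\boldsymbol{r}=(I-P)\widetilde{A}\boldsymbol{u}_1=(I-P)E\boldsymbol{u}_1 .
\end{equation*}
So the numerator in \eqref{DK-vec} is exactly $\|\boldsymbol{r}\|_2$, and the problem reduces to a residual-type $\sin\theta$ estimate.

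Next, expand in the orthonormal eigenbasis $\widetilde{\boldsymbol{u}}_1,\dots,\widetilde{\boldsymbol{u}}_n$ of $\widetilde{A}$, with eigenvalues $\widetilde{\lambda}_1\ge\cdots\ge\widetilde{\lambda}_n$. Write $\boldsymbol{u}_1=\sum_j c_j\widetilde{\boldsymbol{u}}_j$, so that $\cos\theta=|c_1|$ and $\sin^2\theta=\sum_{j\ge2}c_j^2$. Then
\begin{equation*}
\|\boldsymbol{r}\|_2^2=\sum_j(\widetilde{\lambda}_j-\mu)^2c_j^2\ \ge\ \min_{j\ge2}(\widetilde{\lambda}_j-\mu)^2\,\sin^2\theta .
\end{equation*}
This yields $\sin\theta\le\|(I-P)E\boldsymbol{u}_1\|_2/\min_{j\ge2}|\widetilde{\lambda}_j-\mu|$. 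An alternative route runs the same argument with the roles of $A$ and $\widetilde{A}$ swapped. Start from $(A-\widetilde{\lambda}_1)\widetilde{\boldsymbol{u}}_1=-E\widetilde{\boldsymbol{u}}_1$ and project with $I-P$. The left side then has norm at least $\min_{j\ge2}|\lambda_j-\widetilde{\lambda}_1|\sin\theta$, which already involves the unperturbed eigenvalues $\lambda_j$, $j\ge2$, that define $\Delta$.

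The remaining step, and the main obstacle, is to replace the denominator by the unperturbed gap $\Delta=\lambda_1-\lambda_2$ exactly as stated. In the swapped version, I would need $|\lambda_j-\widetilde{\lambda}_1|\ge\Delta$ for every $j\ge2$. Since $\lambda_j\le\lambda_2$, this holds as soon as $\widetilde{\lambda}_1\ge\lambda_1$. In the original version, I would need $\mu-\widetilde{\lambda}_j\ge\Delta$ for every $j\ge2$.

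What I would try first is the Courant–Fischer characterization restricted to $\boldsymbol{u}_1^{\perp}$. Interlacing bounds $\widetilde{\lambda}_2$ by the top eigenvalue of the compression $(I-P)\widetilde{A}(I-P)$. That compression equals $(I-P)A(I-P)+(I-P)E(I-P)$, whose top eigenvalue is at most $\lambda_2+\lambda_{\max}((I-P)E(I-P))$. This has to be compared with $\mu=\lambda_1+\boldsymbol{u}_1^{\top}E\boldsymbol{u}_1$.

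The gap comparison therefore closes exactly when $\boldsymbol{u}_1^{\top}E\boldsymbol{u}_1\ge\lambda_{\max}((I-P)E(I-P))$. Likewise, in the swapped version one needs $\widetilde{\lambda}_1\ge\lambda_1$, which follows for instance when $E\succeq0$, since then $\widetilde{\lambda}_1\ge\boldsymbol{u}_1^{\top}\widetilde{A}\boldsymbol{u}_1\ge\lambda_1$. I do not expect either condition to hold for an arbitrary symmetric $E$. A $2\times2$ example with $E$ strongly negative on $\boldsymbol{u}_1$ should be checked, because it may show that the bound with the unperturbed $\Delta$ actually requires such a hypothesis.

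My plan is therefore to state and verify this sign or ordering condition as the precise point where the argument goes through. In the paper's application (the proof of Thm.~\ref{thm:cluster_alignment}), $E$ is entrywise nonnegative and positive semidefinite, being a Hadamard square plus a multiple of $\mathbf{1}\mathbf{1}^{\top}$. I would check whether this guarantees $\widetilde{\lambda}_1\ge\lambda_1$ there, which would make the swapped argument go through with denominator exactly $\Delta$.
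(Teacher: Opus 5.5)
Your suspicion is correct, and it settles the matter. With the unperturbed gap $\Delta$ in the denominator the inequality is false, so your ``remaining step'' cannot be closed for general symmetric $E$, and no proof of the statement as written exists. The paper gives none; it only cites Stewart--Sun.

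Take $A=\mathrm{diag}(1,0)$, so $\boldsymbol{u}_1$ is the first standard basis vector and $\Delta=1$. Two choices of $E$ both break the bound: $E=\mathrm{diag}(-2,0)$, your ``strongly negative on $\boldsymbol{u}_1$'' case, and $E=\mathrm{diag}(0,2)$, which is positive semidefinite and entrywise nonnegative. In either case the top eigenvector of $\widetilde{A}$ is the second basis vector, so $\sin\theta=1$, while $(I-P)E\boldsymbol{u}_1=0$.

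The Davis--Kahan/Stewart--Sun $\sin\Theta$ theorem actually uses a mixed gap. It is precisely the residual bound you derived first, $\sin\theta\le\|(I-P)E\boldsymbol{u}_1\|_2/\min_{j\ge2}|\widetilde{\lambda}_j-\mu|$, where the gap is measured between $\mu$ and the remaining eigenvalues of $\widetilde{A}$. That part of your proposal is correct. So is your sufficient condition $\boldsymbol{u}_1^{\top}E\boldsymbol{u}_1\ge\lambda_{\max}((I-P)E(I-P))$, which both counterexamples violate. Without extra hypotheses, combine $\mu\ge\lambda_1-\|E\|_2$ with Weyl's inequality $\widetilde{\lambda}_j\le\lambda_2+\|E\|_2$. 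This gives $\mu-\widetilde{\lambda}_j\ge\Delta-2\|E\|_2$ for $j\ge2$, so a correct unperturbed-gap form is $\sin\theta\le\|(I-P)E\boldsymbol{u}_1\|_2/(\Delta-2\|E\|_2)$ whenever $\|E\|_2<\Delta/2$.

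The part of your plan that would fail is the swapped route you intend to use for the paper's application. Projecting $(A-\widetilde{\lambda}_1)\widetilde{\boldsymbol{u}}_1=-E\widetilde{\boldsymbol{u}}_1$ with $I-P$ gives, when $\widetilde{\lambda}_1\ge\lambda_1$, the bound $\sin\theta\le\|(I-P)E\widetilde{\boldsymbol{u}}_1\|_2/\Delta$. Its numerator contains $\widetilde{\boldsymbol{u}}_1$, not $\boldsymbol{u}_1$, and the two cannot be exchanged. The example $E=\mathrm{diag}(0,2)$ makes this concrete: there $\widetilde{\lambda}_1=2\ge\lambda_1$ and your swapped bound holds ($1\le 2$), but the stated one fails.

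Hence verifying $E\succeq 0$ for $\tfrac{3}{4\pi}\boldsymbol{K}_X\circ\boldsymbol{K}_X+\tfrac{1}{2\pi}\boldsymbol{1}\boldsymbol{1}^{\top}$ in the proof of Thm.~\ref{thm:cluster_alignment} would not justify the step $\|\sin\Theta\|_2\le\|P_\perp E P\|_2/\delta$ taken there. That step needs a gap involving the spectrum of $\boldsymbol{K}_{\theta}$, or an explicit $\|E\|_2$-dependent correction to $\delta$.
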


\begin{lemma}
    Denote the set of all $z-$ tree by $T_z$, and we take the summation of the products of the edge weight for all $z-$trees $\in T_z$ as $q_z = \sum\limits_{h\in T_z}\prod\limits_{(j\rightarrow i)\in h}p_{ij}$. Let $\bb{q}$ be the vector $\bb{q}\equiv (q_0, q_1,\cdots,q_N)$. We have $\bb{q}$ is proportional to $\mu^*$.
\label{prop}
\end{lemma}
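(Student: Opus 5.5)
We prove this via the Markov chain tree theorem, by checking directly that $\bb{q}$ satisfies the global balance equations $\bb{q}\bb{P}=\bb{q}$. Throughout, a $z$-tree is a spanning directed tree on the state space $\{0,\dots,N\}$ in which every state $x\neq z$ has exactly one outgoing edge $x\to \mathrm{par}(x)$, all edges point toward the root $z$, and $z$ itself has no outgoing edge. The weight of the tree is $\prod_{x\neq z} p_{x,\mathrm{par}(x)}$. This is the convention in the statement, read as: an edge is weighted by the transition probability from its tail to its head. Since every entry of $\bb{P}$ is positive (because of the mutation term), every $q_z>0$. So once we show $\bb{q}\bb{P}=\bb{q}$, uniqueness of the stationary distribution of the regular chain gives $\bb{\mu}^*=\bb{q}/\sum_i q_i$.

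Subtracting $q_z p_{zz}$ from both sides and using $\sum_j p_{zj}=1$, the balance equation at state $z$ becomes
\[
q_z\sum_{j\neq z} p_{zj} \;=\; \sum_{i\neq z} q_i\, p_{iz}.
\]
We interpret both sides as the same weighted sum over one family of graphs. Let $\mathcal{G}_z$ be the set of functional graphs on the state space in which every vertex has exactly one outgoing edge, there are no self-loops, there is exactly one directed cycle, the cycle passes through $z$, and every other vertex flows into this cycle. The weight of such a graph is the product of $p_{x,\mathrm{out}(x)}$ over all vertices $x$.

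\textbf{Left side.} Take a $z$-tree $T$ and some $j\neq z$, and add the edge $z\to j$. Every vertex now has one outgoing edge. The unique cycle is $z\to j\to\cdots\to z$, the tree path from $j$ back to $z$. So the result lies in $\mathcal{G}_z$ and has weight $w(T)p_{zj}$. Conversely, given $G\in\mathcal{G}_z$, deleting the outgoing edge of $z$ recovers a unique $z$-tree and a unique $j$. Hence the left side equals $\sum_{G\in\mathcal{G}_z} w(G)$.

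\textbf{Right side.} Take an $i$-tree $T'$ with $i\neq z$, and add the edge $i\to z$. The unique cycle is $i\to z\to\cdots\to i$, the tree path from $z$ to $i$. It passes through $z$, so the result lies in $\mathcal{G}_z$ and has weight $w(T')p_{iz}$. Conversely, given $G\in\mathcal{G}_z$, let $i$ be the predecessor of $z$ on the cycle; it is unique because the cycle is unique. Deleting $i\to z$ leaves an $i$-tree. So this map is also a bijection, and the right side equals $\sum_{G\in\mathcal{G}_z}w(G)$ as well. This establishes the balance equation.

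The main obstacle is the bookkeeping in these two bijections. For the left side we must check that deleting $z$'s out-edge from any $G\in\mathcal{G}_z$ leaves a graph that is acyclic and rooted at $z$: removing that edge breaks the only cycle, and every other vertex has a path to the cycle and hence to $z$. For the right side we must check that the predecessor $i$ is well defined, so the map is injective. One further point needs care: $\bb{\mu}^*$ here is the stationary distribution of a fixed $\bb{P}$, i.e. fixed $w_1,w_2$. The time-inhomogeneous case is handled separately in the proof of Thm.~\ref{Thm-GD}, so the lemma only has to be established for a single strictly positive stochastic matrix.
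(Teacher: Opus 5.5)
Your proposal is correct and follows the same route as the paper. The paper also introduces the set of unicyclic graphs whose cycle passes through $z$, asserts $\sum_{k\neq z} q_k p_{kz} = q_z' = \sum_{l\neq z} q_z p_{zl}$, and then adds $q_z p_{zz}$ to both sides to obtain $\bb{q}\bb{P}=\bb{q}$. Your two explicit bijections supply the details the paper dismisses as ``easy to show'': adding $z\to j$ to a $z$-tree or $i\to z$ to an $i$-tree, and inverting by deleting $z$'s out-edge or the cycle edge into $z$. Your functional-graph definition of $\mathcal{G}_z$ also makes precise the paper's looser description of ``connected graphs with $N+1$ vertices and $N+1$ edges.''
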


\begin{proof}
    Now we construct some connected graphs with $N+1$  vertices and $N+1$ edges. From graph theory, we know that such a graph contains exactly one loop. We denote the cycle containing $z$ as $z-$loop. Denote the set of all $z$-loop by $G_z$. Then we take the summation of the products of the edge weight for all $z-$loop.

\begin{equation}
q_z^\prime = \sum\limits_{g\in G_z} \prod_{(j\rightarrow i)\in g}~p_{ij}.
\end{equation}
It is easy to show that
$q_z^\prime = \sum\limits_{k\neq z} q_k p_{kz}=\sum\limits_{l\neq z} q_zp_{zl}$, then we get
\begin{equation}
\sum\limits_{k}q_kp_{kz}=\sum\limits_{k\neq z} q_kp_{kz}+q_zp_{zz}=\sum\limits_{l\neq z} q_zp_{zl}+q_zp_{zz}=q_z,
\end{equation} which shows that $q$ is proportional to $\mu^*$.
\end{proof}

\begin{lemma}[\cite{2147050d-b792-3607-85c8-fde290f942d5}]
    $c_{ij}=\vert b(i)-j\vert$
\label{weight}
\end{lemma}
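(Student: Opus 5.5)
The plan is a direct asymptotic expansion of the one-step transition probability in $\varepsilon$. Here the mutation energy is the exponent $c_{ij}=E_{ij}$ in $p_{ij}=\Theta(\varepsilon^{E_{ij}})$, which is how it is used in the proof of \textbf{Thm.~\ref{Thm-GD}}. Fix a state $i$ and write $b:=b(i)\in\{0,\dots,N\}$. By \textbf{Eq.~\ref{darwanian}}, $z_{t+1}=b+p_t-q_t$, where $p_t\sim\operatorname{Bin}(N-b,\varepsilon)$ and $q_t\sim\operatorname{Bin}(b,\varepsilon)$ are independent. Hence, for a target state $j$,
\begin{equation*}
p_{ij}=\sum_{k}\binom{N-b}{j-b+k}\binom{b}{k}\,\varepsilon^{\,j-b+2k}(1-\varepsilon)^{N-(j-b)-2k},
\end{equation*}
where $k$ is the number of downward mutations, and the sum runs over those $k\geqslant 0$ for which both binomial coefficients are nonzero. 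This is a finite sum, since $N$ is fixed, so only its lowest power of $\varepsilon$ matters.

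I will split into three cases.
\begin{itemize}
\item Case $j>b$. Every admissible $k$ has exponent $j-b+2k\geqslant j-b$. The term $k=0$ is admissible because $0<j-b\leqslant N-b$, and its coefficient $\binom{N-b}{j-b}$ is strictly positive. All other terms are $O(\varepsilon^{\,j-b+2})$. Since the factor $(1-\varepsilon)^{\cdot}$ tends to $1$, we get $p_{ij}=\Theta(\varepsilon^{j-b})$.
\item Case $j<b$. The symmetric argument applies with the roles of $p_t$ and $q_t$ exchanged. The leading term has $p_t=0$ and $q_t=b-j$, is admissible since $b-j\leqslant b$, and has coefficient $\binom{b}{b-j}>0$. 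So $p_{ij}=\Theta(\varepsilon^{b-j})$.
\item Case $j=b$. The term with no mutations gives $(1-\varepsilon)^N\to 1$, so $p_{ij}=\Theta(1)=\Theta(\varepsilon^0)$.
\end{itemize}
Combining the three cases gives $p_{ij}=\Theta(\varepsilon^{|b(i)-j|})$, that is, $c_{ij}=|b(i)-j|$.

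The only delicate point is making the $\Theta$ rigorous rather than heuristic. The leading coefficient must be nonzero, which is exactly the admissibility check above, using $0\leqslant j\leqslant N$ and $0\leqslant b\leqslant N$. The remaining terms must also be of strictly higher order. Both facts hold because $k$ ranges over a finite set and every exponent differs from the minimum by an even nonnegative integer.

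For the time-inhomogeneous use in \textbf{Thm.~\ref{Thm-GD}}, the same bound holds uniformly over the finitely many admissible transition matrices. The reason is that the constants depend only on $N$ and the map $b$, and $b$ takes finitely many forms.
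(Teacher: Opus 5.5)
Your proof is correct. The paper gives no argument of its own and simply cites the Kandori--Mailath--Rob mutation model; your leading-order expansion of the convolution of $\operatorname{Bin}(N-b(i),\varepsilon)$ and $\operatorname{Bin}(b(i),\varepsilon)$, including the check that the minimal-exponent term $\varepsilon^{|b(i)-j|}$ is admissible with a nonzero coefficient, is exactly the computation behind that citation. The independence of $p_t$ and $q_t$ that you use is not written into \textbf{Eq.~\ref{darwanian}} itself, but it follows from the paper's stated assumption that each sample mutates independently.
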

The following definitions and lemmas (\textbf{Def.~\ref{dobrushin},\ref{weak ergodicity}};~\textbf{Lem.~\ref{contractofD},\ref{condofWeak}}) with a more detailed discussion can be found in Chapter 4 of the monograph~\citep{seneta2006non}.
\begin{definition}[Dobrushin coefficient]

The Dobrushin coefficient of a stochastic matrix $\bb{P}$ is defined as 

\begin{equation}
        \delta(\bb{P})\;=\,\frac{1}{2}\max\limits_{i,i^\prime} \sum\limits_{k=1}^m\vert P(i,k)-P(i^\prime,k)\vert.
\end{equation}
It can also be calculated as 
\begin{equation}
        \delta(\bb{P})\;=1-\min\limits_{i,j}\sum\limits_{s=1}^n \min(\bb{P}(i,s),\bb{P}(j,s))
\end{equation}
\label{dobrushin}
\end{definition}
\begin{definition}
    Let $\bb{T}_{0,r}$ denote the forward products of the transition matrices of the non-time-homogeneous Markov chain, defined as:
    \begin{equation}
        \bb{T}^{(p,r)}\equiv \bb{P}_p \bb{P}_{p+1}\cdots \bb{P}_{p+r-1}=\prod\limits_{k=p}^{p+r-1} \bb{P}_k.
    \end{equation}
We shall say that \emph{weak ergodicity} obtains for the MC (i.e., sequence of stochastic matrices $\bb{P}_i$) if 
    \begin{equation}
        t_{i,s}^{(p,r)}-t_{j,s}^{(p,r)}\rightarrow 0
    \end{equation}
    as $r\rightarrow \infty$ for each $i,j,s,p$, where $t_{i,s}^{(p,r)}$ is the element in the $i$-th row and $j$-th column of matrix $\bb{T}^{(p,r)}$.
\label{weak ergodicity}
\end{definition}
\begin{lemma}
    For stochastic matrix $\bb{P}=\{p_{ij}\}$
    \begin{equation}
        \sup\limits_{\substack{\Vert \bb{x}^{\prime}\Vert_{1}=1\\ \bb{x}^\prime\boldsymbol{1} = 0}}\Vert\bb{x}^\prime\bb{P}\Vert_{1}= \delta(\bb{P}),
    \end{equation} where $\delta(\cdot)$ is the Dobrushin coeffient.

\label{contractofD}
\end{lemma}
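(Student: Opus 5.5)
The plan is a direct two-sided argument. We first show that $\delta(\bb{P})$ is an upper bound for $\Vert\bb{x}^\prime\bb{P}\Vert_1$ on the admissible set, and then show that a specific admissible vector attains it. Throughout we use the first form of the Dobrushin coefficient in \textbf{Def.~\ref{dobrushin}}. The equivalence with the second form is a one-line remark: for nonnegative reals $a,b$ we have $\vert a-b\vert=a+b-2\min(a,b)$, and each row of $\bb{P}$ sums to $1$. We assume $\bb{P}$ has at least two rows, since otherwise the admissible set is empty.

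\textbf{Upper bound.} Take $\bb{x}^\prime$ with $\Vert\bb{x}^\prime\Vert_1=1$ and $\bb{x}^\prime\boldsymbol{1}=0$, and split it into positive and negative parts, $\bb{x}^\prime=\bb{x}^+-\bb{x}^-$.
\begin{itemize}
\item The zero-sum condition forces $\bb{x}^+\boldsymbol{1}=\bb{x}^-\boldsymbol{1}$, and the norm condition forces these two masses to sum to $1$. Hence each equals $\tfrac12$.
\item Setting $\bb{u}=2\bb{x}^+$ and $\bb{v}=2\bb{x}^-$ gives two probability vectors with $\bb{x}^\prime=\tfrac12(\bb{u}-\bb{v})$.
\item The key step is the coupling identity $\bb{u}-\bb{v}=\sum_{i,i^\prime}u_i v_{i^\prime}(\bb{e}_i-\bb{e}_{i^\prime})$. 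It holds because $\sum_{i^\prime}v_{i^\prime}=\sum_i u_i=1$.
\item Multiplying by $\bb{P}$ and applying the triangle inequality gives
\[
\Vert\bb{x}^\prime\bb{P}\Vert_1\le\tfrac12\sum_{i,i^\prime}u_i v_{i^\prime}\sum_k\vert P(i,k)-P(i^\prime,k)\vert\le\tfrac12\cdot 2\,\delta(\bb{P})\sum_{i,i^\prime}u_iv_{i^\prime}=\delta(\bb{P}).
\]
\end{itemize}

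\textbf{Attainment.} Choose rows $i\neq i^\prime$ that achieve the maximum in the definition of $\delta(\bb{P})$. If the maximum is $0$, any pair of distinct rows works. Set $\bb{x}^\prime=\tfrac12(\bb{e}_i-\bb{e}_{i^\prime})$; this vector has unit $\ell_1$ norm and zero sum. Then
\[
\Vert\bb{x}^\prime\bb{P}\Vert_1=\tfrac12\sum_k\vert P(i,k)-P(i^\prime,k)\vert=\delta(\bb{P}).
\]
So the supremum is attained and equals $\delta(\bb{P})$.

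The only step needing care is the coupling decomposition in the upper bound. Bounding $\Vert\bb{u}\bb{P}-\bb{v}\bb{P}\Vert_1$ naively would give $2$ rather than $2\delta(\bb{P})$. Writing $\bb{u}-\bb{v}$ as a convex combination of the differences $\bb{e}_i-\bb{e}_{i^\prime}$ is exactly what allows each row difference to be controlled by the maximal one. Everything else is routine bookkeeping.
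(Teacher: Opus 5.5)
Your proof is correct: the positive/negative-part split, the coupling identity writing $\bb{x}^\prime$ as a convex combination of the elementary differences $\tfrac12(\bb{e}_i-\bb{e}_{i^\prime})$ (total weight $\Vert\bb{x}^\prime\Vert_1=1$), the triangle-inequality bound, and attainment at the maximizing pair of rows are all sound. The paper gives no proof of its own and simply cites Chapter 4 of Seneta's monograph, where the standard argument is, to my recollection, essentially this same decomposition-plus-triangle-inequality route, so there is nothing further to compare.
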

\begin{lemma}
    Then weak ergodicity of forward products $\bb{T}_{p,r}$ is equivalent to 
    \begin{equation}
    \delta(\bb{T}_{p,r})\rightarrow 0,\; r\rightarrow\infty,\;p\geqslant0.
    \end{equation}
where $\delta(\bb{T}_{p,r})$ is the Dobrushin coefficient of $\bb{T}_{p,r}$. In other words,
    \begin{equation}
        \Vert (\bb{x}-\bb{y})\bb{P}\Vert_1\leqslant
        \Vert \bb{x}-\bb{y}\Vert_1\delta(\bb{P}).
    \end{equation}
\label{condofWeak}
\end{lemma}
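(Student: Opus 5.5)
The plan is to prove the two parts of Lemma~\ref{condofWeak} separately. First the contraction inequality $\Vert(\bb{x}-\bb{y})\bb{P}\Vert_1\leqslant\Vert\bb{x}-\bb{y}\Vert_1\delta(\bb{P})$, which follows from Lemma~\ref{contractofD}. Then the equivalence between weak ergodicity (Definition~\ref{weak ergodicity}) and $\delta(\bb{T}_{p,r})\to0$, which follows from the first form of the Dobrushin coefficient in Definition~\ref{dobrushin}. Throughout, $\bb{x},\bb{y}$ are probability row vectors on the finite state space $\{1,\dots,m\}$; in our application $m=N+1$.

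\emph{Contraction inequality.} Put $\bb{d}=\bb{x}-\bb{y}$. Since both vectors are probability distributions, $\bb{d}\boldsymbol{1}=0$.
If $\bb{d}=0$ there is nothing to prove.
Otherwise set $\bb{x}'=\bb{d}/\Vert\bb{d}\Vert_1$. Then $\Vert\bb{x}'\Vert_1=1$ and $\bb{x}'\boldsymbol{1}=0$, so $\bb{x}'$ is admissible in the supremum of Lemma~\ref{contractofD}. This gives $\Vert\bb{x}'\bb{P}\Vert_1\leqslant\delta(\bb{P})$. Multiplying back by $\Vert\bb{d}\Vert_1$ gives the claim. Applying this repeatedly also yields submultiplicativity, $\delta(\bb{P}\bb{Q})\leqslant\delta(\bb{P})\delta(\bb{Q})$. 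This is the property used later to pass from single transition matrices to forward products.

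\emph{Equivalence.} By Definition~\ref{dobrushin}, for the forward product $\bb{T}^{(p,r)}$ with entries $t^{(p,r)}_{i,s}$,
\begin{equation*}
\delta(\bb{T}^{(p,r)})=\tfrac12\max_{i,i'}\sum_{s=1}^m\bigl|t^{(p,r)}_{i,s}-t^{(p,r)}_{i',s}\bigr|.
\end{equation*}
Suppose $\delta(\bb{T}^{(p,r)})\to0$ as $r\to\infty$. Each single term is dominated by the whole expression, $|t^{(p,r)}_{i,s}-t^{(p,r)}_{j,s}|\leqslant 2\delta(\bb{T}^{(p,r)})$. Hence every such difference tends to $0$ for every $i,j,s,p$, which is weak ergodicity.
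Conversely, suppose each difference $t^{(p,r)}_{i,s}-t^{(p,r)}_{j,s}$ tends to $0$ for fixed $p$. The right-hand side above is a maximum over the finitely many pairs $(i,i')$ of finite sums over $s$. A finite combination of null sequences is null, so $\delta(\bb{T}^{(p,r)})\to0$.

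The only point needing care is this converse direction. It relies on the state space being finite, so that pointwise convergence of the entries upgrades to convergence of the maximum. The convergence is also only claimed for each fixed $p$, not uniformly in $p$, which matches the quantifiers in Definition~\ref{weak ergodicity}. I expect no further obstacle; the main thing to state explicitly is the role of finiteness (here $m=N+1$).
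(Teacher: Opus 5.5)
Your argument is correct. The paper gives no proof of its own here: it imports this lemma, together with the supremum characterization of $\delta$, from Chapter~4 of Seneta's monograph. There the equivalence is stated for an arbitrary \emph{proper} coefficient of ergodicity $\tau$ (continuous on stochastic matrices, vanishing exactly on matrices with identical rows) and is obtained by a continuity-and-compactness argument on the set of stochastic matrices. You bypass that generality and use only the explicit row-difference formula for the Dobrushin coefficient, so both directions reduce to comparing one entry difference with a finite maximum of finite sums. This is more elementary and yields the quantitative bound $|t^{(p,r)}_{i,s}-t^{(p,r)}_{j,s}|\le 2\delta(\bb{T}^{(p,r)})$. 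It also makes clear that finiteness of the state space is the only thing the converse direction uses. The general route only buys applicability to other coefficients of ergodicity, which the paper never needs. You are also right to treat the ``in other words'' clause as a separate fact rather than a restatement of the equivalence. It is the contraction inequality, and the resulting bound $\delta(\bb{P}\bb{Q})\le\delta(\bb{P})\delta(\bb{Q})$, that the switching lemma later relies on. To get that bound, apply your inequality for $\bb{Q}$ to two rows of $\bb{P}$ and use the row-difference formula for $\delta(\bb{P}\bb{Q})$.
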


\begin{lemma}[Uniform limit under arbitrary switching]\label{thm:switching-limit}
Let
\begin{equation}
  \mathcal P(\varepsilon)
    \;=\;
  \bigl\{\bb{P}_{1}(\varepsilon),\ldots,\bb{P}_{K}(\varepsilon)\bigr\},
  \qquad 0<\varepsilon\le\varepsilon_{0},
\end{equation}
be a family of $m\times m$ finite row-stochastic matrices.Each matrix $\bb{P}(\varepsilon)$ is determined by the parameters of the Darwinian dynamics(\textbf{Eq.~\ref{darwanian}} in the main text).Assume that there is a probability vector $\bb{\mu}^{*}\in\Delta_{m}$ with
        \begin{equation}
          \lim_{\varepsilon\to 0}\;
          \lim_{t\to\infty}\bb{\mu}\,P_{j}(\varepsilon)^{t}
          \;=\;
          \bb{\mu}^{*},
          \quad
          \forall~j\in\{1,\ldots,K\},
          \;
          \forall~\bb{\mu} \in\Delta_{m}.
        \end{equation}
    $\bb{\mu}^*$ depends on $\alpha_0(\bb{P}_j(0))$ and $\alpha_N(\bb{P}_j(0))$, which represent the basins of attraction for state $0$ and $N$ under $\bb{P}_j(0)$. 
    Denote
    \begin{equation}
        \begin{aligned}
            \alpha_0^{\min} &= \min\{\alpha_0(\bb{P}_1(0)),\cdots,\alpha_0(\bb{P}_K(0))\}\\
            \alpha_0^{\max} &= \max\{\alpha_0(\bb{P}_1(0)),\cdots,\alpha_0(\bb{P}_K(0))\}\\
            \alpha_N^{\min} &= \min\{\alpha_N(\bb{P}_1(0)),\cdots,\alpha_N(\bb{P}_K(0))\}\\
            \alpha_N^{\max} &= \max\{\alpha_N(\bb{P}_1(0)),\cdots,\alpha_N(\bb{P}_K(0))\}\\       \end{aligned}
    \end{equation}
When $\alpha_0^{\min}>\alpha_N^{\max}$, 
     $\bb{\mu}^*=(1,0,\cdots,0).$
    Conversely, when $\alpha_N^{\min}>\alpha_0^{\max}$, 
     $\bb{\mu}^*=(0,0,\cdots,1).$
Then, for \emph{any} switching sequence
\begin{equation}
  \bb{Q}_{0}(\varepsilon),\bb{Q}_{1}(\varepsilon),\ldots ,
  \bb{Q}_{T-1}(\varepsilon)\in\mathcal P(\varepsilon)
\end{equation}
we have 
\begin{equation}\lim_{\varepsilon\to 0} \left( \limsup_{T\to\infty} \Vert \bb{\mu} \prod_{t=0}^{T-1}\bb{Q}_{t}(\varepsilon) - \bb{\mu}^{*} \Vert_{1} \right) = 0, \quad (\forall \bb{\mu}\in\Delta{m}).\end{equation}
\label{lemma-switching}
\end{lemma}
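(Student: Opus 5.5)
The plan is to work at a fixed $\varepsilon$ first and only then pass to $\varepsilon\to 0$. For every $0<\varepsilon\le\varepsilon_0$, the binomial mutation terms in \textbf{Eq.~\ref{darwanian}} make every entry of every $\bb{P}_j(\varepsilon)$ strictly positive. Since the family is finite, there is a uniform lower bound $\underline{p}(\varepsilon)>0$ on all entries. By \textbf{Def.~\ref{dobrushin}} this gives a uniform Dobrushin bound
\[
\bar\delta(\varepsilon):=\max_j\delta(\bb{P}_j(\varepsilon))\le 1-m\,\underline{p}(\varepsilon)<1 .
\]
By \textbf{Lem.~\ref{contractofD}} and \textbf{Lem.~\ref{condofWeak}}, every switching product $\prod_t\bb{Q}_t(\varepsilon)$ is therefore weakly ergodic, so the dependence on the initial $\bb{\mu}$ is forgotten geometrically.

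Next I would let $\bb{\mu}_j(\varepsilon)$ be the unique stationary law of $\bb{P}_j(\varepsilon)$. The hypothesis gives $\eta(\varepsilon):=\max_j\|\bb{\mu}_j(\varepsilon)-\bb{\mu}^*\|_1\to0$. A one-step estimate $\|\bb{x}\bb{Q}_t-\bb{\mu}^*\|_1\le\bar\delta\|\bb{x}-\bb{\mu}^*\|_1+(1+\bar\delta)\eta$ iterates to
\[
\limsup_{T\to\infty}\Bigl\|\bb{\mu}\prod_{t<T}\bb{Q}_t-\bb{\mu}^*\Bigr\|_1\le\frac{(1+\bar\delta)\,\eta(\varepsilon)}{1-\bar\delta(\varepsilon)}.
\]
This is the natural first attempt. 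It is also where I expect the main obstacle: as $\varepsilon\to0$ we have $1-\bar\delta(\varepsilon)=\Theta(\varepsilon^{N})$, while $\eta(\varepsilon)$ is only polynomially small. So the crude one-step contraction does not close.

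To repair this, I would replace single steps by a mutation-counting argument that uses the basin condition, in the spirit of the tree/energy computation in the proof of \textbf{Thm.~\ref{Thm-GD}}. Take the case $\alpha_0^{\min}>\alpha_N^{\max}$; the other case is symmetric. Let $D:=\bigcap_j A_{\bb{P}_j(0)}(0)$ be the common basin of $0$. Two facts are needed.
\begin{enumerate}
\item Leaving $D$ under any switching sequence requires at least $R:=\alpha_0^{\min}$ simultaneous mutations. This is because the mutation-free map $b$ of each matrix keeps $D$ inside its own basin, and by \textbf{Lem.~\ref{weight}} each unit of displacement costs a factor $\varepsilon$. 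Hence the per-step exit probability is $O(\varepsilon^{R})$.
\item From any state, the chain enters $D$ within a bounded number $n_0$ of steps, independent of the switching sequence, with probability at least $c\,\varepsilon^{C}$, where $C\le\alpha_N^{\max}<R$. The deterministic drift moves the state monotonically toward $0$ or toward $N$, and crossing from the $N$-side into $D$ costs at most $\alpha_N^{\max}$ mutations.
\end{enumerate}
Combining these through a renewal or Lyapunov argument on blocks of length $n_0$, the long-run fraction of time spent outside $D$ is $O(\varepsilon^{R-C})\to0$. Inside $D$, the mutation-free dynamics drive the state to $0$ in at most $n_0$ steps, and the probability of mutating along the way is $O(\varepsilon)$. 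Hence the asymptotic mass at $0$ is $1-o(1)$ uniformly in the switching sequence. Because the family is finite, all constants are uniform, and the limsup over $T$ of the distance to $\bb{\mu}^*=(1,0,\dots,0)$ tends to $0$ as $\varepsilon\to0$.

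The delicate step is the second fact. It requires checking that the mutation-free maps $b_j$ of different matrices cannot cooperate to keep the chain cycling outside $D$ without paying mutations. I would handle this by noting that each $b_j$ is monotone toward $\{0,N\}$ with thresholds ordered as in the proof of \textbf{Thm.~\ref{Thm-GD}}. Under that ordering, switching only changes the threshold location, never the direction of drift on states lying outside every basin of $N$.
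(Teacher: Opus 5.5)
Your proposal is correct in substance and uses the same mechanism as the paper's proof. It has two ingredients: a uniform Dobrushin bound over the finite family to forget the initial law, and a comparison between the cost $\alpha_0^{\min}$ of escaping the $0$-side and the cost at most $\alpha_N^{\max}$ of returning from the $N$-side. The organization differs. The paper first shows that the intermediate states $\{1,\dots,N-1\}$ drain to $O(\varepsilon)$ mass in $O(\log(1/\varepsilon))$ steps. It then runs a scalar recursion for $\mu_N$, with leakage $O(\varepsilon^{\alpha_0^{\min}})$ out of state $0$ against a return rate of order at least $\varepsilon^{\alpha_N^{\max}}$ out of state $N$, and invokes Dobrushin only at the end. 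Your common basin $D=\bigcap_j A_{\bb{P}_j(0)}(0)$ with a renewal argument handles switching more transparently. A state whose basin membership depends on the current matrix is simply counted as outside $D$, which the paper's two-state lumping leaves implicit. Your explanation of why perturbing the stationary laws $\bb{\mu}_j(\varepsilon)$ cannot close is also a useful addition.

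Two points must be repaired before the renewal argument closes.

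\textbf{Fact 1 is false as stated.} Write $\chi_2(\bb{P}_j)$ for the threshold from the proof of \textbf{Thm.~\ref{Thm-GD}} computed for $\bb{P}_j$. Every $b_j$ sends the top state $z=\min_j\chi_2(\bb{P}_j)$ of $D$ to $z-1$, and two mutations then carry the chain out of $D$. So as soon as $R\geq 3$, the per-step exit probability there is of order $\varepsilon^{2}$, not $\varepsilon^{R}$. What is true, and what the paper actually uses (``leakage from state $0$''), is that any path \emph{started at $0$} that leaves $D$ costs at least $R$ mutations. This holds because inside $D\setminus\{0\}$ each step's drift pulls down by one, so the step costs telescope. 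Combine this with your remark that an entrant into $D$ reaches $0$ within $n_0$ steps with probability $1-O(\varepsilon)$: entries that exit before reaching $0$ then contribute only $O(\varepsilon)$.

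\textbf{The lemma concerns the marginal at time $T$, not the long-run fraction of time.} Under switching these need not coincide. Phrase the renewal as a last-exit bound over a window of length $L\asymp\varepsilon^{-C}\log(1/\varepsilon)$. This gives $P(z_T\notin D)\le(1-c\varepsilon^{C})^{L/n_0}+O(L\varepsilon^{R})+O(\varepsilon)$, which tends to $0$ because $R>C$.

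Conversely, the step you flag as delicate is easy. From any state and under any $\bb{P}_j$, a single step landing directly in $D$ costs at most $N-\min_k\chi_2(\bb{P}_k)$ mutations, which is $\alpha_N^{\max}$ up to the tie convention. So $n_0=1$ works, and no cooperation among the $b_j$ needs to be ruled out. Finally, $1-\bar\delta(\varepsilon)$ is of order $\varepsilon^{\lceil N/2\rceil}$, coming from the overlap of rows $0$ and $N$, not $\varepsilon^{N}$. This does not affect your conclusion about the first attempt.
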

\begin{proof} Without loss of generality, assume that $\alpha_0^{\min}>\alpha_N^{\max}$.  We first analyze the case where the initial probability distribution assigns its entire mass of $\Theta(1)$ to the minimal basins of attraction of states $0$ and $N$. While for other regions, the initial probability mass is $O(\varepsilon)$. First, we show that for a non-homogeneous Markov process with such a initial probability distribution, the total probability mass on the intermediate states $\{1,2,\cdots,N-1\}$ rapidly drops to $O(\varepsilon)$.

We denote $\Omega_1^\prime = \{0,N\}$, $\Omega_2^\prime =\{1,2,\cdots,N-1\}$. $\Omega_1^\prime$ is the corresponding absorbing state when $\varepsilon=0$. Let the sum of the probability masses of $\Omega_1^\prime$ be $\bb{\mu}_{\Omega_1^\prime}=\bb{\mu }_0+\bb{\mu}_N$ and $\bb{\mu}_{\Omega_2^\prime}=\bb{\mu}_1+\bb{\mu}_2+\cdots +\bb{\mu}_{N-1}$.

Then for any $t$, we have the following equation for $\bb{\mu}_{\Omega_2^\prime}$:
\begin{equation}
    \bb{\mu}_{\Omega^{\prime}_2}(t+1) = \bb{\mu}_{\Omega^{\prime}_1}(t)\cdot R(\Omega_1^\prime\rightarrow\Omega_2^\prime)+\bb{\mu}_{\Omega^{\prime}_1}(t)\cdot R(\Omega_2^\prime\rightarrow\Omega_1^\prime).
\end{equation}
where $R(\Omega_i^\prime\rightarrow\Omega_j^\prime)$ denotes the mass transfer rate from $\Omega_i^\prime$ to $\Omega_j^\prime$. We have $R(\Omega_1^\prime\rightarrow \Omega_2^\prime)=O(\varepsilon)$ while $R(\Omega_2^\prime\rightarrow \Omega_1^\prime)=O(1)$. Thus, 
\begin{equation}
\begin{aligned}
\bb{\mu}_{\Omega_2^\prime}(t+1) &\leqslant \bb{\mu}_{\Omega_1^\prime}(t)O(\varepsilon) +\bb{\mu}_{\Omega_2^\prime}(t)(1-O(1)) + O(\varepsilon)\\
&= (1-\bb{\mu}_{\Omega_2^\prime}(t))O(\varepsilon) +\bb{\mu}_{\Omega_2^\prime}(t)(1-O(1))\\
& \leqslant  (1-O(1))\bb{\mu}_{\Omega_2^\prime}(t)) + O(\varepsilon).
\end{aligned}
\end{equation}
Therefore, $\bb{\mu}_{\Omega_2^\prime}(t+1)-O(\varepsilon)\leqslant (1-O(1))(\bb{\mu}_{\Omega_2^\prime}(t)-O(\varepsilon))$. This implies 
\begin{equation}\bb{\mu}_{\Omega_2^\prime}(t+1)\leqslant (1-O(1))^t (\bb{\mu}_{\Omega_2^\prime}(0)-O(\varepsilon))+O(\varepsilon).
\end{equation}
So in $O(\log(\frac{1}{\varepsilon}))$ time, the probability mass in $\Omega_{2}^\prime$ will drop to the order of $O(\varepsilon)$.

Next, once the mass is concentrated at states $0$ and $N$, we consider the mass transfer between them. Note that for any transition probability matrix, the rate of mass leakage from state $0$ to state $N$ is at most $O(\alpha_0^{\min})$, and from state $N$ to state $0$ is at least $O(\alpha_N^{\max})$.
Therefore, 
\begin{equation}
\begin{aligned}
\bb{\mu}_N(t+1) &= \bb{\mu}_0(t) R(0\rightarrow N)+\bb{\mu}_N(t)(1-R(N\rightarrow 0))\\
&\leqslant (1- \bb{\mu}_N(t))O(\varepsilon^{\alpha_0^{\min}}) + \bb{\mu}_N(t)(1-O(\varepsilon^{\alpha_N^{\max}}))\\
& = \bb{\mu}_N(t)(1-O(\varepsilon^{\alpha_N^{\max}})-O(\varepsilon^{\alpha_0^{\min}})) + O(\varepsilon^{\alpha_0^{\min}})\\
&\leqslant \bb{\mu}_N(t)(1-O(\varepsilon^{\alpha_N^{\max}}))+O(\varepsilon^{\alpha_0^{\min}}).
\end{aligned}
\end{equation}
This implies 
\begin{equation}
    \bb{\mu}_N(t+1)-O(\varepsilon^{\alpha_0^{\min}-\alpha_N^{\max}})\leqslant (1-O(\varepsilon^{\alpha_N^{\max}}))(\bb{\mu}_N(t)-O(\varepsilon^{\alpha_0^{\min}-\alpha_N^{\max}})),
\end{equation}
which shows that
\begin{equation}
    \bb{\mu}_N(t+1)\leqslant (1-O(\varepsilon^{\alpha_N^{\max}}))^t + O(\varepsilon^{(\alpha_0^{\min}-\alpha_N^{\max})}).
\end{equation}
So in at most $O(\varepsilon^{-\alpha_N^{\max}}\log(\frac{1}{\varepsilon}))$,the probability mass in $N$ will drop to the order of $O(\varepsilon)$. Thus, for any $\epsilon>0$, there exists a constant $t_0=O(\varepsilon^{-\alpha_N^{\max}}\log(\frac{1}{\varepsilon}))$ such that for all $T>t_0$, all vectors $\bb{\mu}$ with the initial conditions satisfy
\begin{equation}
\Vert \bb{\mu}\prod\limits_{t=0}^{T-1}\bb{Q}_t(\varepsilon)-\bb{\mu}^*\Vert_1\leqslant \varepsilon.
\end{equation}

Let $\overline{\delta}$ be the minimum Dobrushin coefficient among all matrices in the family $\mathcal{P}(\varepsilon)=\{\bb{P}_1(\varepsilon),\cdots,\bb{P}_K(\varepsilon) \}$. Since every element of $\bb{P}(\varepsilon)$ is strictly positive, it can be verified that its Dobrushin coefficient is less than 1:
\begin{equation}
\begin{aligned}
    \delta(\bb{P}(\varepsilon))\;&=\;1-\min\limits_{i,j}\sum\limits_{s=1}^n \min(\bb{P}(\varepsilon)(i,s),\bb{P}(\varepsilon)(j,s))<1.
\end{aligned}
\end{equation}
Thus, according to \textbf{Lem.~\ref{contractofD}}, we have
\begin{equation}
\begin{aligned}
\Vert(\bb{\mu}^\prime-\bb{\mu})\prod\limits_{{t=0}}^{\tilde{T}-1}\bb{Q}_t(\varepsilon)\Vert_1&\leqslant \delta(\prod\limits_{{t=0}}^{\tilde{T}-1}\bb{Q}_t(\varepsilon))\Vert\bb{\mu}^\prime-\bb{\mu}\Vert_1\\&\leqslant \prod\limits_{t=0}^{\tilde{T}-1}\delta(\bb{Q}_t(\varepsilon)\Vert\bb{\mu}^\prime-\bb{\mu}\Vert_1\\
& \leqslant (\overline{\delta})^{\tilde{T}}\Vert\bb{\mu}^\prime-\bb{\mu}\Vert_1.\\
\end{aligned}
\end{equation}
Therefore, there exists a constant $t_1=O(\frac{\log(\frac{1}{\varepsilon})}{1-\overline{\delta}})$, such that for all $\tilde{T}>t_1$, for all $\mu^{\prime}\in \Delta_m$, 
\begin{equation}
    \Vert(\bb{\mu}^\prime-\bb{\mu})\prod\limits_{{t=0}}^{\tilde{T}-1}\bb{Q}_t(\varepsilon)\Vert_1\leqslant \varepsilon.
\end{equation}
Choose $t^*=\max \{t_0,t_1\}$, for all $\bb{\mu}^\prime\in\Delta_m$,
\begin{equation}
\begin{aligned}
    \Vert \bb{\mu}^\prime\prod\limits_{t=0}^{T-1}\bb{Q}_t(\varepsilon)- \bb{\mu}^*\Vert_1&= \Vert \bb{\mu}^\prime\prod\limits_{t=0}^{T-1}\bb{Q}_t(\varepsilon)- \bb{\mu}\prod\limits_{t=0}^{T-1}\bb{Q}_t(\varepsilon)+\bb{\mu}\prod\limits_{t=0}^{T-1}\bb{Q}_t(\varepsilon)-\bb{\mu}^*\Vert_1\\
    &\leqslant \Vert \bb{\mu}^\prime\prod\limits_{t=0}^{T-1}\bb{Q}_t(\varepsilon)- \bb{\mu}\prod\limits_{t=0}^{T-1}\bb{Q}_t(\varepsilon)\Vert_1+\Vert\bb{\mu}\prod\limits_{t=0}^{T-1}\bb{Q}_t(\varepsilon)-\bb{\mu}^*\Vert_1\\
    &\leqslant \varepsilon+\varepsilon = 2\varepsilon.
\end{aligned}
\end{equation}
Therefore, $\limsup\limits_{T\rightarrow\infty}\Vert\bb{\mu}^\prime\prod\limits_{t=0}^{T-1}\bb{Q}_t(\varepsilon)- \bb{\mu}^*\Vert_1\leqslant 2\varepsilon$, and we conclude that
\begin{equation}\lim_{\varepsilon\to 0} \left( \limsup_{T\to\infty} \Vert \bb{u} \prod_{t=0}^{T-1}\bb{Q}_{t}(\varepsilon) - \bb{u}^{*} \Vert_{1} \right) = 0, \quad (\forall u\in\Delta{m}).\end{equation}
\end{proof}
\begin{lemma}
    If the evolution of $z_t\geqslant \lceil \tau B\rceil$ satisfying \textbf{Eq.~\ref{darwanian}} (in the main text), and we calculate $\pi_A, \pi_B$ according to \textbf{Eq.~\ref{sgd-dy}} (in the main text), then we can find the vector $\bb{K}$, a realization of the multivariate hypergeometric distribution $\text{MHG}(N,z_t,B,\frac{N}{B})$ for which $\pi_A(z_t)>\pi_B(z_t)$.
\label{lemma-Realbasin}
\end{lemma}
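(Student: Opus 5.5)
The plan is to prove Lemma~\ref{lemma-Realbasin} by exhibiting one explicit realization of $\bb{K}$ that packs the core-strategy individuals into as few batches as possible. Every composition $(k_1,\dots,k_{N/B})$ with $0\le k_j\le B$ and $\sum_j k_j=z_t$ has strictly positive probability under $\text{MHG}(N,z_t,B,\frac{N}{B})$, so exhibiting one such composition suffices.

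\textbf{Setup.} Write $z_t=mB+r$ with $m=\lfloor z_t/B\rfloor$ and $0\le r<B$. Choose $k_1=\cdots=k_m=B$, and $k_{m+1}=r$ if $r>0$; all remaining batches get $k_j=0$. The case $z_t=N$ is the absorbing state and is excluded, since then $\pi_B$ has an empty denominator. So at least one batch contains shortcut individuals.

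\textbf{Step 1: rewrite the payoffs.} Using the notation $a,b,c,d$ of the proof of Thm.~\ref{the-sgd}, with $b=-\gamma a$ and $c=\gamma d$, Eq.~(\ref{sgd-dy}) reads
\[
\pi_A=\frac{\sum_{k_j\ne0}\bigl[a\tfrac{k_j}{B}+b\tfrac{B-k_j}{B}\bigr]}{\#\{j:k_j\ne0\}},\qquad
\pi_B=\frac{\sum_{k_j\ne B}\bigl[c\tfrac{k_j}{B}+d\tfrac{B-k_j}{B}\bigr]}{\#\{j:k_j\ne B\}}.
\]
For the chosen realization:
\begin{itemize}
\item The full batches contribute $a$ each to $\pi_A$.
\item The partial batch, if present, contributes a convex combination of $a$ and $b$, hence at least $b$, since $b\le 0<a$.
\item Every batch entering $\pi_B$ contributes a convex combination of $c$ and $d$.
\end{itemize}

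\textbf{Step 2: bound the two payoffs.} The hypotheses $w_1,w_2\ge0$ and $w_1+w_2\le1$ give $d=1-w_2-\gamma w_1\ge0$, so $c=\gamma d\le d$ and therefore $\pi_B\le d$. On the other side, $\pi_A\ge \frac{ma+b}{m+1}$ when $r>0$, and $\pi_A=a$ when $r=0$. In the case $r=0$ the claim is immediate from $a>d$, which was established in the proof of Thm.~\ref{the-sgd} under the condition $(1+\gamma)w_2>(1-\gamma)w_1$.

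\textbf{Step 3: compare for $r>0$.} It suffices to show $ma+b>(m+1)d$, that is,
\[
m>\frac{d-b}{a-d}=\tau-1,\qquad \tau=\frac{a-b}{a-d}.
\]
Since $z_t\ge\lceil\tau B\rceil\ge\tau B$, we get $m=\lfloor z_t/B\rfloor\ge\lfloor\tau\rfloor>\tau-1$. This gives the strict inequality $\pi_A>\pi_B$.

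\textbf{Main obstacle.} The delicate point is the sign bookkeeping. First, $\pi_B\le d$ depends on $c\le d$, which needs $d\ge0$ and hence the constraint $w_1+w_2\le 1$. Second, the lower bound on $\pi_A$ needs $b\le0$, that is, $a\ge0$. I would check that $a=1+\gamma w_2-w_1\ge0$ holds under the theorem's hypotheses; it does, since $w_1\le1$. Third, the floor argument must give a strict inequality even when $\tau$ is an integer. It does, because $\lfloor\tau\rfloor>\tau-1$ always holds.
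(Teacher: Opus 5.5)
Your proof is correct and follows the paper's route. You use the same realization, packing the core players into $\lceil z_t/B\rceil$ batches; the same lower bound $\pi_A\ge a-\frac{a-b}{\lceil z_t/B\rceil}$; and the same comparison with $\pi_B\le d$ through $\tau=\frac{a-b}{a-d}$. Your bookkeeping is tighter than the paper's, whose chain only ends in $\pi_A\ge d\ge\pi_B$. Treating $r=0$ separately (where $\pi_A=a>d$) and using $\lfloor z_t/B\rfloor>\tau-1$ for $r>0$ gives the strict inequality the lemma actually asserts. You also justify $\pi_B\le d$ via $d\ge 0$ and $c=\gamma d$, and you exclude the undefined case $z_t=N$; the paper leaves both implicit.
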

\begin{proof}
    The core idea of the proof is to make the individuals adopting the core strategy $(A)$ as concentrated as possible. Accordingly, a realization of $K$ is as follows:
    \begin{equation}
        (B,B,\cdots,B,z_t+B-\lceil \frac{z_t}{B}\rceil B,0,\cdots, 0),
    \end{equation}
    whose probability is 
    \begin{equation}
    \begin{aligned}
        &~~\quad P(k_1 = B,k_2=B,\cdots,k_{\lceil \frac{z_t}{B}\rceil-1}=B,k_{\lceil\frac{z_t}{B}\rceil} = z_t+B-\lceil\frac{z_t}{B}\rceil,0,\cdots,0)\\&=\frac{\binom{B}{B}\binom{B}{B}\cdots\binom{B}{z_t+B-\lceil\frac{z_t}{B}\rceil B}\binom{B}{0}\cdots\binom{B}{0}}{\binom{n}{z_t}}\neq 0
    \end{aligned}
    \end{equation}
In this way, we have concentrated the $z_t$ samples into $\lceil \frac{z_t}{B}\rceil$ groups, and 
\begin{equation}
\pi_A(z_t)=\frac{\lceil \frac{z_t}{B}\rceil-1}{\lceil \frac{z_t}{B}\rceil} a+\frac{1}{\lceil \frac{z_t}{B}\rceil}b.
\end{equation}
If $z_t\geqslant \lceil \tau B\rceil$, then \begin{equation}
\begin{aligned}\pi_A(z_t)&\geqslant \frac{\tau-1}{\tau}a+\frac{1} {\tau}b\\
&=\frac{(\frac{a-b}{a-d}-1)a}{\frac{a-b}{a-d}} + \frac{a-d}{a-b}b\\
&=\frac{(d-b)a}{a-d}+\frac{(a-d)b}{a-b}\\
&\geqslant \frac{(d-b)a}{a-b}+\frac{(a-d)b}{a-b}\\
&= d\geqslant \pi_B(z_t)
\end{aligned}\end{equation}
\end{proof}
\begin{lemma}
    If we calculate $\pi_A, \pi_B$ according to \textbf{Eq.~\ref{sgd-dy}} (in the main text), then for any given $L$, there exists

    \begin{equation}
\tilde{N} = \frac{L(a-b)-B\lfloor\frac{L}{B}\rfloor d}{a-d},
    \end{equation}
such that when $N \geqslant \tilde{N}$, it holds that
    \begin{equation}
    \pi_A(z_t) > \pi_B(z_t), \quad \forall z_t \in \{N - L, N - L + 1, \cdots, N - 1\}.
    \end{equation}
\end{lemma}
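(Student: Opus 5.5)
The plan is to fix $z_t=N-\ell$ with $1\le \ell\le L$ shortcut individuals and to bound $\pi_A(z_t)$ from below and $\pi_B(z_t)$ from above \emph{uniformly over every realization} $\bb{K}$ of $\text{MHG}(N,z_t,B,\frac{N}{B})$. The two bounds are chosen so that their comparison reduces exactly to $N(a-d)>L(a-b)-B\lfloor L/B\rfloor d$, which is the condition $N\geqslant\tilde N$. Throughout I write $a,b,c,d$ as in the proof of \textbf{Thm.~\ref{the-sgd}}, so that $b=-\gamma a$ and $c=\gamma d$. I will use $a>d>0$, which follows from the hypotheses of \textbf{Thm.~\ref{the-sgd}}. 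In each batch I set $s_j:=B-k_j$, so that $\sum_j s_j=\ell$.

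First I rewrite the batch payoffs in Eq.~(\ref{sgd-dy}) linearly. For a batch with $k_j\neq 0$ the core payoff is $a-\frac{s_j}{B}(a-b)$. For a batch with $k_j\neq B$ the shortcut payoff is $d-\frac{k_j}{B}(d-c)\le d$.

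For the lower bound on $\pi_A$, I clear the normalization by working with aggregates weighted by batch size. The sum of $B\bigl[a-\frac{s_j}{B}(a-b)\bigr]$ over the $N/B$ batches equals $Na-\ell(a-b)$. The batches with $k_j=0$ would each contribute $Bb<Ba$ to this sum, so excluding them from both the numerator and the count only raises the average. This gives $N\pi_A\geqslant Na-\ell(a-b)$.

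For the upper bound on $\pi_B$, at most $\lfloor \ell/B\rfloor$ batches can be entirely shortcut. Every remaining shortcut-containing batch has payoff at most $d$. I therefore aim for the aggregate bound $N\pi_B\le d\,(N-B\lfloor \ell/B\rfloor)$. This amounts to charging a full $d$ only to the mass that is not locked in pure-shortcut batches, and letting the pure batches be absorbed by the normalization $\sum_j \operatorname{I}(k_j\neq B)$.

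With these two bounds, $\pi_A>\pi_B$ follows from $Na-\ell(a-b)>dN-dB\lfloor \ell/B\rfloor$, i.e. $N>\frac{\ell(a-b)-B\lfloor \ell/B\rfloor d}{a-d}$. The map $\ell\mapsto \ell(a-b)-B\lfloor \ell/B\rfloor d$ is nondecreasing: it increases by $a-b>0$ at each unit step, and at multiples of $B$ the net jump is $B(a-b)-Bd\geqslant 0$, because $a-b=(1+\gamma)a>d$. Hence the worst case over $\ell\in\{1,\dots,L\}$ is $\ell=L$, which gives exactly $\tilde N$. At the boundary $N=\tilde N$, I will use the strict inequality $c<d$ (equivalently, the strict inequality in the $\pi_A$ bound whenever some batch has $k_j=0$) to upgrade the conclusion to a strict one.

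The main obstacle is the $\pi_B$ step. The normalization in Eq.~(\ref{sgd-dy}) divides by the number of mixed or shortcut batches, not by $N/B$. So converting the per-batch bound $\le d$ into the aggregate $d(N-B\lfloor \ell/B\rfloor)/N$ requires a careful reading of how the pure-shortcut batches enter both the numerator and the denominator. I would first try to justify it by treating each batch's contribution as weighted by its size $B$, mirroring the $\pi_A$ computation. If that reading fails for some configurations, I would fall back on the concentrated worst-case realization used in \textbf{Lem.~\ref{lemma-Realbasin}} (shortcut individuals packed into $\lfloor \ell/B\rfloor$ full batches plus one remainder batch) and verify the inequality directly for that extremal $\bb{K}$.
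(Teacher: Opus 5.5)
The step you flagged as the main obstacle does not just need a careful reading; it is false. In Eq.~(\ref{sgd-dy}), $\pi_B$ is the unweighted mean of $d-\frac{k_j}{B}(d-c)$ over the batches with $k_j\neq B$. A pure-shortcut batch ($k_j=0$) enters with value exactly $d$ and weight $1$ in the count. It is therefore not absorbed by the normalization; it is what pushes $\pi_B$ up. If $B\mid\ell$ and the $\ell$ shortcut players fill $\ell/B$ pure batches, then $\pi_B=d$, which violates $N\pi_B\leqslant d(N-B\lfloor \ell/B\rfloor)$. Reweighting by batch size changes nothing, since all batches have the same size $B$. Your fallback does not help either. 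In that concentrated realization $\pi_A=a$, so it is not the worst case. The dangerous realizations are the mixed ones with no pure-shortcut batch: there $\pi_A$ equals your lower bound $a-\frac{\ell}{N}(a-b)$, while $\pi_B$ can still exceed $d\bigl(1-B\lfloor \ell/B\rfloor/N\bigr)$. Separately, your monotonicity claim is wrong. Passing from $\ell=kB-1$ to $\ell=kB$ changes $\ell(a-b)-B\lfloor \ell/B\rfloor d$ by $(a-b)-Bd$, not $B(a-b)-Bd$, and this can be negative. So the reduction to $\ell=L$ is not justified.

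More fundamentally, no argument can close this gap, because the lemma with this particular $\tilde N$ is false. Take $\gamma=\tfrac12$, $w_1=0$, $w_2=\tfrac12$, which satisfy the assumptions on $w_1,w_2,\gamma$ in Thm.~\ref{the-sgd}. Then $a=\tfrac54$, $b=-\tfrac58$, $c=\tfrac14$, $d=\tfrac12$. With $B=L=4$ one gets $\tilde N=\tfrac{22}{3}$; note also $(a-b)-Bd=-\tfrac18<0$ here. For $N=8\geqslant\tilde N$ and $z_t=4=N-L$, the realizations $\bb{K}=(1,3),(2,2),(3,1)$ all give $\pi_A=\tfrac{a+b}{2}=\tfrac{5}{16}<\tfrac38=\tfrac{c+d}{2}=\pi_B$. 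The paper's own proof produces the $-B\lfloor L/B\rfloor d$ term by bounding the number of batches with $k_j\neq0$ above by $\frac{N}{B}-\lfloor \frac{L}{B}\rfloor$. That is the wrong direction, since the number of pure-shortcut batches is at most, not at least, $\lfloor \ell/B\rfloor$. So the paper's argument does not supply the missing step either. What you can prove is a corrected statement. Your bound $\pi_A\geqslant a-\frac{\ell}{N}(a-b)$ is right. Combined with the trivial $\pi_B\leqslant d$ and the monotonicity of $\ell\mapsto \ell(a-b)$, it gives $\pi_A>\pi_B$ for every realization and every $z_t\in\{N-L,\dots,N-1\}$ once $N>\frac{L(a-b)}{a-d}$. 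A finite threshold of this kind is all the proof of Thm.~\ref{the-sgd} actually uses.
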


\begin{proof}
According to \textbf{Eq.~\ref{sgd-dy} (in the main text)},
\begin{equation}
    \pi_A(z_t)=\frac{\sum\limits_{j=1}^{\frac{N}{B}}\operatorname{I}(k_j \neq 0)[\frac{k_j}{ B}a+\frac{ B-k_j}{B}b]}{\sum\limits_{j=1}^{\frac{N}{B}}\operatorname{I}(k_j\neq0)}.
\end{equation}
Because
\begin{equation}
\begin{aligned}
    \text{Numerator~of~}\pi_A(z_t) &= \sum\limits_{j=1}^{\frac{N}{B}}\operatorname{I}(k_j \neq 0)[\frac{k_j}{ B}a+\frac{ B-k_j}{B}b]\\
    & = \sum\limits_{j=1}^{\frac{N}{B}} \frac{k_j}{B}a+\sum\limits_{j=1}^{\frac{N}{B}}\operatorname{I}(k_j\neq 0)\frac{B-k_j}{B} b\\
    &\geqslant \frac{(N-L)}{B} a+\sum_{j=1}^{\frac{N}{B}}(B-k_j)b-\sum\limits_{j=1}^{\frac{N}{B}}I(k_j=0)b\\
    &\geqslant \frac{(N-L)}{B} a+\frac{Lb}{B},
\end{aligned}
\end{equation}
and
\begin{equation}
\begin{aligned}
    \text{Dominator~of~}\pi_A(z_t) &= \sum\limits_{j=1}^{\frac{N}{B}}\operatorname{I}(k_j\neq 0)=\frac{N}{B}-\sum\limits_{j=1}^{\frac{N}{B}}\operatorname{I}(k_j=0)\leqslant\frac{N}{B}-\lfloor \frac{L}{B}\rfloor,
\end{aligned}
\end{equation}
we get
\begin{equation}
    \pi_A(z_t)\geqslant \frac{ \frac{(N-L)}{B} a+\frac{Lb}{B}}{\frac{N}{B}-\lfloor \frac{L}{B}\rfloor}.
\end{equation}
We can also show that 
\begin{equation}
\begin{aligned}
    \pi_B(z_t)&=\frac{\sum\limits_{k=1}^{\frac{N}{B}}\operatorname{I}(k_j\neq B)[\frac{k_j}{B}c+\frac{B-k_j}{B}d]}{\sum\limits_{k=1}^{\frac{N}{B}}\operatorname{I}(k_j\neq B)}\\
    &\leqslant \frac{\sum\limits_{k=1}^{\frac{N}{B}}\operatorname{I}(k_j\neq B) d}{\sum\limits_{k=1}^{\frac{N}{B}}\operatorname{I}(k_j\neq B)}=d.
\end{aligned}
\end{equation}
\end{proof}
Therefore, if $\pi_A(z_t)\geqslant \pi_B(z_t)$, we can utilize
\begin{equation}
    \frac{ \frac{(N-L)}{B} a+\frac{Lb}{B}}{\frac{N}{B}-\lfloor \frac{L}{B}\rfloor}\geqslant d.
\end{equation}
which means that
\begin{equation}
    N\geqslant \frac{L(a-b)-B\lfloor\frac{L}{B}\rfloor d}{a-d}:=\tilde{N{}}
\end{equation}
\section*{Detailed Calculation of Shortcut Bias}

In \textbf{Def.~\ref{defi_shortcut_feature}} of the paper, we introduced a pairwise comparison–based formulation of shortcut features. However, this formulation is not well-suited for empirical computation. In fact, if the study involves only two target features—namely, a shortcut feature ($V^\alpha$) and a core feature ($V^\beta$)—then there exists a simpler alternative way to compute shortcut bias. Specifically, one can select all samples where the shortcut feature and the core feature are in conflict. On this set, we evaluate the model’s classification accuracy, denoted as $\operatorname{Acc}$. Then $1 - \operatorname{Acc}$ can be regarded as another characterization of the shortcut bias. It is an empirical approximation of the following quantity. 
\begin{equation}
    \mathbb{E}_{\bb{X}}[\delta_{f(\bb{X}),V^\alpha(\bb{X})}\mid V^\alpha(\bb{X})\neq V^\beta(\bb{X})]
\end{equation}Our next theorem will show that, under certain conditions, this conventional computation method is numerically equivalent to the formulation we introduced.

\begin{theorem}
For a binary classification problem with $Y\in\{0,1\}$, consider two features, where $V^\alpha$ is the shortcut feature and $V^\beta$ is the core feature. Suppose the labels induced by $V^\alpha$ are $Y^\alpha$, and those induced by $V^\beta$ as $Y^\beta$
and 
\begin{equation}
\mathbb{E}\!\left[\delta_{f(X),\,Y(X)} \mid Y^\alpha(X)=Y^\beta(X)\right]=1.
\end{equation}
Assume the conflict set is $C=\{X \mid Y^{\alpha}(X)\neq Y^{\beta}(X)\}$. Suppose the conditions of class balance and conflict-set class balance are satisfied:
\begin{equation}
    P(Y=0)=P(Y=1)=\frac{1}{2},~P(Y=0\mid C)=P(Y=1\mid C)=\frac{1}{2},
\end{equation}
we have
\begin{equation}
\mathbb{E}\!\left[\delta_{f(X),\,Y^\alpha(X)} \mid Y^\alpha(X)\neq Y^\beta(X)\right]
= \frac{S_\beta(f)-S_\alpha(f)}{2}.
\end{equation}
\end{theorem}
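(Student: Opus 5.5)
The plan is to compute $S_\alpha(f)$ and $S_\beta(f)$ in closed form, both in terms of the single quantity
\[
p:=\mathbb{E}\!\left[\delta_{f(X),Y^\alpha(X)}\mid X\in C\right].
\]
I will show $S_\alpha(f)=-p$ and $S_\beta(f)=p$, which gives the claim immediately. Throughout I read $(X_i,X_j)\sim\Omega_k$ as an i.i.d.\ pair from the data distribution conditioned on the pair lying in $\Omega_k$. Since $V^\beta$ is the core feature, $Y^\beta=Y$ and $\delta_{V^\beta(X_i),V^\beta(X_j)}=\delta_{Y(X_i),Y(X_j)}$ by Definition~\ref{defi_core_feature}. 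Similarly, $V^\alpha$ being task-relevant and binary means $\delta_{V^\alpha(X_i),V^\alpha(X_j)}=\delta_{Y^\alpha(X_i),Y^\alpha(X_j)}$.

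The first step is a structural observation about the two conflict sets. The labels are binary and $X\in C$ iff $Y^\alpha(X)\neq Y(X)$. Hence for any pair, $Y^\alpha(X_i)=Y^\alpha(X_j)$ and $Y(X_i)=Y(X_j)$ have the same truth value exactly when $X_i,X_j$ are both in $C$ or both outside $C$. Consequently every pair in $\Omega_\alpha$ (differing $Y^\alpha$, equal $Y$) and every pair in $\Omega_\beta$ (equal $Y^\alpha$, differing $Y$) contains \emph{exactly one} conflicting sample. The hypothesis $\mathbb{E}[\delta_{f,Y}\mid Y^\alpha=Y^\beta]=1$ then pins down $f$ on the non-conflicting member: $f=Y=Y^\alpha$ there almost surely.

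The second step evaluates the two expectations case by case. On $\Omega_\alpha$ we have $\delta_{Y(X_i),Y(X_j)}=1$. The clean member has $f=Y$, so $\delta_{f(X_i),f(X_j)}=1$ iff the conflicting member $X'$ satisfies $f(X')=Y(X')$. Because the labels are binary and $Y(X')\neq Y^\alpha(X')$, this happens iff $f(X')\neq Y^\alpha(X')$. Hence $S_\alpha(f)=\Pr(f(X')\neq Y^\alpha(X'))-1=-\Pr(f(X')=Y^\alpha(X'))$. On $\Omega_\beta$ we have $\delta_{Y(X_i),Y(X_j)}=0$. The clean member has $f=Y^\alpha$, and that value equals $Y^\alpha(X')$, so $\delta_{f(X_i),f(X_j)}=1$ iff $f(X')=Y^\alpha(X')$. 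Hence $S_\beta(f)=\Pr(f(X')=Y^\alpha(X'))$.

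The main obstacle is the third step: showing that in both cases the law of the conflicting member $X'$, induced by conditioning the product measure on $\Omega_\alpha$ or $\Omega_\beta$, is the law of $X\mid C$, so that both probabilities above equal $p$. Conditioned on $X'\in C$ with label $y$, a pair in $\Omega_\alpha$ needs a clean partner of label $y$, while a pair in $\Omega_\beta$ needs a clean partner of label $1-y$. The weight attached to conflicting samples of class $y$ is therefore $P(C,Y=y)\,P(\bar C,Y=y)$ in the first case and $P(C,Y=y)\,P(\bar C,Y=1-y)$ in the second. I will use the two balance assumptions to show these weights are proportional to $P(C,Y=y)$ with a class-independent constant. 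From $P(Y=y)=\tfrac12$ and $P(Y=y\mid C)=\tfrac12$ one gets $P(\bar C,Y=y)=\tfrac12-\tfrac12P(C)$, which does not depend on $y$. Both reweightings are thus trivial, and the conflicting member is distributed as $X\mid C$ in each case, with the symmetric roles of $i,j$ handled by the factor $2$ cancelling. Hence $S_\alpha(f)=-p$ and $S_\beta(f)=p$, so $\frac{S_\beta(f)-S_\alpha(f)}{2}=p$, as claimed.
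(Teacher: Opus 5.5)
Your proof is correct and uses the same mechanism as the paper's: pair a conflicting sample with a clean partner on which $f=Y=Y^\alpha$, so that $\delta_{f(X_i),f(X_j)}$ only records whether the conflicting member follows $Y^\alpha$. The organisation differs, and yours is the more careful one.

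The paper goes the other way. It starts from the left-hand side, conditions on an independent clean $X'$, splits on whether $Y(X)=Y(X')$ with weights $\tfrac12$, and recognises the two pieces as $-S_\alpha(f)$ and $S_\beta(f)$. Along the way it drops the condition $Y^\alpha(X')=Y^\beta(X')$. It then writes $\mathbb{E}[\delta_{f(X),Y^\alpha(X)}\mid (X,X')\in\Omega_\alpha]$, which is not literally the previous quantity, since on $\Omega_\alpha$ either member may be the conflicting one. Your observation that every pair in $\Omega_\alpha\cup\Omega_\beta$ has exactly one conflicting member, combined with the symmetric integrand $\delta_{f(X_i),f(X_j)}-\delta_{Y(X_i),Y(X_j)}$, is what actually justifies that passage.

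The two routes also use the hypotheses differently. You use both balance conditions to make $P(\bar C,Y=y)$ independent of $y$. This gives the stronger conclusion that $S_\alpha(f)=-p$ and $S_\beta(f)=p$ individually. In the paper's mixture, done carefully, the $\Omega_\alpha$-half weights class-$y$ conflicting samples by $P(Y=y\mid\bar C)$ and the $\Omega_\beta$-half by $P(Y=1-y\mid\bar C)$. These average to $\tfrac12$, so the identity $\tfrac12\bigl(S_\beta(f)-S_\alpha(f)\bigr)=p$ itself needs only $P(Y=y\mid C)=\tfrac12$, not overall class balance.

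In both versions you should state the tacit assumption $0<P(C)<1$; otherwise the conditionings on $C$ and on $\Omega_\alpha,\Omega_\beta$ are undefined.
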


\begin{proof}
We compute step by step as follows:
\begin{equation}
\begin{aligned}
&\mathbb{E}\!\left[\delta_{f(X), Y^\alpha(X)}\mid Y^\alpha(X)\neq Y^\beta(X)\right] \\
=&\mathbb{E}[\delta_{f(X),Y^\alpha(X)}\mid Y^\alpha(X)\neq Y^\beta(X), Y^\alpha(X^\prime)=Y^\beta(X^\prime)]\\
=& \tfrac{1}{2}\Big\{\mathbb{E}\!\left[\delta_{f(X),Y^\alpha(X)}\mid Y^\alpha(X)\neq Y^\alpha(X'),\ Y^\beta(X')=Y^\beta (X),\ Y^\beta(X')=Y^\alpha(X')\right]\\
&\quad +\mathbb{E}\!\left[\delta_{f(X),Y^\alpha(X)}\mid Y^\alpha(X)= Y^\alpha(X'),\ Y^\beta(X')\neq Y^\beta (X),\ Y^\beta(X')=Y^\alpha(X')\right]\Big\}\\
=& \tfrac{1}{2}\Big\{\mathbb{E}\!\left[\delta_{f(X),Y^\alpha(X)}\mid Y^\alpha(X)\neq Y^\alpha(X'),\ Y^\beta(X')=Y^\beta (X)\right]\\
&\quad +\mathbb{E}\!\left[\delta_{f(X),Y^\alpha(X)}\mid Y^\alpha(X)= Y^\alpha(X'),\ Y^\beta(X')\neq Y^\beta (X)\right]\Big\}\\
=& \tfrac{1}{2}\Big\{-\mathbb{E}\!\left[\delta_{f(X),f(X')}-\delta_{Y(X),Y(X')}\mid V^\alpha(X)\neq V^\alpha(X'),\ V^\beta(X')=V^\beta (X)\right]\\
&\quad +\mathbb{E}\!\left[\delta_{f(X),f(X')}-\delta_{Y(X),Y(X')}\mid Y^\alpha(X)= Y^\alpha(X'),\ Y^\beta(X')\neq Y^\beta (X)\right]\Big\}\\
=& \tfrac{1}{2}\big(S_\beta(f)-S_\alpha(f)\big).
\end{aligned}
\end{equation}
This completes the proof.
\end{proof}
Therefore, in the experiment described in \textbf{Fig.~\ref{Fig_1}}, shortcut bias is computed on the conflict set according to $$\sum\limits_{i} \frac{\delta_{f(\bb{X}_i), Y(\bb{X}_i)}}{\#\{V^\alpha(\bb{X}_i)\neq V^\beta(\bb{X}_i)\}},$$ which is an approximation of $\mathbb{E}\!\left[\delta_{f(X),\,V^\alpha(X)} \mid V^\alpha(X)\neq V^\beta(X)\right]$. The feature scores of shortcut and core features measure the degree of alignment between each type of feature and the label, whereas the noise feature in fact reflects the approximation error between the two kinds of shortcut biases, i.e., the model’s classification error rate in cases where the shortcut and core features are consistent.

\vskip 0.2in
\bibliography{sample.bib}

\end{document}